\documentclass{article}

\usepackage{iclr2026_conference,times}
\usepackage[title]{appendix}

\usepackage{algpseudocode}

\usepackage{url}            %
\usepackage{multirow}
\usepackage{amsthm}       

\usepackage{hhline}
\usepackage{graphicx}
\usepackage{booktabs} 

\usepackage{hyperref}       

\usepackage{enumitem,amsmath,amsthm}       

\usepackage{appendix}         
\usepackage{float}          
\usepackage{graphicx}
\usepackage{subcaption}
\usepackage[linesnumbered,ruled,vlined]{algorithm2e}
\usepackage{natbib}

\usepackage{xcolor}
\newcommand{\revise}[1]{\textcolor{black}{#1}}

\newtheorem{proposition}{Proposition}

\newtheorem{theorem}{Theorem}

\newtheorem{lemma}{Lemma}

\newtheorem{assumption}{Assumption}

\theoremstyle{plain}
\theoremstyle{definition}
 
\usepackage{amsmath,amsfonts,bm}

\def\eqref#1{equation~\ref{#1}}

\def\1{\bm{1}}

\DeclareMathAlphabet{\mathsfit}{\encodingdefault}{\sfdefault}{m}{sl}
\SetMathAlphabet{\mathsfit}{bold}{\encodingdefault}{\sfdefault}{bx}{n}

\title{DAK-UCB: Diversity-Aware Prompt Routing for LLMs and Generative Models}

\author{Donya Jafari \\
Sharif University of Technology \\
\texttt{donya.jafari111@sharif.edu}
\And
Farzan Farnia \\
The Chinese University of Hong Kong \\
\texttt{farnia@cse.cuhk.edu.hk}
}

\date{}
\iclrfinalcopy
\begin{document}
\maketitle

\begin{abstract}

    The expansion of generative AI and LLM services underscores the growing need for adaptive mechanisms to select an appropriate available model to respond to a user's prompts. Recent works have proposed offline and online learning formulations to identify the optimal generative AI model for an input prompt, based solely on maximizing prompt-based fidelity evaluation scores, e.g., CLIP-Score in text-to-image generation. However, such fidelity-based selection methods overlook the diversity of generated outputs, and hence, they can fail to address potential diversity shortcomings in the generated responses. In this paper, we introduce the \emph{Diversity-Aware Kernelized Upper Confidence Bound (DAK-UCB)} method as a contextual bandit algorithm for the online selection of generative models with diversity considerations. The proposed DAK-UCB method incorporates both fidelity and diversity-related metrics into the selection process. We design this framework based on prompt-aware diversity score functions that decompose to a two-sample-based expectation over prompt-output pairs in the previous generation rounds. Specifically, we illustrate the application of our framework using joint kernel distance and kernel entropy measures. Our experimental results demonstrate the effectiveness of DAK-UCB in promoting diversity-aware model selection while maintaining fidelity in the generations for a sequence of prompts. The code is available at \url{https://github.com/Donya-Jafari/DAK-UCB}.
\end{abstract}

\section{Introduction}
\label{sec:intro}

The past few years have witnessed a rapid surge in generative AI services capable of addressing a wide array of tasks, ranging from large language models (LLMs) answering arbitrary questions to text-to-image and video models generating visual content guided by user prompts. Given the growing number of available generative models, a key challenge is how to effectively select suitable generative models for a sequence of user-provided prompts. A conventional approach is to compute an overall evaluation score for each candidate generative AI model and subsequently select the model with the highest aggregate score to address all future prompts. However, this approach implicitly assumes that a single model consistently outperforms the other models across all possible prompts. This assumption has been demonstrated to be untrue in realistic scenarios where different models may excel on different topics or prompt categories \citep{hu2025online,frick2025prompt}.

To address this limitation, recent literature has introduced prompt-aware model selection mechanisms. These methods include offline learning algorithms \citep{qin2024diffusiongpt,chen2024frugalgpt,frick2025prompt}, which train a selector model using a batch of pre-collected responses of models to prompts as training data. Also, \cite{hu2025online} propose the online learning PAK-UCB method, formulating the selection task as a contextual multi-armed bandit problem to utilize the user's observed model performances in the previous rounds. 


\begin{figure}[t]
    \centering
    \includegraphics[width=0.99\textwidth]{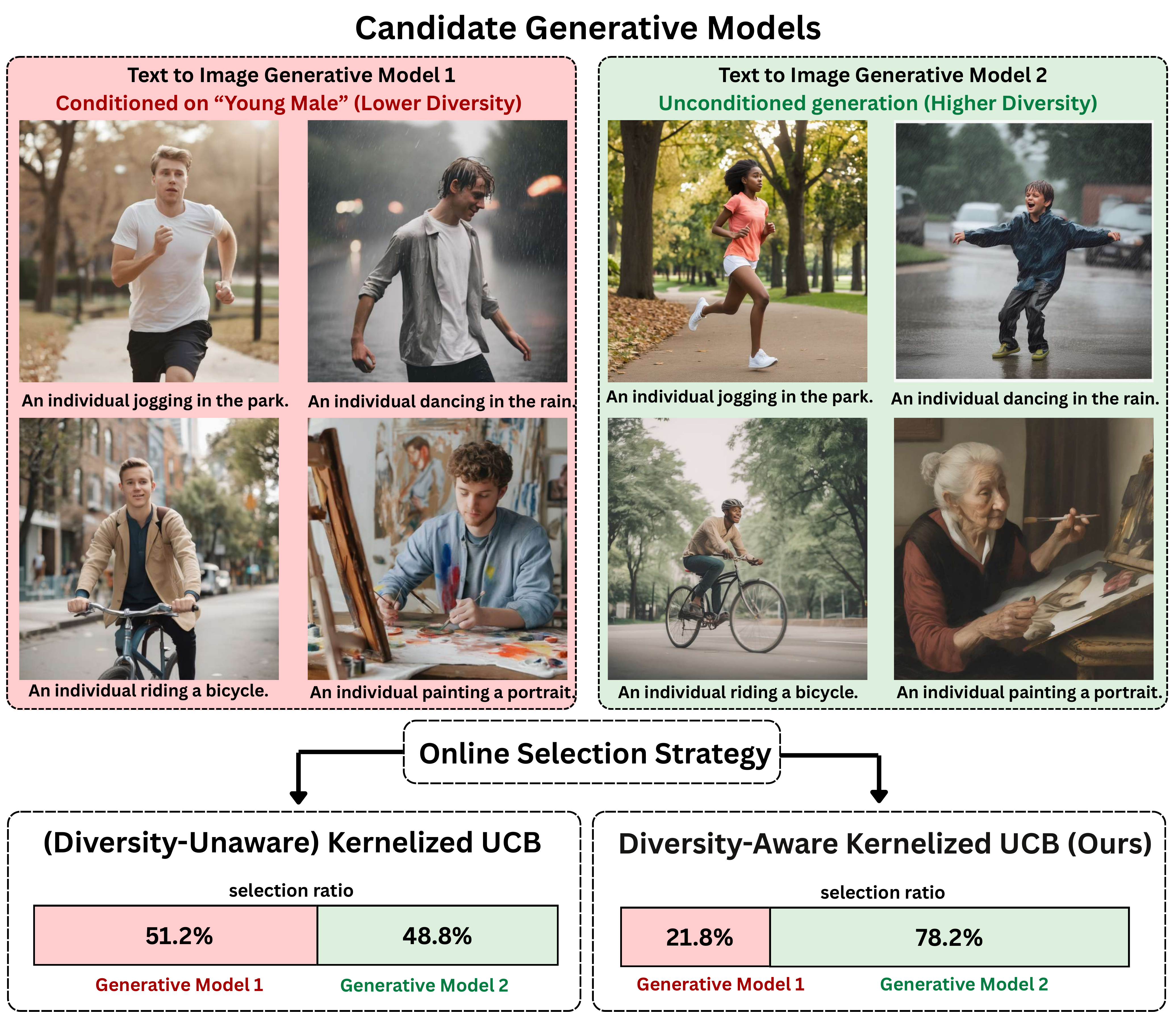}
    \caption{Comparison of baseline Kernelized-UCB model selection (CLIP-Score fidelity metric) \citep{hu2025online} vs. our proposed diversity-aware DAK-UCB 
    over $T=500$ rounds. While the baseline Kernelized-UCB does not favor model $G_2$ with higher diversity over model $G_1$, DAK-UCB selected the more diverse $G_2$ more frequently. \vspace{-6mm}
    }
    \label{fig:intro}
\end{figure}

Despite the development of several model selection approaches, the existing methods focus only on the fidelity scores in data generation, while overlooking the diversity of generated samples. For example, in text-to-image generation tasks, existing frameworks evaluate models based on the alignment of the input prompt and generated image, without considering diversity in the image outputs. Such a diversity-unaware selection can lead to output samples that, although individually aligned well with prompts, collectively lack diversity. In addition, overlooking the diversity of output data can potentially lead to a restricted representation of sensitive attributes, such as gender or ethnicity, in generated datasets. Figure~\ref{fig:intro} displays an example, where the diversity-unaware baseline kernelized UCB selection algorithm in \citep{hu2025online} chooses the less and more diverse generative models (conditioned to "young male" and none) with similar frequencies, ignoring the diversity factor in the selection process. \revise{This limitation arises because PAK-UCB, and standard contextual bandit methods more broadly, compute rewards using the \textit{mean} of sample-level scores. Diversity, however, is a \textit{group-level} property determined by the relative positioning of multiple samples, which cannot be expressed through a simple average of individual rewards.}

These limitations highlight the importance of diversity-aware selection methods, which explicitly incorporate considerations of output diversity into the model selection process. In this work, we specifically focus on the online selection task, proposing an algorithm designed to leverage previously generated data to select generative models that achieve an optimal balance between fidelity and diversity. Our proposed method, which we call \emph{Diversity-Aware Kernelized Upper Confidence Bound (DAK-UCB)}, extends the kernelized UCB framework \citep{valko2013finitetimeanalysiskernelisedcontextual,hu2025online} by integrating a diversity-oriented term, in the form of the expectation of a two-sample (prompt,output) random variable, into the contextual bandit objective.

A key challenge in designing DAK-UCB is to determine which diversity scores are compatible with the contextual bandit selection framework. Specifically, we identify a family of joint kernel scores—including prompt-conditional extensions of kernel distance \citep{bińkowski2021demystifyingmmdgans}, RKE \citep{jalali2023information}, and MMD \citep{gretton2012kernel}—that can be expressed as expectations of two-sample quadratic forms over prompts and outputs. This structure is central to DAK-UCB: it enables fast-converging estimation from streaming data via kernel ridge regression, yielding principled confidence bounds in the UCB process. Moreover, by combining joint kernel scores with task-specific fidelity metrics (e.g., CLIPScore in text-to-image generation), DAK-UCB can be extended to provide a unified approach to prompt-adaptive selection that balances the fidelity and diversity factors.


Figure~\ref{fig:intro} shows an application of DAK-UCB for a diversity-ware generative model selection in response to $T=500$ prompts of MS-COCO dataset \citep{lin2014microsoft} on generating human-related scenes. The candidate model $G_1$ represents the Stable Diffusion XL (SD-XL) \citep{stabilityai2023sdxl} model conditioned to "young male individual", whereas candidate model $G_2$ outputs the SD-XL outputs without any conditioning. While the baseline kernelized UCB fidelity-based selection did not favor the more diverse model $G_2$, generating samples from both models with equal probabilities, the DAK-UCB model selection with the joint RKE diversity score chose model $G_1$ more often over the 500 online selection iterations.   

Beyond deterministic prompt-to-model assignment at every iteration of DAK-UCB, DAK-UCB can be adapted to assign the model to an input prompt based on a non-degenerate mixture of the models. \revise{As noted by \cite{rezaei2024be} in the unconditional setting, the optimal diversity-aware selection strategy can itself be a non-degenerate mixture of models. Extending this insight to the conditional, prompt-aware setting, a mixture-based selector effectively rolls a biased $m$-sided die to determine which of the $m$ models is queried for a given prompt. We introduce the \emph{Mixture-DAK-UCB} method to realize this idea: an online algorithm that optimizes prompt-dependent mixture probabilities. Mixture-DAK-UCB generalizes the prompt-free Mixture-UCB framework of \cite{rezaei2024be} to the conditional case, enabling diversity-enhancing mixtures tailored to incoming prompts and yielding further improvements in diversity metrics.}

We empirically evaluate different variants of the proposed DAK-UCB and Mixture-DAK-UCB algorithms on text-to-image and language model generation tasks. Our results demonstrate improvements in diversity and overall correctness metrics relative to existing contextual bandit algorithms, such as Kernelized UCB,  PAK-UCB\hypertarget{pak}, and randomized selection strategies. We also validated the defined Joint-RKE and Joint-KD measures for capturing diversity and distributional matching characteristics of prompt-guided generative models. Here we summarize the work's main contributions:

\begin{itemize}[leftmargin=*]
\item Studying the role of diversity  in prompt-aware selection of generative AI models,
\item Introducing the Diversity-Aware Kernelized UCB (DAK-UCB) algorithm, a contextual bandit approach explicitly accounting for the diversity factor in model selection,
\item Extending deterministic DAK-UCB selection to prompt-conditioned mixture selection,
\item Demonstrating numerical effectiveness of DAK-UCB on several text-to-image generation tasks.
\end{itemize}

\section{Related Works}
\textbf{Contextual Bandits.} Contextual bandits (CB) extend the multi-armed bandit (MAB) framework by incorporating the context variable to guide the arm selection process \citep{NIPS2007_4b04a686, pmlr-v80-foster18a}. A widely-studied CB is the linear CB, which assumes that the expected reward of each arm is a linear function of context~\citep{10.1145/1772690.1772758, pmlr-v15-chu11a}. Kernelized CBs generalize to non-linear reward models by using kernel methods to capture more complex dependencies between contexts and rewards \citep{valko2013finitetimeanalysiskernelisedcontextual}. Due to the computational cost of kernel methods, recent works have explored approximations using relevant assumptions on the kernel \citep{pmlr-v99-calandriello19a, pmlr-v119-calandriello20a, pmlr-v151-zenati22a}.

To address exploration in linear CBs more effectively, \citep{abbasi2011improved} propose tighter confidence sets using martingale inequalities, leading to stronger theoretical guarantees and improved empirical performance. Moving beyond linearity, \cite{hu2025promptwise} introduce PromptWise, a multi-iteration-per-round cost-aware contextual bandit for prompt routing in LLMs and generative models. Also, \cite{kveton2020randomized} propose two randomized exploration algorithms for generalized linear bandits,which leverage Laplace approximations and perturbations of past data to efficiently explore under non-linear models. However, the above CB methodologies do not target diversity awareness in the online learning setting.


\textbf{Diversity/Novelty Evaluation Scores and Guidance in Generative Models.} Several methods have been proposed for evaluating and improving the diversity of generative and diffusion models. On the diversity evaluation, the metrics Recall \citep{sajjadi2018assessing,kynkäänniemi2019improvedprecisionrecallmetric}, Coverage \citep{naeem2020reliablefidelitydiversitymetrics}, Vendi \citep{friedman2023vendiscorediversityevaluation,ospanov2024towards,ospanov2024statvendi}, and RKE \citep{jalali2023information} have been proposed for unconditional (prompt-free) sample generation, and Conditional Vendi/RKE \citep{jalali2024conditional,jalali2025sparke} and Scendi \citep{Ospanov_2025_ICCV} have been suggested for prompt-aware diversity measurement. We note that \citep{zhang2024interpretable,zhang2025finc} propose entropy-based measures for novelty of generative models and their comparison, and \citep{jalali2025spec,gong2025kernel} study kernel-based comparison of embeddings.

For guiding sample generation, 
\citep{Miao_2024_CVPR} employed reinforcement learning with a diversity reward function in the generation process. \citep{Sehwag_low_density} proposed sampling from low-density regions of the data manifold to encourage diverse outputs. \citep{corso2024particleguidance} introduced a particle-based potential function that explicitly maximizes pairwise dissimilarity. \cite{sadat2024cads} explored the addition of Gaussian noise to conditioning inputs during inference to promote variability. \cite{lu2024procreate} developed ProCreate, a distance-based guidance technique. \cite{askari2024improving,jalali2025sparke} proposed Vendi/Conditional-RKE Score Guidance, which incorporates diversity score guidance in diffusion models. Similarly, \cite{sani2026mmd_guidance} propose MMD guidance to align the diffusion model to a target distribution by minimizing the MMD distance. We highlight that these works aim to improve the diversity and alignment over the sample generation process, unlike our work on the diversity-aware online selection of pre-trained models.

\textbf{Multi-Armed Bandit for diversity-based selection.}
In a related work, \cite{rezaei2024be} propose Mixture-UCB, a bandit algorithm for selecting mixtures of generative models to maximize diversity, while their proposed approach is not prompt-aware and therefore not applicable to prompt-guided sample generation. \citep{chen2025optimal}, \citep{yang2024best}, and \citep{hou2024almost} improve the best arm identification by multi-objective optimization, regret minimization, and sample efficiency. \cite{sani2012risk} introduce a framework for risk-averse decision-making in bandit problems by integrating variance-sensitive utility functions into exploration strategies. \cite{weinberger2023multi} study bandits with self-information-based rewards, proposing algorithms that leverage information-theoretic concepts to balance exploration and exploitation. \cite{zhu2020thompson} develop Thompson Sampling algorithms for mean-variance bandits, optimizing both expected returns and reward variability. We note that our work focuses on diversity in a contextual bandit setting, where the prompt plays the role of the context, which is not the case in the context-free MAB setting of these works.

\section{Preliminaries}
\vspace{-1mm}\subsection{Notations and Definitions}\vspace{-1mm}
Throughout the paper, we define a conditional generative model $\mathcal{G}$ as a conditional distribution $P_{\mathcal{G}}(x|t)$ where $x\in\mathcal{X}$ is the generated data variable conditioned to the randomly-observed prompt $t\in\mathcal{T}$. Following this definition, every sample generation of model $\mathcal{G}$ is conditioned on a user's provided prompt $T=t$ and then drawing a sample from the conditioned distribution $P_{\mathcal{G}}(x|T=t)$.\vspace{-1mm}

\subsection{Kernel-based Scores for Generative Models}\vspace{-1mm}
In a sample space $\mathcal{X}$, we call $k:\mathcal{X}\times\mathcal{X}\rightarrow \mathbb{R}$ a kernel function if there exists a feature map $\phi: \mathcal{X}\rightarrow \mathcal{H}_k$ such that for every $x,x'\in\mathcal{X}$ we have $k(x,x')= \langle \phi(x) , \phi(x')\rangle$ where $\langle \cdot , \cdot \rangle$ denotes the inner product in the Hilbert space $\mathcal{H}_k$ of kernel function $k$. Examples of kernel functions include the degree-$r$ polynomial kernel $k_{\text{poly}(r)}(x,y) = (1+ \gamma \langle x,y\rangle)^r$ with parameter $\gamma>0$ and the RBF (Gaussian) kernel with parameter $\sigma$ defined as: 
\begin{equation*}
k_{\text{gaussian}(\sigma)}(x,y)=\exp\Bigl(-\frac{\|x-y\|^2}{2\sigma^2}\Bigr)
\end{equation*}
Given a kernel function $k$, we can define the $n\times n$ kernel matrix $K=[k(x_i,x_j)]_{1\le i,j\le n}$ for $n$ samples $x_1,\ldots ,x_n\in\mathcal{X}$. Note that every valid kernel function will result in a positive semi-definite (PSD) kernel matrix for every set of samples. In our analysis, we use the following kernel-based scores and their variants in the online selection process:

\begin{itemize}[leftmargin=*]
    \item \textbf{Maximum Mean Discrepancy (MMD) and Kernel Distance (KD)}: For two probability distributions $P,Q$ on sample space $\mathcal{X}$,  \citep{bińkowski2021demystifyingmmdgans} consider the kernel distance (KD) between $P$ and $Q$ as the square of  the maximum mean discrepancy (MMD) \cite{gretton2012kernel}, i.e.,
    \begin{equation}\label{Eq: MMD definition}
        \mathrm{KD}(P,Q) := \mathbb{E}_{x,x'\stackrel{\text{iid}}{\sim}P}[k(x,x')] + \mathbb{E}_{y,y'\stackrel{\text{iid}}{\sim}Q}[k(y,y')] - 2\cdot \mathbb{E}_{x,y\stackrel{\text{ind}}{\sim}P\times Q}[k(x,y)]
    \end{equation}
    In the above definition, the samples $x,x'\sim P_X, y,y'\sim Q_X$ are drawn independently according to the specified distributions.     
    \item \textbf{Rényi Kernel Entropy (RKE)}: For probability model $P$ on space $\mathcal{X}$, the Rényi kernel entropy (RKE) \citep{jalali2023information} is defined as the order-2 Rényi entropy of the normalized population kernel matrix, which reduces to
    \begin{equation}\label{Eq: RKE definition}
       \mathrm{RKE}(P_X) = 1 \big/ \mathbb{E}_{x,x'\stackrel{\text{iid}}{\sim}P}[k(x,x')^2]
    \end{equation}
    Considering the empirical samples $x_1,\ldots ,x_n\sim P$, the empirical RKE score reduces to $\mathrm{RKE}(x_1,\ldots ,x_n) = \|\frac{1}{n} K\|^{-2}_F$.
\end{itemize}

\section{Diversity-Aware Kernelized Upper-Confidence Bound}
To develop a diversity-aware online selection of conditional generative models, we first propose two-sample-based extensions of the KD and RKE scores to the conditional sample generation case. Subsequently, we extend the standard Kernelized-UCB online learning framework by including an upper confidence bound of the joint proposed score functions.\vspace{-2mm}

\subsection{Extension of KD and RKE Scores to Conditional Generative Models}\vspace{-2mm}
We propose the following extensions of the KD in \eqref{Eq: MMD definition} and RKE in \eqref{Eq: RKE definition} to the conditional sample generation task. Both the extensions in the following apply the original scores to the joint (prompt $t$,data $x$) variable, by using the \emph{product kernel function} $k_{\text{joint}}\bigl([t,x],[t',x'])= k_{\text{text}}\bigl(t,t')\cdot k_{\text{data}}\bigl(x,x')$. As demonstrated by \cite{bamberger2022johnson,wu2025fusing}, the product kernel function corresponds to the Hilbert space of the tensor product of the (embedded) prompt and data vectors, effectively capturing the clusters in the dataset of the joint prompt,data vectors.

\noindent \textbf{Joint Kernel Distance (JKD) distribution matching score.} 
We propose the following extension of the marginal (prompt-unaware) kernel distance in \eqref{Eq: MMD definition} to the prompt-aware kernel distance, which we call \emph{Joint Kernel Distance (JKD)}, for two conditional distributions $P_{X|T}$ and $Q_{X|T}$:
\begin{align}\label{Eq: Conditional KD}
\mathrm{JKD}\bigl(P_{X|T},Q_{X|T}\bigr) \: :=\:  &\mathrm{KD}\bigl(P_T\cdot P_{X|T}\: ,\: P_T \cdot Q_{X|T}\bigr)\\
=\: &\mathbb{E}_{t,t'\sim P_T, x,x',y,y'\stackrel{\text{ind}}{\sim}P_{X|T=t}\cdot P_{X|T=t'} \cdot Q_{X|T=t}\cdot Q_{X|T=t'}}\Bigl[k_{\mathcal{T}}(t,t')\nonumber \\
&\qquad\quad \times\Bigl(k_{\mathcal{X}}(x,x') +
k_{\mathcal{X}}(y,y')- k_{\mathcal{X}}(x,y') - k_{\mathcal{X}}(x',y)\Bigr) \Bigr] \nonumber   
\end{align}
where $k_\mathcal{T}:\mathcal{T}\times \mathcal{T} \rightarrow \mathbb{R}$ and $k_\mathcal{X}:\mathcal{X}\times \mathcal{X} \rightarrow \mathbb{R}$ denote the kernel functions for the input prompt $t$ and output $x$, and $P_T$ is a reference distribution on the input variable $T$ (i.e., prompt) over space $\mathcal{T}$. Importantly, the empirical estimation of the expectation in \eqref{Eq: Conditional KD} can be performed by accessing only one sample generated by $P_{X|T}$ for each input prompt $t$.

\noindent\textbf{Joint RKE (JRKE) diversity score.} Similarly, we propose the following definition for the joint (prompt-data) RKE score, which we call Joint-RKE (JRKE) score. JRKE is defined to be the RKE score of the joint sample $(T,X)\sim P_T\cdot P_{X|T}$ given a reference prompt distribution $P_T$:
\begin{align}\label{Eq: Conditional RKE}
\mathrm{JRKE}\bigl(P_{X|T}\bigr) := &\ \mathrm{RKE}\bigl(P_T\cdot P_{X|T}\bigr)  \\
= &1 \Big/ \mathbb{E}_{t,t'\stackrel{\text{iid}}{\sim} P_T, x,x'\stackrel{\text{ind}}{\sim} P_{X|T=t}\cdot P_{X|T=t'}}\bigl[k_{\mathcal{T}}(t,t')^2k_{\mathcal{X}}(x,x')^2\bigr] \nonumber  
\end{align}
This score varies monotonically with its inverse, i.e, Inverse-JRKE score denoted by $\mathrm{I}\text{-}\mathrm{JRKE}$:
\begin{equation}\label{Eq: Monotone Conditional RKE}
\mathrm{I}\text{-}\mathrm{JRKE}(P_{X|T}) := \mathbb{E}_{t,t'\stackrel{\text{iid}}{\sim} P_T, x,x'\stackrel{\text{ind.}}{\sim} P_{X|T=t}\times P_{X|T=t'}}\bigl[ k_{\mathcal{T}}(t,t')^2k_{\mathcal{X}}(x,x')^2\bigr]   
\end{equation}
Similar to the JKD score, the expectation in the diversity-based Inverse-JRKE score can be estimated using a single output $X\sim P_{X|T=t}$ for every prompt $T=t$.

\subsection{Diversity-Aware Online Learning via DAK-UCB}

To propose a diversity-aware online selection framework, we leverage our proposed conditional diversity scores  in Equations \ref{Eq: Conditional KD} and \ref{Eq: Monotone Conditional RKE}, within the contextual bandit framework. The prompt $t$ serves as the context, and we seek a policy $\Pi:\mathcal{T}\rightarrow [G]$ that balances fidelity and diversity objectives.
A key feature of the introduced diversity scores is that they both decompose into expectations of prompt-level functions, enabling online estimation with a single sample per prompt. The following proposition highlights this property of the JKD and Inverse-JRKE scores.
\begin{proposition}\label{thm:decomposition}
For conditional distributions $P_{X|T}, Q_{X|T}$ and reference distribution $P_T$:
\begin{enumerate}[leftmargin=*]
\item[(a)] The Inverse-JRKE admits the decomposition:
\begin{align}
\mathrm{I\text{-}JRKE}(P_{X|T}) = \mathbb{E}_{t\sim P_T, x\sim P_{X|T=t}}[\phi_{\text{I-JRKE}}(t,x)],
\end{align}
where $\phi_{\text{I-JRKE}}(t,x) = \mathbb{E}_{t'\sim P_T, x'\sim P_{X|T=t'}}[k_{\mathcal{T}}(t,t')^2k_{\mathcal{X}}(x,x')^2]$.
\item[(b)] The JKD for comparing model $g$ against reference $Q$ admits:
\begin{align}
\mathrm{JKD}(P_g, Q) = \mathbb{E}_{t\sim P_T, x\sim P_g(\cdot|t)}[\phi_{\text{JKD}}^{(g)}(t,x)],
\end{align}
where $\phi_{\text{JKD}}^{(g)}(t,x) = \mathbb{E}_{t'\sim P_T}[k_{\mathcal{T}}(t,t')(\mathbb{E}_{x'\sim P_g(\cdot|t')}[k_{\mathcal{X}}(x,x')] - \mathbb{E}_{y'\sim Q(\cdot|t')}[k_{\mathcal{X}}(x,y')])]$.
\end{enumerate}
\end{proposition}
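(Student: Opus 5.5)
The plan is to prove both parts by expanding the defining expectations into iterated integrals and regrouping them with Fubini's theorem. Throughout we assume the kernels are bounded (as for the Gaussian kernel, or any kernel on a compact embedding space), so every integrand is absolutely integrable and conditional expectations can be nested and reordered freely.

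\textbf{Part (a).} Start from \eqref{Eq: Monotone Conditional RKE}. The joint law of $(t,x,t',x')$ is $P_T(dt)\,P_{X|T=t}(dx)\,P_T(dt')\,P_{X|T=t'}(dx')$. This is a product of the law of the pair $(t,x)$ and an independent copy for the pair $(t',x')$. By Fubini, the expectation can be written as an outer expectation over $(t,x)$ of an inner expectation over $(t',x')$. The inner expectation, with $(t,x)$ held fixed, is exactly $\phi_{\text{I-JRKE}}(t,x)$, which gives (a) immediately. The only point to check is measurability of $\phi_{\text{I-JRKE}}$, which also follows from Fubini.

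\textbf{Part (b).} Write $P_g:=P_{X|T}$ in \eqref{Eq: Conditional KD} and expand the bracket into four terms, each multiplied by $k_{\mathcal T}(t,t')$:
\begin{itemize}[leftmargin=*]
\item $A=\mathbb{E}[k_{\mathcal T}k_{\mathcal X}(x,x')]$;
\item $B=\mathbb{E}[k_{\mathcal T}k_{\mathcal X}(y,y')]$;
\item $C_1=\mathbb{E}[k_{\mathcal T}(t,t')k_{\mathcal X}(x,y')]$;
\item $C_2=\mathbb{E}[k_{\mathcal T}(t,t')k_{\mathcal X}(x',y)]$.
\end{itemize}
Then $\mathrm{JKD}=A+B-C_1-C_2$. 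Since $k_{\mathcal T}$ is symmetric, relabelling $t\leftrightarrow t'$ shows $C_2=C_1$.

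Next, condition on $(t,x)$ with $t\sim P_T$ and $x\sim P_g(\cdot|t)$, exactly as in part (a). Pulling the outer expectation out of $A-C_1$ gives
\[
A-C_1=\mathbb{E}_{t\sim P_T,\,x\sim P_g(\cdot|t)}\bigl[\phi^{(g)}_{\text{JKD}}(t,x)\bigr].
\]
This is a direct Fubini computation, with the $y'$-expectation nested inside the $t'$-expectation just as it appears in the stated formula for $\phi^{(g)}_{\text{JKD}}$.

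\textbf{Main obstacle.} The hard step is accounting for the two remaining terms $B-C_1$. Substituting into the decomposition above,
\[
\mathrm{JKD}(P_g,Q)=\mathbb{E}\bigl[\phi^{(g)}_{\text{JKD}}(t,x)\bigr]+(B-C_1),
\]
so the identity exactly as stated holds precisely when $B=C_1$.
\begin{itemize}[leftmargin=*]
\item One case where this holds is when the residual is absorbed by the convention that $y$ is resampled from the model inside the cross term.
\item Another is when $B$ and $C_1$ coincide, e.g.\ $P_g=Q$ on the support of $P_T$.
\end{itemize}
I would therefore first carry out the bookkeeping above carefully to isolate this residual. I would then try to show $B=C_1$ under the statement's conventions, looking for a symmetry or an alternative reading of the sampling scheme in \eqref{Eq: Conditional KD} that makes the residual vanish. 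If neither argument succeeds, the computation shows that the stated identity holds only up to this explicit term $B-C_1$. I would then record $B-C_1$ as a quantity depending on $Q$, and, through $C_1$, on $g$.
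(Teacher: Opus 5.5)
Your part (a) is correct. It is exactly the natural argument: Fubini over the two independent pairs $(t,x)$ and $(t',x')$. The paper asserts Proposition~\ref{thm:decomposition} in the main text without giving any proof, so there is no alternative route to compare against.

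For part (b), your bookkeeping is also correct: $C_2=C_1$ by symmetry of $k_{\mathcal T}$, and $\mathbb{E}[\phi^{(g)}_{\text{JKD}}]=A-C_1$. The final step of your plan, however, cannot succeed, because (b) is false as stated. That step was to prove $B=C_1$, or to find a reading of the sampling scheme under which the residual vanishes. Here is a counterexample:
\begin{itemize}
\item take $P_T=\delta_{t_0}$, $P_g(\cdot|t_0)=\delta_a$ and $Q(\cdot|t_0)=\delta_b$, with $k_{\mathcal X}(a,b)=\kappa<1$ (e.g.\ a Gaussian kernel and $a\neq b$);
\item then $A=B=1$ and $C_1=C_2=\kappa$;
\item so $\mathrm{JKD}(P_g,Q)=2(1-\kappa)$, while $\mathbb{E}[\phi^{(g)}_{\text{JKD}}]=1-\kappa$.
\end{itemize}
Neither escape route you list helps. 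The case $P_g=Q$ is trivial, since both sides vanish. Drawing $y$ from the model instead of from $Q$ changes the definition of JKD rather than proving the claim. So the gap is that your proposal leaves (b) open when it should close it with a counterexample.

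Structurally, let $\mu_P:=\mathbb{E}_{t\sim P_T,x\sim P_g(\cdot|t)}[\varphi_{\mathcal T}(t)\otimes\varphi_{\mathcal X}(x)]$, and define $\mu_Q$ analogously in the tensor-product RKHS. Then the stated $\phi^{(g)}_{\text{JKD}}(t,x)=\langle\varphi_{\mathcal T}(t)\otimes\varphi_{\mathcal X}(x),\mu_P-\mu_Q\rangle$ is the MMD witness function. Hence $\mathbb{E}_{P}[\phi^{(g)}_{\text{JKD}}]=\langle\mu_P,\mu_P-\mu_Q\rangle$, whereas $\mathrm{JKD}(P_g,Q)=\|\mu_P-\mu_Q\|^2$. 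Your residual is $B-C_1=\langle\mu_Q,\mu_Q-\mu_P\rangle$. It depends on $g$ through $\mu_P$, so the claim fails even up to an arm-independent additive constant.

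What your own computation does prove is either of the following:
\[
\mathrm{JKD}(P_g,Q)=\mathbb{E}_{t\sim P_T,\,x\sim P_g(\cdot|t)}\bigl[\phi^{(g)}_{\text{JKD}}(t,x)\bigr]-\mathbb{E}_{t\sim P_T,\,y\sim Q(\cdot|t)}\bigl[\phi^{(g)}_{\text{JKD}}(t,y)\bigr],
\]
\[
\mathrm{JKD}(P_g,Q)=\mathbb{E}_{t\sim P_T,\,x\sim P_g(\cdot|t)}\Bigl[\mathbb{E}_{t'\sim P_T}\Bigl[k_{\mathcal T}(t,t')\Bigl(\mathbb{E}_{x'\sim P_g(\cdot|t')}[k_{\mathcal X}(x,x')]-2\,\mathbb{E}_{y'\sim Q(\cdot|t')}[k_{\mathcal X}(x,y')]\Bigr)\Bigr]\Bigr]+B,
\]
where $B=\|\mu_Q\|^2$ does not depend on $g$. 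The second form is the one that supports the paper's use of (b): it gives an unbiased single-sample label for arm $g$, up to an arm-independent constant. It needs the factor $2$ on the cross term, which the stated $\phi^{(g)}_{\text{JKD}}$ omits.

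Keep your derivation, replace the ``try to show $B=C_1$'' step with the counterexample, and state (b) in one of these corrected forms.
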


Proposition~\ref{thm:decomposition} highlights a crucial structural property of the proposed
diversity scores: both I-JRKE and JKD admit a \emph{two-sample expectation form}, in which
the overall metric decomposes into the expectation of a prompt-level function of a single
generated sample. This is important in the online setting, because it ensures that each
round of interaction with a model provides an \emph{unbiased stochastic label} for the
corresponding diversity function, even though the original metric is defined in terms of
expectations over pairs of prompts and outputs. Therefore, the two-sample form makes these scores applicable to the kernelized UCB algorithm, as we can run kernel
ridge regression (KRR) on the stochastic labels and obtain confidence bounds that are
comparable to those for the fidelity score.

Based on this decomposition, we define for each model $g$ and prompt $t$ prompt-level
target functions:
\begin{align}
s_g(t) := \mathbb{E}_{x \sim P_g(\cdot \mid t)}\big[\phi_{\text{fidelity}}(t,x)\big], \quad
D_g(t) := \mathbb{E}_{x \sim P_g(\cdot \mid t)}\big[\phi_g(t,x;\mathcal H_t)\big].
\end{align}
Here $\phi_{\text{fidelity}}(t,x)$ denotes a fidelity score of a prompt--output pair,
instantiated in our experiments as the CLIP-Score between text prompt $t$ and generated
image $x$. The function $\phi_g(t,x;\mathcal H_t)$ is a per-sample diversity score,
whose expectation recovers the desired diversity metric in Proposition~\ref{thm:decomposition}.
The history $\mathcal H_t$ is only used to instantiate reference expectations over
past outputs.

At each round $t$, DAK-UCB treats the prompt $p_t$ as context in the per-arm kernelized contextual bandit process \citep{hu2025online},  and compares arms via a per-arm
UCB on the combined objective $J_g(t)=s_g(t)+\lambda D_g(t)$, where
$s_g(t)$ (e.g. CLIP-Score in our experiments) is the fidelity score 
and $D_g(t)$ is defined with
$\phi_g$ instantiated as either the (negative) I-JRKE score or the (negative) JKD score
as in Proposition~\ref{thm:decomposition}. After observing a single sample $x_i\sim P_{g_i}(\cdot\mid t_i)$,
we form unbiased labels $y_i^{(s)}=\phi_{\mathrm{fid}}(t_i,x_i)$ and
$y_i^{(D)}=\psi_{g_i}(t_i,x_i;\mathcal H_i)$, update per-arm KRR models for
$s_g$ and $D_g$, and select the next arm using an optimistic estimate
\[
\widehat{J}_g^{\mathrm{UCB}}(t_i)=
\big(\widehat{s}_g(t_i)+\beta^{(s)}\widehat{\sigma}_g^{(s)}(t_i)\big)
+\lambda\big(\widehat{D}_g(t_i)-\beta^{(D)}\widehat{\sigma}_g^{(D)}(t_i)\big),
\]
i.e., an upper bound for $s_g$ and a \emph{lower} bound for $D_g$ (since $D_g$ is a signed
diversity \emph{reward}, equal to the negative of the underlying penalty).
Confidence radii $\beta^{(s)},\beta^{(D)}$ follow the standard KRR-UCB form as detailed in Algorithm~\ref{alg:dakucb}.

\begin{algorithm}[t]
\caption{Diversity-Aware Kernelized UCB (\textsc{DAK-UCB})}
\label{alg:dakucb}
\KwIn{$G$ generative models, horizon $T$, prompt distribution $\mathcal{P}$, trade-off $\lambda$, diversity score $\psi\in\{-\text{I-JRKE},-\text{JKD}\}$}
\KwOut{$T$ generated outputs}
Initialize per-arm KRR estimators for fidelity $s_g$ and diversity $D_g$\\
{\color{black}
  Sample prompt $t_i \sim \mathcal{P}$\\
  \For{$g=1$ \KwTo $G$}{
    Predict fidelity and diversity with KRR: 
    $(\widehat{s}_g(t_i),\widehat{\sigma}_g^{(s)}(t_i))$,
    $(\widehat{D}_g(t_i),\widehat{\sigma}_g^{(D)}(t_i))$\;
    Form UCB score:
    $\widehat{J}_g^{\mathrm{UCB}}(t_i) \gets
    (\widehat{s}_g(t_i)+\beta^{(s)}\widehat{\sigma}_g^{(s)}(t_i))
    +\lambda(\widehat{D}_g(t_i)+\beta^{(D)}\widehat{\sigma}_g^{(D)}(t_i))$\\
  }
  Select model $g_i \gets \arg\max_g \widehat{J}_g^{\mathrm{UCB}}(t_i)$\\
  Generate output $x_i \sim P_{g_i}(\cdot\mid t_i)$\\
  
  Form labels $y_i^{(s)}=\phi_{\mathrm{fid}}(t_i,x_i)$,
  $y_i^{(D)}=\psi_{g_i}(t_i,x_i;\mathcal H_i)$\\
  Update KRR models of $g_i$ with $(t_i,y_i^{(s)})$ and $(t_i,y_i^{(D)})$\\
  Update history $\mathcal H_{i+1}\gets\mathcal H_i\cup\{(t_i,x_i,g_i)\}$}\\

\end{algorithm}

\revise{In Appendix~\ref{app:regret-dakucb}, we establish a regret bound for a phased variant of our algorithm, Sup-DAK-UCB. This result shows that the known regret guarantees of kernelized UCB methods~\citep{pmlr-v15-chu11a,valko2013finitetimeanalysiskernelisedcontextual,hu2025online} can be systematically extended to our diversity-aware objective. A key technical component of this analysis is that the JRKE and JKD metrics admit the two-sample expectation structure, thereby enabling integration with kernelized-UCB confidence bounds. This structural property is specific to JRKE and JKD and allows us to obtain regret guarantees for diversity-aware model selection. The following provides an informal statement of the resulting regret bound, and the proof is deferred to Appendix~\ref{app:regret-dakucb}.}
\begin{theorem}[Informal regret bound for \textsc{DAK-UCB}]
\label{thm:maintext-dakucb-informal}
Under Assumptions~\ref{ass:kernels-app}-\ref{ass:single-rkhs-app} in Appendix~\ref{app:regret-dakucb} (normalized kernels, sub-Gaussian noise, and RKHS regularity for $s_g$ and $D_g$), the phased variant \textsc{Sup-DAK-UCB} algorithm satisfies the following regret bound where the information-gain $\Gamma_T^{(s)}$ and effective-dimension $\Gamma_T^{(D)}$ terms are defined in Appendix~\ref{app:regret-dakucb}:
\[
\mathrm{Regret}(T)
\;=\;
\widetilde {\mathcal{O}}\Bigl(\sqrt{G\,T\,\Gamma_T^{(s)}}\,+\,
\lambda\\\sqrt{G\,T\,\Gamma_T^{(D)}}\Bigr)
\]
\end{theorem}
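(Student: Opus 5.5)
The plan is to reduce the regret of \textsc{Sup-DAK-UCB} to two separate kernelized-bandit regret terms, one for the fidelity target $s_g$ and one for the diversity target $D_g$. Each term is then controlled by the standard SupKernelUCB argument of \citet{valko2013finitetimeanalysiskernelisedcontextual} and \citet{pmlr-v15-chu11a}, applied per arm as in \citet{hu2025online}.

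The first step is to define regret against the prompt-wise oracle $g^\star(t_i)=\arg\max_g J_g(t_i)$ with $J_g=s_g+\lambda D_g$. By linearity,
\[
J_{g^\star}(t_i)-J_{g_i}(t_i)=\bigl(s_{g^\star}-s_{g_i}\bigr)(t_i)+\lambda\bigl(D_{g^\star}-D_{g_i}\bigr)(t_i).
\]
Proposition~\ref{thm:decomposition} is what makes the diversity labels usable here. Given the history, $y_i^{(D)}=\psi_{g_i}(t_i,x_i;\mathcal H_i)$ has conditional mean $D_{g_i}(t_i)$ up to the error from replacing the inner reference expectation by its empirical version over $\mathcal H_i$. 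I would absorb this error into the noise, or into a bias term of order $1/\sqrt{n}$, using boundedness of normalized kernels. Under the sub-Gaussian and RKHS assumptions, the labels for both $s_g$ and $D_g$ then follow the model ``RKHS function plus conditionally sub-Gaussian noise''.

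The second step uses the phased (Sup) structure. Within each stage $\ell$, the data used to build the estimators are chosen independently of the observed labels, so the fixed-design confidence bound of \citet{valko2013finitetimeanalysiskernelisedcontextual} applies. With probability $1-\delta$,
\[
|\widehat s_g-s_g|\le\beta^{(s)}\widehat\sigma^{(s)}_g
\quad\text{and}\quad
|\widehat D_g-D_g|\le\beta^{(D)}\widehat\sigma^{(D)}_g
\]
hold simultaneously, using a union bound over arms, stages and both targets, with $\beta$ of order $\sqrt{\log(GT\ln T/\delta)}$. On this event, the combined width $w_g=\beta^{(s)}\widehat\sigma^{(s)}_g+\lambda\beta^{(D)}\widehat\sigma^{(D)}_g$ bounds $|\widehat J_g-J_g|$. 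The usual elimination argument then applies: $g^\star$ is never eliminated, and any arm kept at stage $\ell$ has gap at most $O(2^{-\ell})$. The instantaneous regret at a round decided at stage $\ell$ is therefore bounded by a constant times the widths recorded at that stage.

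The third step sums these widths. Summing $\widehat\sigma^{(s)}_{g}$ over the rounds assigned to arm $g$ at a stage gives $\sqrt{T_g\,\Gamma^{(s)}_{T}}$ by Cauchy--Schwarz and the elliptical-potential / information-gain lemma. Applying Cauchy--Schwarz over arms, $\sum_g\sqrt{T_g}\le\sqrt{GT}$, and summing over the $O(\ln T)$ stages yields $\widetilde{\mathcal O}(\sqrt{GT\Gamma^{(s)}_T})$. The same computation for the diversity widths gives $\lambda\,\widetilde{\mathcal O}(\sqrt{GT\Gamma^{(D)}_T})$.

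The main obstacle I expect is the second step for $D_g$. The label function $\psi_{g}(\cdot;\mathcal H_i)$ depends on the evolving history, so $D_g$ is not a fixed RKHS function, and the labels are not exactly unbiased. I would first handle this by taking as target the population quantity from Proposition~\ref{thm:decomposition} with the true reference expectations, under the RKHS assumption on that quantity. I would then bound the plug-in discrepancy uniformly using kernel boundedness and a martingale concentration bound, which gives an additive $\widetilde{\mathcal O}(\lambda\sqrt{T})$ term dominated by the stated rate. If this proves too delicate, the fallback is to freeze the reference history at the start of each phase, which keeps $\psi$ fixed within a stage and preserves the independence needed for the fixed-design bound.
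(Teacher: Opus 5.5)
Your skeleton matches the paper's proof. The shared steps are:
\begin{itemize}
\item per-(arm, stage, target) index sets;
\item the fixed-design KRR bound $|\widehat\mu(t;\Psi)-f(t)|\le(B_\tau\sqrt\alpha+\eta_\sigma)\widehat\sigma(t;\Psi)$, with a union bound over rounds, arms and stages;
\item the invariants that $g^\star_i$ survives elimination and every survivor has gap at most $2^{3-m}$;
\item the split into exploitation rounds (regret at most $2T^{-1/2}$) and exploration rounds with $w^m_{g,i}>2^{1-m}$;
\item the potential bound $\sum_{j\in\Psi}\widehat\sigma\le\sqrt{2|\Psi|\gamma(\Psi)}$, followed by Cauchy--Schwarz over $(g,m)$.
\end{itemize}
You depart from the paper in how you handle the history-dependent diversity labels, and there your primary plan does not work.

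Your primary plan targets the population $D_g$ of Proposition~\ref{thm:decomposition} (references $t'\sim P_T$, $x'\sim P_g(\cdot\mid t')$) and controls the plug-in error by martingale concentration. This fails because the reference pairs in $\mathcal H_i$ are collected at the prompts the algorithm itself routed to $g$ (or, with the full history, come from whichever arms were selected). Their law therefore depends on the policy. A martingale bound centers the plug-in average at the average of these policy-dependent conditional means, not at the $P_T$-population quantity. For example, if $g$ is only ever routed cat prompts, its plug-in value at a dog prompt averages over cat references for every $n$, whereas the population target averages over all of $P_T$. The discrepancy is thus $O(1)$ rather than $O(n^{-1/2})$, and no additive $\widetilde{\mathcal O}(\lambda\sqrt T)$ term results. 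Even a vanishing bias would have to be propagated through the KRR weights $\Phi_\Psi A_\Psi^{-1}\phi(t)$ as a misspecification term, not simply added to the regret.

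A live history also couples labels inside one index set. When $j<i$ are both in $\Psi_g^{m,(D)}$, both $y^{(D)}_i$ and $y^{(D)}_j$ depend on $x_j$. This breaks the conditional independence that the fixed-design bound needs.

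Your fallback is exactly the paper's device. Freeze an archive snapshot $\mathcal D_g^m$ and use $\widehat d_i=\frac{1}{\max\{1,|\mathcal D_g^m|\}}\sum_{(t',x')\in\mathcal D_g^m}k_{\mathcal T}(t_i,t')^2k_{\mathcal X}(x_i,x')^2\in[0,1]$. Given the prompts and the snapshot, these labels are independent across the index set, unbiased, and $1/2$-sub-Gaussian by Hoeffding's lemma (Lemma~\ref{lem:indep-app}).

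The price is that $D_g$ becomes the snapshot-conditional mean rather than the population quantity. Assumption~\ref{ass:single-rkhs-app} must then be read as an RKHS bound on that function for every snapshot. With this reading, and with the snapshot held fixed over the whole index set, your Steps 2 and 3 apply unchanged. So make the freezing argument the main line, not the fallback.
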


\subsection{Prompt-Aware Mixture Selection via Quadratic Optimization}

While DAK-UCB selects a single model per prompt, maximizing diversity can require prompt-dependent mixtures of the available models, where we denote the model mixture probability values of prompt $t$ with notation $\boldsymbol{\alpha}(t)\in\Delta_G$. Therefore, for $G$ conditional generation models in $\{P_g(\cdot|t)\}_{g=1}^G$, we consider a prompt-aware mixture  $\boldsymbol{\alpha}(t)\in\Delta_G$, yielding
$P_{\boldsymbol{\alpha}}(\cdot|t)=\sum_{g=1}^G \alpha_g(t) P_g(\cdot|t)$. We focus here on the I-JRKE diversity penalty; the analogous construction for JKD is deferred to Proposition~\ref{prop:lipschitz-unified} in Appendix~\ref{sec: appendix promptaware}. Using the product kernel, the I-JRKE admits the quadratic form
\[
\mathrm{I\text{-}JRKE}(P_{\boldsymbol{\alpha}})
=\mathbb{E}_{t\sim P_T}\!\big[\boldsymbol{\alpha}(t)^\top M(t)\boldsymbol{\alpha}(t)\big],
\]
where $M(t)\in[0,1]^{G\times G}$ collects cross-kernel expectations across models. To ensure stability across prompts, we restrict mixtures to a kernel-Lipschitz competitor set
\[
\mathcal{A}_\epsilon=\Big\{\boldsymbol{\alpha}:\mathcal{T}\to\Delta_G :\;\; \forall t,t',\;
\bigl|k_{\mathcal{T}}(t,t')\bigr|\cdot\bigl\|\boldsymbol{\alpha}(t)-\boldsymbol{\alpha}(t')\bigr\|_1 \le \epsilon\Big\},
\]
which guarantees that nearby prompts yield similar mixtures and incurs only an $O(\epsilon)$ approximation error.  In the Appendix \ref{sec: appendix promptaware}, we discuss that, under the above mixture feasible set, an approximate solution follows solving the following problem where at each prompt $t$, the decision rule reduces to the concave quadratic maximization
\[
\boldsymbol{\alpha}_t^*\,=\, \underset{\boldsymbol{\alpha}\in\Delta_G}{\arg\!\max}\: \bigl\langle\boldsymbol{\alpha}, {\widehat{s}}^{\mathrm{UCB}}(t)\bigr\rangle
-\lambda\,\boldsymbol{\alpha}^\top {\widehat{M}}^{\mathrm{UCB}}(t)\boldsymbol{\alpha}
\]
where ${\widehat{s}}^{\mathrm{UCB}}(t)$ are fidelity UCB estimates and ${\widehat{M}}^{\mathrm{UCB}}(t)$ is the projection of the kernelized-UCB estimation of $M(t)$ onto the PSD matrices by zeroing its negative eigenvalues. We call the resulting mixture-model selection method \emph{Mixture-DAK-UCB}, as detailed in Algorithm~\ref{alg:mdakucb} at Appendix~\ref{sec: appendix promptaware}.

\section{Numerical Results}
\begin{figure}[t]
    \centering
    \includegraphics[width=1\textwidth]{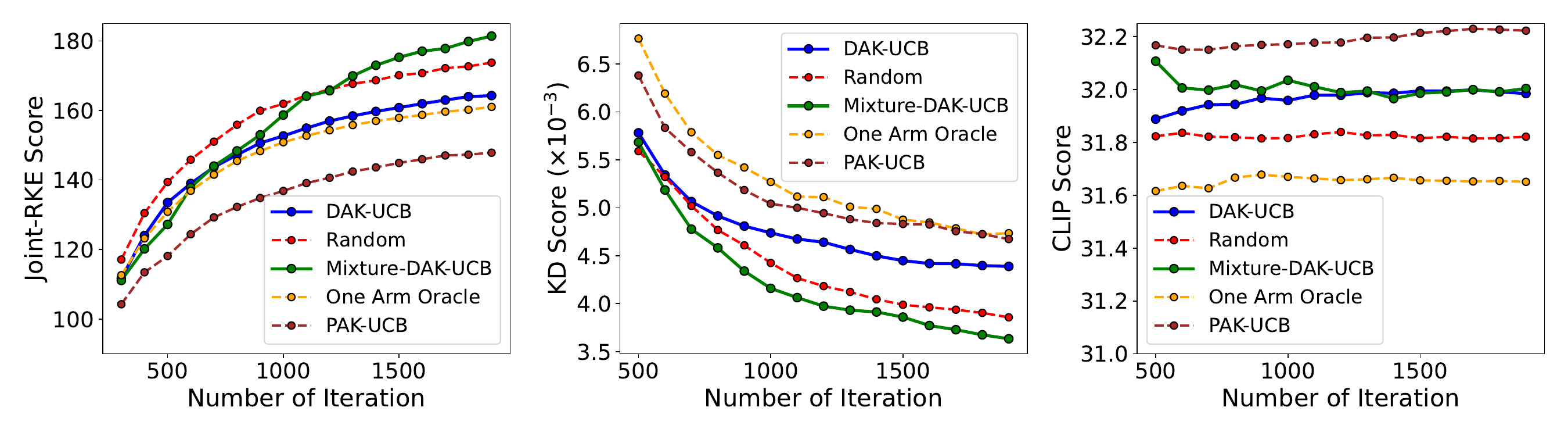}
    \caption{\revise{Performance comparison on JKD score and Joint-RKE for MS-COCO prompt clusters using  Kandinsky, SDXL, and GigaGAN.}}
    \label{fig:msdivclip}
\end{figure}

\begin{figure}[t]
    \centering
    \includegraphics[width=1\textwidth]{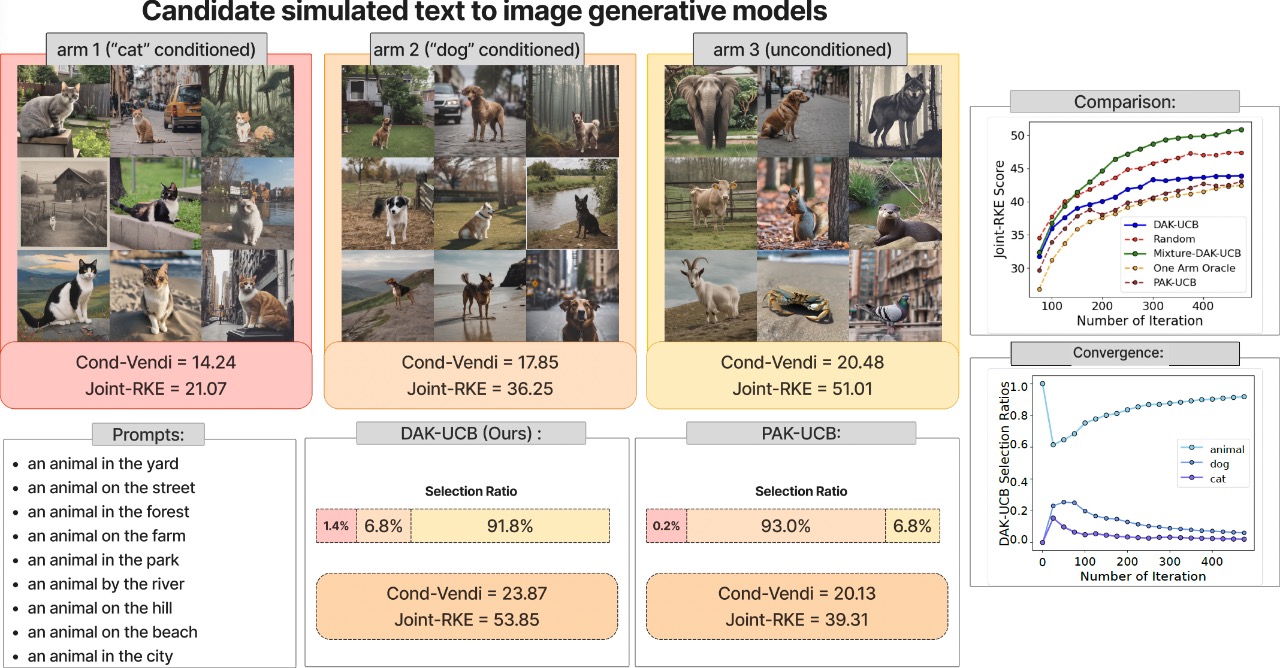}
        \caption{\revise{Visualization of simulated generative models with less-diverse Models 1,2 and more-diverse Model 3. DAK-UCB and PAK-UCB selection ratios,scores over 500 rounds are reported.}}
    \label{fig:animal}
\end{figure}

We numerically evaluated the proposed DAK-UCB and its mixture variant, Mixture-DAK-UCB, in several experiments.  In our numerical experiments on text and image data, we used the CLIP encoder~\citep{radford2021learning} as the backbone text embedding and DINOv2~\citep{oquab2023dinov2} as the image embedding as suggested by~\cite{stein2023exposing}. We considered the following online model selection baselines in our evaluation of DAK-UCB and Mixture-DAK-UCB:
\begin{itemize}[leftmargin=*]
    \item \textbf{One Arm Oracle:} The one-arm oracle baseline has knowledge of the evaluation scores of \textit{each individual generative model} (aggregated over the validation prompt set). This baseline universally selects the individual model with the best aggregate score to handle all the prompts.  
    \item \textbf{Random Selection:} This baseline randomly selects an arm for an input prompt, where each arm is selected uniformly with equal probability, and the selection across prompts are run independently.
    \item \textbf{PAK-UCB:} This baseline is a diversity-unaware contextual bandit algorithm (embedded prompt is the context variable), selecting the model only based on the CLIP-score fidelity score in text-to-image generation.
\end{itemize}

\textbf{DAK-UCB applied to diversity-aware text-to-image model selection on MS-COCO prompts.} We considered the prompts in the MS-COCO~\citep{lin2014microsoft} validation subset. We uniformly sampled a thousand prompts containing the words: cat, dog, car, cake, bowl, bike, tree, airplane, park, and elephant. Three generative models were used as candidate text-to-image generation models in the experiment: Kandinsky~\citep{arkhipkin2024kandinsky}, SDXL~\citep{stabilityai2023sdxl}, and GigaGAN~\citep{kang2023gigagan}. The experiment ran for 2000 iterations, where at each iteration a random prompt from a random cluster was chosen, and our objective selected the best arm that balanced both diversity and fidelity. The results are averaged over 10 trials to reduce noise from random prompt selection. Figure~\ref{fig:msdivclip} shows that Mxiture-DAK-UCB could achieve the highest diversity Joint-RKE score. We also used MS-COCO test samples as the reference dataset and report the KD scores with Mixture-DAK-UCB obtaining the best score. Note that the KD metric evaluates both diversity and quality factors.

\textbf{Experiment on simulated text-to-image models with varying diversity in "animal" image generation.} In this experiment, we simulated three animal image generation arms, where the first two arms (less-diverse) are outputting the SD-XL generated data conditioned on "cat" and "dog" samples, respectively. On the other hand, the third model (more-diverse) generates the picture of an animal uniformly selected from a list of 10 animals. To run the experiment, we used GPT-4o to generate 200 independent prompts about "an animal" in different scenes, with sample prompts provided in Figure~\ref{fig:animal}. Figure~\ref{fig:animal} shows the conditional-Vendi and Joint-RKE scores for each of the three simulated arms, which indicate that the "cat"  and "dog" simulated arms were less diverse than the third simulated arm. We ran the DAK-UCB algorithm for 500 iterations, where at each step a random prompt was chosen and the output from the algorithm's selected arm was observed. The results demonstrate that the DAK-UCB tended to generate images from the more diverse third arm, while the CLIP-Score-based PAK-UCB baseline generated samples from the less-diverse second model more frequently.

\begin{figure}[t]
    \centering
    \includegraphics[width=\textwidth]{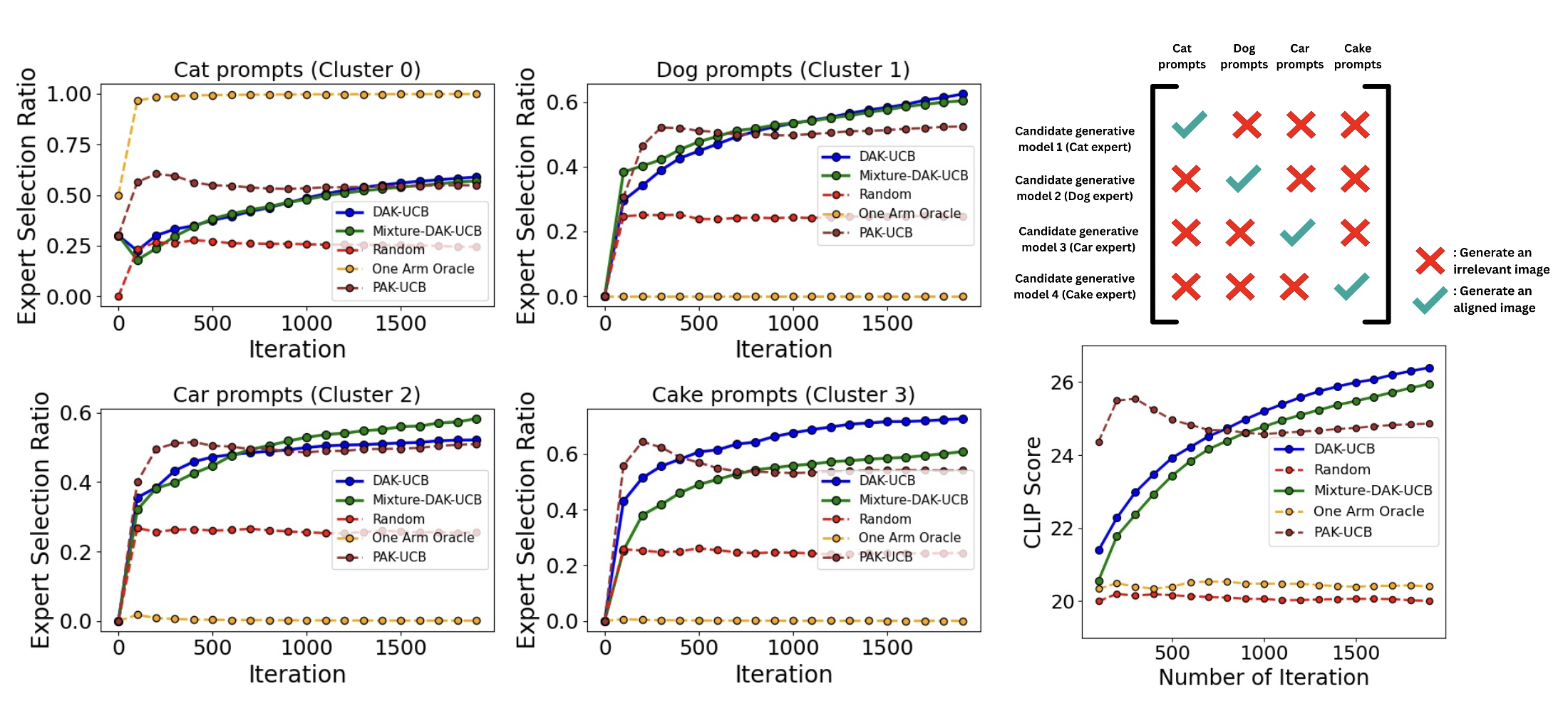}
        \caption{Expert Selection Ratio and Performance Comparison between DAK-UCB and baselines using the JKD score for diversity term in DAK-UCB. }
    \label{fig:results4}
\end{figure}


\textbf{Identification of prompt-relevant diversity via DAK-UCB}.  
In the experiment, we tested DAK-UCB in outputting samples with prompt-relevant diversity. We used these four prompt groups of the experiment of Figure~\ref{fig:msdivclip} from MS-COCO validation set: "Cat", "Dog", "Car", and "Cake". We designed four arms, each acting as an expert on one of these clusters, where the expert arm on each subject generates aligned samples with the prompt of the same type, while it generates images of a randomly-selected incorrect type for the remaining three subjects For example, ARM1(Cat expert) generates an image using SDXL when given a prompt from the Cat cluster; otherwise, it samples a random irrelevant prompt from other types and generates an image for that prompt using SDXL. In in Figure~\ref{fig:results4}, we report the expert arm selection ratio for each prompt category and the average CLIP-Scores over iterations for each baseline. As suggested by the results in Figure~\ref{fig:results4}, both the JKD-based and Clip-Score+I-JRKE-based DAK-UCB (Appendix, Figure~\ref{fig:results}) methods could avoid generating prompt-irrelevant output and did not attempt to increase diversity by generating unrelated content.

\revise{
\textbf{Diversity Collapse Across LLMs and the Benefit of Mixtures:}
To illustrate the importance of mixture-based selection in realistic language-generation settings, we evaluated three widely used open-source LLMs on a simple iterative generation task: Llama3.2~\citep{meta2024llama3}, Qwen2~\citep{qwen2_2024}, and Gemma3~\citep{google2024gemma}. At each round, the model produced a short sentence about a vibrant city in North America. Each arm exhibited a persistent and distinct geographic bias: Llama repeatedly focused on New Orleans, Gemma overwhelmingly generated Chicago, and Qwen2 consistently favored New York City. These model-specific collapse modes are visualized in Figure~\ref{fig:america}.
}
\revise{
Despite their strong capabilities, all three models suffered from diversity collapse, but crucially, their collapse modes were complementary rather than identical. This directly motivates mixture-based selection: although each arm exhibits low diversity on its own, their differing failure modes allow a mixture to achieve significantly higher output diversity. Using the Cond-Vendi metric, we show that the mixture selected by Mixture-DAK-UCB attains substantially higher diversity than any single model. We repeat this experiment for two additional prompts—a vibrant city in Europe’’ and a renowned celebrity’’—and the visualizations in Figure~\ref{fig:america} consistently demonstrate the diversity gains unique to model mixtures.
}

\textbf{Additional Numerical Results}. In Appendix~\ref{sec:additional}, we also report the numerical results of applying DAK-UCB for diversity-aware model selection in the tasks of prompt-aware selection of simulated LLMs with different diversity scores and the image-to-model assignments of image captioning models. We also present the results of the ablation study for testing the effect of the choice of image embedding and the coefficient of DAK-UCB objective's diversity term.

\begin{figure}[H]
    \centering
    \includegraphics[width=\textwidth]{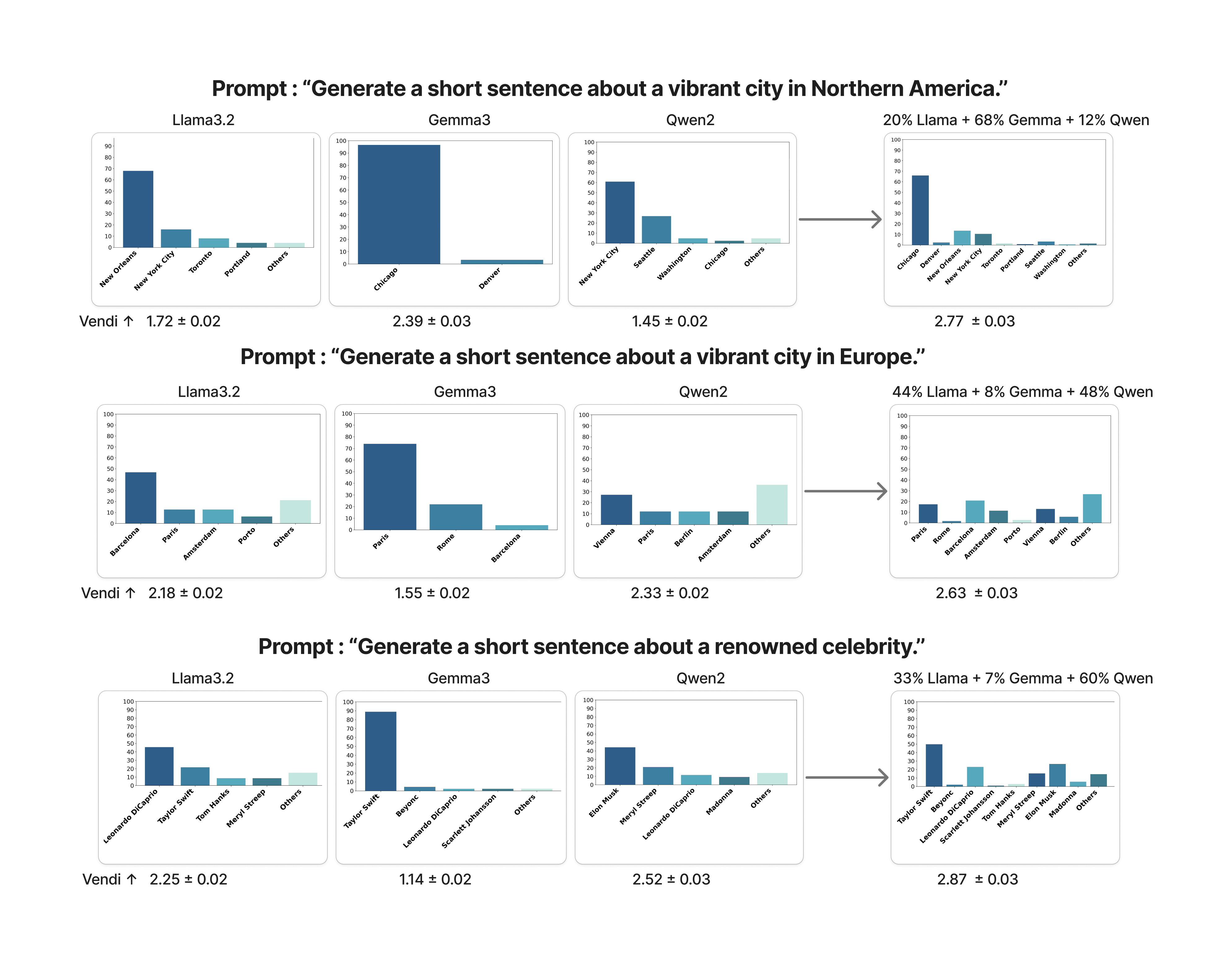}
    \caption{\revise{Vendi diversity comparison across Llama3.2, Qwen2, Gemma3, and their mixture for prompts: “Generate a short sentence about a vibrant city in Northern America.’’ ,“Generate a short sentence about a vibrant city in Europe.’’ and “Generate a short sentence about a renowned celebrity.’’  }}
    \label{fig:america}\vspace{-5mm}
\end{figure}




\vspace{-1mm}
\section{Conclusion}\vspace{-1mm}

In this work, we proposed an online learning framework for a diversity-aware prompt-based selection of multiple generative models. Our proposed DAK-UCB can be applied using the defined I-JRKE diversity and JKD correctness scores for improving the diversity factor in sample generation. Notably, the proposed scores reduce to the two-sample expectation over the observed samples, which can be estimated using only one generated sample for an input prompt. In addition, we introduced the Mixture-DAK-UCB extension, which enables optimized prompt-dependent mixtures of generative models and further improves diversity-aware selection.

\revise{Beyond text-to-image generation, we also demonstrated applications of DAK-UCB to multi-LLM prompt assignment and image captioning models in Appendix C, illustrating its applicability in broader generative settings. Extending the framework to additional modalities, such as protein, molecular, and graph generative models, is a relevant future direction.} The extension of the proposed scores for evaluating and guiding prompt-aware diversity and correctness in data generation will be relevant for future exploration. Also, studying the application of the scores in general bandit settings beyond generative model selection problems is another related future direction.

\vspace{-2mm}
\section*{Acknowledgments}\vspace{-2mm}
The work of Farzan Farnia is partially supported by a grant from the Research Grants Council of the Hong Kong Special Administrative Region, China, Project 14210725, and is partially supported by CUHK Direct Research Grants with CUHK Project No. 4055164 and 4937054. The work is also supported by a grant under 1+1+1 CUHK-CUHK(SZ)-GDSTC Joint Collaboration Fund. Finally, the authors thank the anonymous reviewers and metareviewer for their constructive feedback and suggestions.

\section*{Reproducibility Statement}
We have taken several steps for the reproducibility of our work. The proposed DAK-UCB and Mixture-DAK-UCB algorithms are fully specified in Section 4 and Appendix A, with pseudocode provided in Algorithms 1–3. Theoretical results are stated with assumptions and proofs in Appendix B. The datasets used in our experiments include standard publicly available benchmarks including MS-COCO as well as synthetically generated prompt sets by the specified GPT-4o model. The details of data selection, prompt construction, model candidates, and evaluation metrics are described in Section 5 and Appendix C. We provide ablation studies in Appendix C.4 to clarify the sensitivity of our results to hyperparameters and embedding choices. An anonymous implementation of our method will be released as supplementary material.

\bibliographystyle{iclr2026_conference}
\bibliography{ref}

\appendix
\begin{appendices}


\section{Derivation of the Mixture-DAK-UCB Proxy Objective Function}\label{sec: appendix promptaware}
\subsection{UCB Formulation of Mixture-DAK-UCB}
\label{app:mixture-ucb}

Building on the mixture objective in the main text, we here prove the approximation guarantee for Mixture-DAK-UCB, and then provide the corresponding UCB formulation.

\begin{proposition}
\label{prop:lipschitz-unified}
Assume the kernel functions are normalized and satisfy $k_{\mathcal{T}}(t,t) = 1$ and $k_{\mathcal{X}}(x,x') = 1$ for all $t,t',x,x'$. For every mixture weight $\boldsymbol{\alpha}\in\mathcal{A}_\epsilon=\big\{\boldsymbol{\alpha}:\mathcal{T}\to\Delta_G :\; \forall t,t',
\bigl|k_{\mathcal{T}}(t,t')\bigr|\cdot\bigl\|\boldsymbol{\alpha}(t)-\boldsymbol{\alpha}(t')\bigr\|_1 \le \epsilon\big\}$  , the following hold:
\begin{enumerate}[leftmargin=*]
\item[(a)] For the I-JRKE score of the mixture $P_{\alpha}$ defined as $\mathrm{I\text{-}JRKE}(P_{\boldsymbol{\alpha}})
=\mathbb{E}_{t\sim P_T}\!\big[\boldsymbol{\alpha}(t)^\top M(t)\boldsymbol{\alpha}(t)\bigr]$, the proxy I-JRKE score $\mathrm{I\text{-}JRKE}_{\text{approx}}(P_{\boldsymbol{\alpha}}) = \mathbb{E}_t\Big[\sum_{g,g'}\alpha_g(t)\alpha_{g'}(t)M_{gg'}^{\text{RKE}}(t)\Big]$ results in an $\epsilon$-bounded error:
\[
\Bigl|\mathrm{I\text{-}JRKE}(P_{\boldsymbol{\alpha}}) - \mathrm{I\text{-}JRKE}_{\text{approx}}(P_{\boldsymbol{\alpha}})\Bigr|
\le
\epsilon
\]
\item[(b)] For the JKD score of the mixture $P_{\alpha}$ defined as $\mathrm{JKD}(P_{\boldsymbol{\alpha}},Q)
=\mathbb{E}_{t,t'}\Bigl[k_{\mathcal{T}}(t,t')\sum_{g,g'}\alpha_g(t)\alpha_{g'}(t')K_{gg'}^{\text{JKD}}(t,t')\Bigr]$, the proxy JKD score $\mathrm{JKD}_{\text{approx}}(P_{\boldsymbol{\alpha}},Q) = \mathbb{E}_{t,t'}\Bigl[k_{\mathcal{T}}(t,t')\sum_{g,g'}\alpha_g(t)\alpha_{g'}(t)K_{gg'}^{\text{JKD}}(t,t')\Bigr]$ results in an $\epsilon$-bounded error:
\[
\Bigl|\mathrm{JKD}(P_{\boldsymbol{\alpha}},Q) - \mathrm{JKD}_{\text{approx}}(P_{\boldsymbol{\alpha}},Q)\Bigr|
\le
\epsilon.
\]
\end{enumerate}
\end{proposition}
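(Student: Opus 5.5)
The plan is to write both the exact score and the proxy as a double expectation over a pair of prompts $(t,t')$. Then I would compare the two integrands pointwise in $(t,t')$. The two integrands differ only in whether the second mixture weight is evaluated at $t'$ or at $t$. The Lipschitz-type constraint defining $\mathcal{A}_\epsilon$ controls exactly that difference, weighted by $|k_{\mathcal{T}}(t,t')|$.

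\textbf{Step 1 (unfolding the I-JRKE of the mixture).} Using the definition in \eqref{Eq: Monotone Conditional RKE} with $P_{\boldsymbol{\alpha}}(\cdot|t)=\sum_g\alpha_g(t)P_g(\cdot|t)$ and bilinearity of the expectation, I would write
\[
\mathrm{I\text{-}JRKE}(P_{\boldsymbol{\alpha}})=\mathbb{E}_{t,t'}\Bigl[k_{\mathcal{T}}(t,t')^2\sum_{g,g'}\alpha_g(t)\alpha_{g'}(t')K_{gg'}(t,t')\Bigr],
\]
where $K_{gg'}(t,t')=\mathbb{E}_{x\sim P_g(\cdot|t),\,x'\sim P_{g'}(\cdot|t')}[k_{\mathcal{X}}(x,x')^2]$. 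The proxy corresponds to $M^{\text{RKE}}_{gg'}(t)=\mathbb{E}_{t'}[k_{\mathcal{T}}(t,t')^2K_{gg'}(t,t')]$. It is therefore the same double expectation with $\alpha_{g'}(t')$ replaced by $\alpha_{g'}(t)$.

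\textbf{Step 2 (pointwise bound for (a)).} Subtracting the two integrands gives
\[
k_{\mathcal{T}}(t,t')^2\sum_g\alpha_g(t)\sum_{g'}\bigl(\alpha_{g'}(t')-\alpha_{g'}(t)\bigr)K_{gg'}(t,t').
\]
The normalization, read as $k(u,u)=1$ for both kernels, together with Cauchy--Schwarz in the RKHS, gives $|k_{\mathcal{T}}|\le 1$ and $|k_{\mathcal{X}}|\le 1$. Hence $0\le K_{gg'}\le 1$ and $k_{\mathcal{T}}^2\le|k_{\mathcal{T}}|$. Since $\sum_g\alpha_g(t)=1$, the absolute value of the integrand is at most
\[
|k_{\mathcal{T}}(t,t')|\,\|\boldsymbol{\alpha}(t')-\boldsymbol{\alpha}(t)\|_1\le\epsilon
\]
by the definition of $\mathcal{A}_\epsilon$. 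Taking the expectation over $(t,t')$ and applying the triangle inequality proves (a).

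\textbf{Step 3 (part (b)).} This is the same computation with weight $k_{\mathcal{T}}(t,t')$ in place of its square. The exact and proxy JKD expressions in the statement differ only in $\alpha_{g'}(t')$ versus $\alpha_{g'}(t)$. The pointwise error is therefore at most
\[
|k_{\mathcal{T}}(t,t')|\,\|\boldsymbol{\alpha}(t)-\boldsymbol{\alpha}(t')\|_1\cdot\sup_{g,g',t,t'}|K^{\text{JKD}}_{gg'}(t,t')|.
\]

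\textbf{Main obstacle.} The hard step is bounding $\sup|K^{\text{JKD}}_{gg'}|$ by $1$, which is what yields the clean constant $\epsilon$ in (b). If $K^{\text{JKD}}_{gg'}(t,t')$ is the cross-kernel expectation $\mathbb{E}[k_{\mathcal{X}}(x,x')]$ between models $g$ and $g'$, then $|k_{\mathcal{X}}|\le 1$ gives the bound directly. The reference-dependent terms involving $Q$ do not depend on $\boldsymbol{\alpha}(t')$ and cancel in the difference. If instead $K^{\text{JKD}}$ absorbs the four-term combination $k(x,x')+k(y,y')-k(x,y')-k(x',y)$, the crude bound is a constant larger than $1$. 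In that case I would first try to show that only the $\alpha$-dependent terms survive the difference, and that each is bounded by $1$. Failing that, the result would hold with $\epsilon$ replaced by $c\,\epsilon$ for a small absolute constant $c$.
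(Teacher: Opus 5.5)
Your proposal follows the paper's proof. It makes the same pointwise comparison of the exact and proxy double expectations over $(t,t')$, and uses the same three ingredients: $\sum_g\alpha_g(t)=1$, $k_{\mathcal{T}}^2\le|k_{\mathcal{T}}|$, and the constraint defining $\mathcal{A}_\epsilon$. Part (a) is therefore complete as written. For part (b) the paper simply repeats the computation and bounds the inner sum by $\|\boldsymbol{\alpha}(t')-\boldsymbol{\alpha}(t)\|_1$ without comment. That step silently uses $|K^{\mathrm{JKD}}_{gg'}|\le 1$ for a matrix the paper never defines. The obstacle you flag is a real weak point in the paper, not an idea you missed.

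Your first fallback does close the gap, once one claim is corrected. The term $\mathbb{E}[k_{\mathcal{X}}(x',y)]$ with $x'\sim P_{\boldsymbol{\alpha}}(\cdot|t')$ \emph{does} depend on $\boldsymbol{\alpha}(t')$. However, $t,t'$ are i.i.d.\ and $k_{\mathcal{T}}$ is symmetric, so swapping $t\leftrightarrow t'$ turns it into the $\mathbb{E}[k_{\mathcal{X}}(x,y')]$ term. You may therefore take $K^{\mathrm{JKD}}_{gg'}(t,t')=\mathbb{E}[k_{\mathcal{X}}(x,x')]+\mathbb{E}[k_{\mathcal{X}}(y,y')]-2\,\mathbb{E}[k_{\mathcal{X}}(x,y')]$, with $x\sim P_g(\cdot|t)$, $x'\sim P_{g'}(\cdot|t')$, $y\sim Q(\cdot|t)$, $y'\sim Q(\cdot|t')$. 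The last two terms do not depend on $g'$, so they vanish against $\sum_{g'}(\alpha_{g'}(t')-\alpha_{g'}(t))=0$. Only $\mathbb{E}[k_{\mathcal{X}}(x,x')]\in[-1,1]$ survives. Bounding after this cancellation, rather than entrywise on $K^{\mathrm{JKD}}$ (which can exceed $1$), gives exactly $\epsilon$ with no extra constant.

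The symmetrization is not cosmetic. Take the unsymmetrized form $K^{\mathrm{JKD}}_{gg'}(t,t')=\langle\mu_{P_g(\cdot|t)}-\mu_{Q(\cdot|t)},\mu_{P_{g'}(\cdot|t')}-\mu_{Q(\cdot|t')}\rangle$, where $\mu$ is the kernel mean embedding for $k_{\mathcal{X}}$. Then the $Q(\cdot|t)$ cross term survives the difference, and for a signed kernel the error can reach $2\epsilon(1-1/N)$. A concrete instance:
\begin{itemize}
\item $N\ge 3$ equiprobable prompts and $G=N$ arms, with $\boldsymbol{\alpha}(t_i)$ the $i$-th vertex of $\Delta_G$;
\item $k_{\mathcal{T}}(t_i,t_j)=\epsilon/2$ for $i\ne j$, with $\epsilon\le 2$;
\item the cosine kernel on the sphere for $k_{\mathcal{X}}$;
\item $P_i(\cdot|t_i)=\delta_v$ and $P_i(\cdot|t_j)=\delta_{-v}$ for $j\ne i$, with $Q(\cdot|t)=\delta_{-v}$ for all $t$.
\end{itemize}
In that reading your $c=2$ fallback is the correct constant. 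For $[0,1]$-valued kernels such as RBF, Cauchy--Schwarz with two MMD factors, each at most $\sqrt{2}$, still recovers $\epsilon$.
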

\begin{proof}
\textbf{Proof for (a).} Considering the definitions, we have
\begin{align*}
&|\mathrm{I\text{-}JRKE}(P_{\boldsymbol{\alpha}}) - \mathrm{I\text{-}JRKE}_{\text{approx}}(P_{\boldsymbol{\alpha}})|\\
=\, &\Big|\mathbb{E}_{t,t'}\Big[k_{\mathcal{T}}^2(t,t')\sum_{g,g'}\alpha_g(t)\alpha_{g'}(t')K_{gg'}^{\text{RKE}}(t,t')\Big] - \mathbb{E}_t\Big[\sum_{g,g'}\alpha_g(t)\alpha_{g'}(t)M_{gg'}^{\text{RKE}}(t)\Big]\Big|\\
=\, &\Big|\mathbb{E}_{t,t'}\Big[k_{\mathcal{T}}^2(t,t')\sum_{g,g'}\alpha_g(t)[\alpha_{g'}(t')-\alpha_{g'}(t)]K_{gg'}^{\text{RKE}}(t,t')\Big]\Big|.
\end{align*}

For fixed $t,t'$, we bound the inner sum as
\begin{align*}
\Big|\sum_{g,g'}\alpha_g(t)[\alpha_{g'}(t')-\alpha_{g'}(t)]K_{gg'}^{\text{RKE}}(t,t')\Big|
&\le \sum_{g,g'}\alpha_g(t)|\alpha_{g'}(t')-\alpha_{g'}(t)|\\
&\le  \|\boldsymbol{\alpha}(t')-\boldsymbol{\alpha}(t)\|_1,
\end{align*}
where the last line uses $\sum_g \alpha_g(t) = 1$. From the Lipschitz condition $\boldsymbol{\alpha} \in \mathcal{A}_\epsilon$:
\[
|k_{\mathcal{T}}(t,t')| \cdot \|\boldsymbol{\alpha}(t')-\boldsymbol{\alpha}(t)\|_1 \leq \epsilon.
\]
Since $k_{\mathcal{T}}^2(t,t') \leq k_{\mathcal{T}}(t,t)k_{\mathcal{T}}(t',t') = 1$, we have $k_{\mathcal{T}}^2(t,t') \le |k_{\mathcal{T}}(t,t')|$ and then:
\[
k_{\mathcal{T}}^2(t,t') \cdot \|\boldsymbol{\alpha}(t')-\boldsymbol{\alpha}(t)\|_1 \leq |k_{\mathcal{T}}(t,t')| \cdot \|\boldsymbol{\alpha}(t')-\boldsymbol{\alpha}(t)\|_1 \leq \epsilon.
\]

Therefore, we can write
\begin{align*}
|\mathrm{I\text{-}JRKE}(P_{\boldsymbol{\alpha}}) - \mathrm{I\text{-}JRKE}_{\text{approx}}(P_{\boldsymbol{\alpha}})|
&\leq \mathbb{E}_{t,t'}[k_{\mathcal{T}}^2(t,t') \cdot \|\boldsymbol{\alpha}(t')-\boldsymbol{\alpha}(t)\|_1]\\
&\leq \mathbb{E}_{t,t'}[\epsilon ]\\
&= \epsilon.
\end{align*}

\textbf{Proof for (b).} Using the definitions,
\begin{align*}
&|\mathrm{JKD}(P_{\boldsymbol{\alpha}},Q) - \mathrm{JKD}_{\text{approx}}(P_{\boldsymbol{\alpha}},Q)|\\
=\: & \Bigl|\mathbb{E}_{t,t'}\Bigl[k_{\mathcal{T}}(t,t')\sum_{g,g'}\alpha_g(t)\alpha_{g'}(t')K_{gg'}^{\text{JKD}}(t,t')\Bigr] - \mathbb{E}_t\Bigl[\sum_{g,g'}\alpha_g(t)\alpha_{g'}(t)M_{gg'}^{\text{JKD}}(t)\Bigr]\Bigr|\\
=\: & \Big|\mathbb{E}_{t,t'}\Big[k_{\mathcal{T}}(t,t')\sum_{g,g'}\alpha_g(t)[\alpha_{g'}(t')-\alpha_{g'}(t)]K_{gg'}^{\text{JKD}}(t,t')\Big]\Big|.
\end{align*}

For fixed $t,t'$, we can bound the inner sum as follows
\begin{align*}
\Big|\sum_{g,g'}\alpha_g(t)[\alpha_{g'}(t')-\alpha_{g'}(t)]K_{gg'}^{\text{JKD}}(t,t')\Big|
&\leq  \sum_{g,g'}\alpha_g(t)|\alpha_{g'}(t')-\alpha_{g'}(t)|\\
&\le  \|\boldsymbol{\alpha}(t')-\boldsymbol{\alpha}(t)\|_1.
\end{align*}

From the Lipschitz condition and noting that $k_{\mathcal{T}}(t,t') \leq |k_{\mathcal{T}}(t,t')|$:
\[
k_{\mathcal{T}}(t,t') \cdot \|\boldsymbol{\alpha}(t')-\boldsymbol{\alpha}(t)\|_1 \leq |k_{\mathcal{T}}(t,t')| \cdot \|\boldsymbol{\alpha}(t')-\boldsymbol{\alpha}(t)\|_1 \leq \epsilon.
\]

As a result, the following holds
\begin{align*}
|\mathrm{JKD}(P_{\boldsymbol{\alpha}},Q) - \mathrm{JKD}_{\text{approx}}(P_{\boldsymbol{\alpha}},Q)|
&\leq \mathbb{E}_{t,t'}[|k_{\mathcal{T}}(t,t')|  \cdot \|\boldsymbol{\alpha}(t')-\boldsymbol{\alpha}(t)\|_1]\\
&\leq \mathbb{E}_{t,t'}[\epsilon ]\\
&= \epsilon .
\end{align*}
\end{proof}

Therefore, to formulte the UCB formulation of Mixture-DAK-UCB, at each round $i$ for prompt $t_i$, the learner maintains UCB predictors for
\begin{align*}
\widehat{s}^{\mathrm{UCB}}(t_i) \in \mathbb{R}^G, \qquad
\widehat{M}^{\mathrm{UCB}}(t_i) &\in \mathbb{R}^{G\times G}, \;\; \widehat{M}^{\mathrm{UCB}}(t_i)\succeq 0,
\end{align*}
where $\widehat{s}^{\mathrm{UCB}}(t_i)$ collects fidelity UCB scores for each model
and $\widehat{M}^{\mathrm{UCB}}(t_i)$ is a PSD estimate of the cross-model diversity
matrix $M(t_i)$. The per-prompt mixture decision then follows the concave quadratic program
\begin{equation}
\label{eq:mixture-ucb-obj}
\boldsymbol{\alpha}_i^*
=\arg\max_{\boldsymbol{\alpha}\in\Delta_G}
\left\{ \langle \boldsymbol{\alpha}, \widehat{s}^{\mathrm{UCB}}(t_i)\rangle
-\lambda\, \boldsymbol{\alpha}^\top \widehat{M}^{\mathrm{UCB}}(t_i)\,\boldsymbol{\alpha}\right\}.
\end{equation}
This UCB objective ensures optimism for fidelity while pessimistically accounting for diversity penalties.
The chosen $\boldsymbol{\alpha}_i^*$ specifies a sampling distribution over models, from which the algorithm
draws $g_i\sim\mathrm{Multinomial}(\boldsymbol{\alpha}_i^*)$ and obtains $x_i\sim P_{g_i}(\cdot|t_i)$. The resulting procedure, which extends the single-model DAK-UCB to mixture assignments,
is summarized in Algorithm~\ref{alg:mdakucb}.

\begin{algorithm}[t]
\caption{Mixture Diversity-Aware Kernelized UCB (\textsc{Mixture-DAK-UCB})}
\label{alg:mdakucb}
\KwIn{$G$ models; horizon $T$; prompt distribution $\mathcal{P}$; trade-off $\lambda$; diversity primitive $\psi\in\{$I\!-\!JRKE, JKD$\}$; (optional) panel rate $\rho\in[0,1]$}
\KwOut{$T$ generated outputs}
Initialize per-model KRR for fidelity $\{s_g\}_{g=1}^G$ and matrix-valued KRR for diversity $\{M_{gg'}\}_{g,g'}$\;
\For{$i=1$ \KwTo $T$}{
  Sample prompt $t_i \sim \mathcal{P}$\;
  For all $g$: $(\widehat{s}_g(t_i),\widehat{\sigma}_g^{(s)}(t_i)) \gets \textsc{KRR-Predict}_s(g,\,t_i)$\;
  For all $(g,g')$: $(\widehat{M}_{gg'}(p_t),\widehat{\sigma}_{gg'}^{(M)}(t_i)) \gets \textsc{KRR-Predict}_M(g,g',\,t_i)$\;
  Set $\widehat{s}^{\mathrm{UCB}}_g(t_i) \gets \widehat{s}_g(t_i)+\beta^{(s)}\widehat{\sigma}_g^{(s)}(t_i)$ for all $g$\;
  Set $\widehat{M}^{\mathrm{LCB}}_{gg'}(t_i) \gets \widehat{M}_{gg'}(t_i)-\beta^{(M)}\widehat{\sigma}_{gg'}^{(M)}(t_i)$; project to PSD if needed\;
  $\displaystyle \boldsymbol{\alpha}_i \gets \arg\max_{\alpha\in\Delta_G}\ \alpha^\top \widehat{\boldsymbol{s}}^{\mathrm{UCB}}(t_i)\ -\ \lambda\,\alpha^\top \widehat{M}^{\mathrm{LCB}}(t_i)\,\alpha$\;
  Sample $g_i \sim \boldsymbol{\alpha}_i$; draw $x_i \sim P_{g_i}(\cdot\mid t_i)$\;
  $y_i^{(s)} \gets \phi_{\mathrm{fid}}(t_i,x_i)$\;
  \If{panel step with prob.\ $\rho$}{
     \For{$g=1$ \KwTo $G$}{draw $x_i^{(g)} \sim P_g(\cdot\mid t_i)$ (reuse $x_i^{(g_i)}\!=\!x_i$)}
  }
  \textsc{KRR-Update}$_s(g_i;\ (t_i,y_i^{(s)}))$\;
  \eIf{panel step}{
    Update $\{M_{gg'}\}$ with cross-kernel labels built from $\{x_i^{(g)}\}_{g=1}^G$\;
  }{
    Update $\{M_{gg'}\}$ using the available pairs\;
  }
}
\end{algorithm}

\section{Regret Analysis of Sup-DAK-UCB (Phased Variant of DAK-UCB)}
\label{app:regret-dakucb}

As noted in the literature, the theoretical analysis of the standard kernelized UCB method faces the challenge of potentially statistically correlated model selection at different rounds, which renders standard concentration analysis by means of independent observations inapplicable. To circumvent this challenge, we adopt the standard approach of analyzing a staged variant of the proposed DAK-UCB algorithm, which we call Sup-DAK-UCB. The same technique of analyzing Sup-Kerenlzied-UCB and Sup-PAK-UCB have been applied in the related works \citep{pmlr-v15-chu11a,valko2013finitetimeanalysiskernelisedcontextual,hu2025online}.  

In the phased variant of \textsc{Sup-DAK-UCB}, within each
arm--stage--target triple, the data used by kernel ridge regression (KRR) are independent.
This enables the analytical derivations of confidence bounds and a proper regret decomposition, similar to the analysis in \citep{valko2013finitetimeanalysiskernelisedcontextual,hu2025online}. In the following, we first state
the updated setting and assumptions in our theoretical analysis, then present the phased algorithmic structure, followed by the theorems and their proofs.

\subsection{Assumptions in Theoretical Analysis of DAK-UCB}

Let $\mathcal G=\{1,\dots,G\}$ be $G$ generative models, $\mathcal T$ a prompt space with i.i.d.\ prompts $t_i\sim P_T$,
and $\mathcal X$ the output space. At round $t$, the algorithm selects
a single model $g_i$. Note that the objective function is the following for a parameter $\lambda\ge 0$: $J_g(t):=s_g(t)-\lambda D_g(t)$ 

\begin{assumption}[Normalized Prompt and Data Kernel Functions]
\label{ass:kernels-app}
The prompt kernel $k_{\mathcal T}:\mathcal T\times\mathcal T\to[-1,1]$ and the output
kernel $k_{\mathcal X}:\mathcal X\times\mathcal X\to[-1,1]$ are positive definite with
$k_{\mathcal T}(t,t)=k_{\mathcal X}(x,x)=1$ for all $t\in\mathcal T$, $x\in\mathcal X$.
For $g,g'\in\mathcal G$, define
\[
K_{gg'}(p,p'):=\mathbb E_{x\sim P_g(\cdot|p),\,x'\sim P_{g'}(\cdot|p')}\!\big[k_{\mathcal X}(x,x')^2\big]\in[0,1].
\]
\end{assumption}

\begin{assumption}[Sub-Gaussian noise in kernel regression]
\label{ass:noise-app}
All scalar observations are conditionally $\sigma$-sub-Gaussian given the history:
$\mathbb E[\exp(\lambda\varepsilon)\mid\mathcal H_i]\le \exp(\lambda^2\sigma^2/2)$ for all $\lambda\in\mathbb R$.
\end{assumption}


\begin{assumption}[RKHS boundedness for single-model case]
\label{ass:single-rkhs-app}
Let $\mathcal H_{\mathcal T}$ be the RKHS of $k_{\mathcal T}$. Assume $s_g\in\mathcal H_{\mathcal T}$ with $\|s_g\|_{\mathcal H_{\mathcal T}}\le B_s$ and
$D_g\in\mathcal H_{\mathcal T}$ with $\|D_g\|_{\mathcal H_{\mathcal T}}\le B_D$
for all $g\in\mathcal G$.
\end{assumption}

\subsection{Introducing Phased Sup-DAK-UCB Algorithm}

As mentioned earlier, we analyze a staged variant of DAK-UCB, called \emph{Sup-DAK-UCB}, possessing $M=\lceil\log_2 T\rceil$ stages. In Sup-DAK-UCB, for each generative model-stage pair $(g,m)$ and
target type $\tau\in\{s,D\}$, we maintain a frozen index set $\Psi_{g}^{m,(\tau)}$.
To guarantee independence for diversity labels, we additionally maintain a stage snapshot
of the archive of past $(t,x)$ pairs for each arm: at the \emph{beginning} of stage $m$,
we freeze $\mathcal D_{g}^{m}$ and \emph{use this snapshot} to build diversity labels
throughout stage $m$. We note that new pairs collected during stage $m$ are not used to form diversity
labels in the same stage; They will be only available from stage $m+1$ onward.

To explain the steps of Sup-DAK-UCB in Algorithm~\ref{alg:sup-dakucb}, note that at iteration $i$ in stage $m$ with candidate set $\widehat{\mathcal G}^m$, we perform the following.

\begin{enumerate}[leftmargin=*]
\item For each $g\in\widehat{\mathcal G}^m$, we perform KRR predictors
$(\widehat s_{g,i}^m,\widehat\sigma^{m,(s)}_{g,i})$ and $(\widehat D_{g,i}^m,\widehat\sigma^{m,(D)}_{g,i})$
based on $\Psi_{g}^{m,(s)}$ and $\Psi_{g}^{m,(D)}$ respectively.

\item We let $\eta_\sigma:=\sigma\,\sqrt{2\log(2GMT/\delta)}$ and set
\[
\beta^{(s)}:=B_s\sqrt\alpha+\eta_\sigma,\qquad
\beta^{(D)}:=B_D\sqrt\alpha+\eta_\sigma.
\]
We define the \emph{ optimistic} score and the \emph{width} as
\[
\widetilde J_{g,i}^m:=\big(\widehat s_{g,i}^m+\beta^{(s)}\widehat\sigma^{m,(s)}_{g,i}\big)\ -\
\lambda\big(\widehat D_{g,i}^m-\beta^{(D)}\widehat\sigma^{m,(D)}_{g,i}\big),\qquad
w_{g,i}^m:=\beta^{(s)}\widehat\sigma^{m,(s)}_{g,i}+\lambda\beta^{(D)}\widehat\sigma^{m,(D)}_{g,i}.
\]

\item The stage selection rule in Sup-DAK-UCB is as follows:
\begin{itemize}
\item If $\max_{g} w_{g,i}^m\le T^{-1/2}$, we exploit:
$g_i\in\arg\max_{g\in\widehat{\mathcal G}^m}\widetilde J_{g,i}^m$.
\item Else if $\max_{g} w_{g,i}^m\le 2^{1-m}$, we eliminate
$\{g:\ \max_{g'}\widetilde J_{g',i}^m-\widetilde J_{g,i}^m>2^{2-m}\}$ and set $m\leftarrow m+1$.
\item Else (explore), we pick every $g_i$ with $w_{g,i}^m>2^{1-m}$ and append $t$ to
$\Psi_{g_i}^{m,(s)}$ and $\Psi_{g_i}^{m,(D)}$.
\end{itemize}

\item \emph{Feedback:} For single-model selection, we draw $x_i\sim P_{g_i}(\cdot|t_i)$,
observe $y_i^{(s)}=s_{g_i}(t_i)+\varepsilon_i^{(s)}$, and build a \emph{stage-frozen}, unbiased,
bounded diversity statistic using $\mathcal D_{g_i}^{m}$:
\[
\widehat d_i=\frac{1}{\max\{1,|\mathcal D_{g_i}^{m}|\}} \sum_{(t',x')\in \mathcal D_{g_i}^{m}}
k_{\mathcal T}(t_i,t')^2\,k_{\mathcal X}(x_i,x')^2,\qquad
\mathbb E\big[\widehat d_i\mid t_i,\mathcal D_{g_i}^{m}\big]=D_{g_i}(t_i).
\]
We define the zero-mean diversity noise $\varepsilon_i^{(D)}:=\widehat d_i-D_{g_i}(t_i)$.
Finally, we update the archive $\mathcal D_{g_i}\leftarrow\mathcal D_{g_i}\cup\{(t_i,x_i)\}$, which will only be snapped at the next stage.
\end{enumerate}

\begin{algorithm}[t]
\caption{Sup-DAK-UCB}
\label{alg:sup-dakucb}
\KwIn{$G$ models, $T$ rounds, prompt dist.\ $\mathcal{P}$, trade-off $\lambda$, kernels $k_{\mathcal{T}},k_{\mathcal{X}}$, ridge $\alpha$, confidence $\delta$}
\KwOut{$T$ generated outputs}
Set number of stages $M \gets \lceil \log_2 T \rceil$\;
Initialize per-stage sets $\Psi_g^{m,(s)},\Psi_g^{m,(D)} \gets \emptyset$ and archives $\mathcal{D}_g \gets \emptyset$\;
\For{$i=1$ \KwTo $T$}{
  Sample prompt $t_i \sim \mathcal{P}$; set $m \gets 1$, $\widehat{\mathcal{G}}^1 \gets [G]$\;
  Freeze stage snapshots $\mathcal{D}_g^m \gets \mathcal{D}_g$ for all $g$\;
  \Repeat{}{
    \For{$g \in \widehat{\mathcal{G}}^m$}{
      Compute fidelity and diversity predictions by KRR: $(\hat{s}_g^m,\sigma_g^{m,(s)})$, $(\hat{D}_g^m,\sigma_g^{m,(D)})$\;
      Form optimistic score $\widetilde J_g^m$ and width $w_g^m$\;
    }
    \eIf{$\max_g w_g^m \le T^{-1/2}$}{
       $g_i \gets \arg\max_g \widetilde J_g^m$\; \textbf{break}\;
    }{
       \eIf{$\max_g w_g^m \le 2^{1-m}$}{
          Eliminate: $\widehat{\mathcal{G}}^{m+1} \gets \{g: \widetilde J_g^m \ge \max_h \widetilde J_h^m - 2^{2-m}\}$\;
          $m \gets m+1$; freeze new snapshots\;
       }{
          Explore: choose $g_i$ with $w_{g_i}^m > 2^{1-m}$\;
          Append $i$ to $\Psi_{g_i}^{m,(s)},\Psi_{g_i}^{m,(D)}$\; \textbf{break}\;
       }
    }
  }
  Generate $x_i \sim P_{g_i}(\cdot|t_i)$, observe $y_i^{(s)}$ and stage-frozen $y_i^{(D)}$\;
  Update live archive $\mathcal{D}_{g_i} \gets \mathcal{D}_{g_i} \cup \{(t_i,x_i)\}$\;
}
\end{algorithm}

\subsection{KRR Notation and Information Measures}

For an index set $\Psi$, let $\Phi_\Psi=[\phi(t_i)^\top]_{i\in\Psi}$, $K_\Psi=\Phi_\Psi\Phi_\Psi^\top$,
$k_\Psi(t)=[k_{\mathcal T}(t,t_i)]_{i\in\Psi}$, and $A_\Psi:=\Phi_\Psi^\top\Phi_\Psi+\alpha I$.
The KRR predictor and posterior deviation at $t$ are
\[
\widehat\mu(t;\Psi)=k_\Psi(t)^\top(K_\Psi+\alpha I)^{-1}y_\Psi
=\phi(t)^\top A_\Psi^{-1}\Phi_\Psi^\top y_\Psi,\qquad
\widehat\sigma^2(t;\Psi)=\phi(t)^\top A_\Psi^{-1}\phi(t).
\]
We use the shorthand $\eta_\sigma(\delta):=\sigma\sqrt{2\log(2/\delta)}$. Also, we use the following complexity measures:
\[
\gamma(\Psi):=\tfrac12\log\det(I+\alpha^{-1}K_\Psi),\qquad
\Gamma_T:=\max_{\Psi:|\Psi|\le T}\gamma(\Psi).
\]

\subsection{Single-model Selection Sup-DAK-UCB Regret Bounds}

\begin{lemma}
\label{lem:indep-app}
Consider arm $g$, stage $m$, and target $\tau\in\{s,D\}$. Consider the sequence of
time indices $\{i\}$ that get appended to $\Psi_{g}^{m,(\tau)}$ by the stage rule.
Conditional on the \emph{prompt sequence} $\{t_i\}$ and the
\emph{stage-frozen archive snapshot} $\mathcal D_{g}^{m}$ (for $\tau=D$),
the random variables $\{y_i^{(\tau)}\}_{i\in\Psi_{g}^{m,(\tau)}}$ are mutually independent and satisfy
$\mathbb E[y_i^{(\tau)}\mid t_i,\mathcal D_{g}^{m}]=f_g^{(\tau)}(t_i)$, where
$f^{(s)}=s$ and $f^{(D)}=D$.
Moreover, $\varepsilon_i^{(D)}=\widehat d_i-D_{g}(t_i)$ is conditionally $1/2$-sub-Gaussian.
\end{lemma}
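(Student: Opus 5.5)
The plan follows the standard Sup-LinUCB/Sup-Kernelized-UCB independence argument of \citep{pmlr-v15-chu11a,valko2013finitetimeanalysiskernelisedcontextual}, adapted to the stage-frozen diversity labels.

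\textbf{Step 1: membership in $\Psi_g^{m,(\tau)}$ is determined by prompts and earlier index sets, not by labels.} Fix arm $g$, stage $m$ and target $\tau$. Index $i$ is appended to $\Psi_g^{m,(\tau)}$ only through the exploration branch. That branch depends on the widths $w_{h,i}^{m'}$ for $m'\le m$. Each width $\widehat\sigma^{m',(\tau')}_{h,i}$ is a function only of $t_i$ and of the prompts indexed by $\Psi_h^{m',(\tau')}$, since $\widehat\sigma^2(t;\Psi)=\phi(t)^\top A_\Psi^{-1}\phi(t)$ involves no labels.

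The elimination steps at lower stages $m'<m$ do use the means $\widehat s,\widehat D$, and hence labels. However, those labels come from the sets $\Psi^{m',\cdot}$ with $m'<m$, never from $\Psi_g^{m,(\tau)}$ itself. By induction on $i$, I would show the following: the event $\{i\in\Psi_g^{m,(\tau)}\}$ is measurable with respect to the prompt sequence together with the labels collected in stages other than $m$. In particular, it is independent of the labels $y_j^{(\tau)}$ for $j\in\Psi_g^{m,(\tau)}$, $j<i$.

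This is the main obstacle. The lower-stage labels are themselves random and partly drawn from the same generator $P_g$. To make the argument rigorous, I would condition on all prompts and on the lower-stage label sigma-field. I would then note that, given the round index and the prompt, the sample $x_j\sim P_g(\cdot\mid t_j)$ is a fresh draw. So the stage-$m$ labels are conditionally independent across indices, and their conditional law depends only on $(t_j,\mathcal D_g^m)$. This is exactly the formulation the lemma asks for.

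\textbf{Step 2: unbiasedness.} For $\tau=s$ the label is $y_i^{(s)}=s_g(t_i)+\varepsilon_i^{(s)}$, whose mean is $s_g(t_i)$ by definition of $s_g$. For $\tau=D$, $\mathcal D_g^m$ is fixed throughout stage $m$ and contains only pairs collected before the stage began. Hence $x_i$ is independent of the snapshot given $t_i$, and
\[
\mathbb E[\widehat d_i\mid t_i,\mathcal D_g^m]=\frac{1}{|\mathcal D_g^m|}\sum_{(t',x')\in\mathcal D_g^m}k_{\mathcal T}(t_i,t')^2\,\mathbb E_{x\sim P_g(\cdot|t_i)}\bigl[k_{\mathcal X}(x,x')^2\bigr].
\]
This is the per-prompt diversity target $D_g(t_i)$ as instantiated via the history, consistent with the definition of $\phi_g(t,x;\mathcal H_t)$ and Proposition~\ref{thm:decomposition}(a). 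Mutual independence then follows from Step 1: conditional on prompts and the snapshot, each $\widehat d_i$ is a deterministic function of its own fresh $x_i$.

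\textbf{Step 3: sub-Gaussianity.} Under Assumption~\ref{ass:kernels-app}, each summand $k_{\mathcal T}(t_i,t')^2k_{\mathcal X}(x_i,x')^2$ lies in $[0,1]$. Therefore $\widehat d_i\in[0,1]$. Hoeffding's lemma gives that a $[0,1]$-valued variable minus its conditional mean is $(1-0)/2=1/2$-sub-Gaussian. This yields the final claim.
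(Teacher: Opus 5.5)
Your Steps 2 and 3 are essentially the paper's proof. Unbiasedness comes from averaging the fresh draw $x_i\sim P_g(\cdot\mid t_i)$ against the fixed snapshot $\mathcal D_g^m$, independence holds because each label is a function of its own fresh $x_i$ given $t_i$ and $\mathcal D_g^m$, and $1/2$-sub-Gaussianity follows from Hoeffding's lemma since $\widehat d_i,D_g(t_i)\in[0,1]$; so the proposal is correct and follows the paper. Your Step 1 adds care the paper omits entirely (it never discusses how indices enter $\Psi_g^{m,(\tau)}$), but the coupling to guard against is the archive, not the shared generator: your ``in particular'' needs outputs from stage-$m$ rounds to stay out of every snapshot used for labels at stages $m'<m$, otherwise later eliminations, and hence later membership in $\Psi_g^{m,(\tau)}$, depend on earlier stage-$m$ outputs. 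This is what the paper's remark that such pairs become available only ``from stage $m+1$ onward'' intends, but the per-round snapshot refresh in Algorithm~\ref{alg:sup-dakucb} does not literally do this.
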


\begin{proof}

For $\tau=s$, $y_i^{(s)}=s_g(p_i)+\varepsilon_i^{(s)}$ with
$\varepsilon_i^{(s)}$ conditionally independent across $i$ given $(\mathcal F_{i-1},t_i)$ by Assumption~\ref{ass:noise-app}.
Thus, $\{y_i^{(s)}\}_{i\in\Psi_{g}^{m,(s)}}$ are mutually independent given $\{t_i\}$.

For $\tau=D$, by construction we use the \emph{stage-frozen} archive snapshot $\mathcal D_{g}^{m}$. Given $t_i$ and $\mathcal D_{g}^{m}$, we draw a fresh $x_i\sim P_g(\cdot\mid t_i)$ independent of $(x_j)_{j<i}$.
Each summand inside $\widehat d_i$ equals
\[
Z_{i,(t',x')} \ :=\ k_{\mathcal T}(t_i,t')^2\,k_{\mathcal X}(x_i,x')^2\ \in [0,1],
\]
and these are i.i.d.\ across $(t',x')\in\mathcal D_{g}^{m}$ \emph{conditional} on $(t_i,x_i)$.
Hence,
\[
\mathbb E\big[\widehat d_i\mid t_i,\mathcal D_{g}^{m}\big]
=\frac{1}{\max\{1,|\mathcal D_{g}^{m}|\}}\sum_{(t',x')\in\mathcal D_{g}^{m}}
k_{\mathcal T}(t_i,t')^2\ \mathbb E_{x_i\sim P_g(\cdot\mid t_i)}\!\big[k_{\mathcal X}(x_i,x')^2\big]
=D_g(t_i).
\]
Define $\varepsilon_i^{(D)}:=\widehat d_i-D_g(p_i)$.
Since $\widehat d_i\in[0,1]$ and $D_g(t_i)\in[0,1]$, Hoeffding's lemma implies
$\varepsilon_i^{(D)}$ is conditionally $1/2$-sub-Gaussian. The mutual independence of
$\{y_i^{(D)}\}$ over $i\in\Psi_{g}^{m,(D)}$ holds because each $y_i^{(D)}$ is a
function of $(t_i,x_i)$ and the fixed snapshot $\mathcal D_{g}^{m}$, and $(x_i)$ are independent across $i$.
\end{proof}

\begin{lemma}
\label{lem:optimism-app}
Fix $(g,m,\tau)$ and a nonempty index set $\Psi:=\Psi_{g}^{m,(\tau)}$ that satisfies Lemma~\ref{lem:indep-app}.
Let $f:=f_g^{(\tau)}\in\mathcal H_{\mathcal T}$ with $\|f\|_{\mathcal H_{\mathcal T}}\le B_\tau$, and write
$y_\Psi=f(\Phi_\Psi)+\varepsilon_\Psi$, where $\varepsilon_\Psi$ are zero-mean,
conditionally independent and $\sigma$-sub-Gaussian (with $\sigma\!=\! \sigma$ for fidelity and $\sigma\!=\!1/2$ for diversity).
Then, for any $t\in\mathcal T$ and any $\delta'\in(0,1)$, with probability at least $1-\delta'$
\[
\big|\widehat\mu(t;\Psi)-f(t)\big|\ \le\ \underbrace{\eta_\sigma(\delta')\,\widehat\sigma(t;\Psi)}_{\text{noise term}}
\ +\ \underbrace{\sqrt\alpha\,B_\tau\,\widehat\sigma(t;\Psi)}_{\text{shrinkage bias}}\!,
\]
where $\eta_\sigma(\delta')=\sigma\sqrt{2\log(2/\delta')}$.
Equivalently, with $\beta^{(\tau)}:=B_\tau\sqrt\alpha+\eta_\sigma(\delta')$,
\[
\widehat s+\beta^{(s)}\widehat\sigma^{(s)}\ge s,\qquad
\widehat D-\beta^{(D)}\widehat\sigma^{(D)}\le D.
\]
\end{lemma}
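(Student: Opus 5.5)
The plan follows the standard Valko et al.\ argument for a fixed design, adapted to the notation of the KRR subsection. Write $f(t)=\phi(t)^\top\theta$ with $\|\theta\|\le B_\tau$, and set $A:=A_\Psi=\Phi_\Psi^\top\Phi_\Psi+\alpha I$.

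The first step is the standard bias--noise decomposition of the KRR predictor:
\[
\widehat\mu(t;\Psi)-f(t)=\phi(t)^\top A^{-1}\Phi_\Psi^\top(\Phi_\Psi\theta+\varepsilon_\Psi)-\phi(t)^\top\theta
=-\alpha\,\phi(t)^\top A^{-1}\theta+\phi(t)^\top A^{-1}\Phi_\Psi^\top\varepsilon_\Psi .
\]
This uses the identity $A^{-1}\Phi_\Psi^\top\Phi_\Psi=I-\alpha A^{-1}$.

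The shrinkage-bias term is bounded deterministically by Cauchy--Schwarz in the $A^{-1}$ geometry:
\[
\alpha|\phi(t)^\top A^{-1}\theta|\le\alpha\sqrt{\phi(t)^\top A^{-1}\phi(t)}\sqrt{\theta^\top A^{-1}\theta}.
\]
Since $A\succeq\alpha I$, we have $\theta^\top A^{-1}\theta\le B_\tau^2/\alpha$. The term is therefore at most $\sqrt\alpha\,B_\tau\,\widehat\sigma(t;\Psi)$.

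For the noise term, the key point is that $\Psi$ and the prompts are fixed under the conditioning of Lemma~\ref{lem:indep-app}. This is exactly why Sup-DAK-UCB freezes the index sets and the archive snapshot. The term $\phi(t)^\top A^{-1}\Phi_\Psi^\top\varepsilon_\Psi=\sum_{i\in\Psi}a_i\varepsilon_i$ is then a fixed linear combination of independent zero-mean $\sigma$-sub-Gaussian variables. It is therefore $\sigma\|a\|_2$-sub-Gaussian, and a two-sided Chernoff bound gives $|\sum a_i\varepsilon_i|\le\sigma\|a\|_2\sqrt{2\log(2/\delta')}$ with probability $1-\delta'$. It remains to bound the norm:
\[
\|a\|_2^2=\phi(t)^\top A^{-1}\Phi_\Psi^\top\Phi_\Psi A^{-1}\phi(t)\le\phi(t)^\top A^{-1}\phi(t)=\widehat\sigma^2(t;\Psi),
\]
because $\Phi_\Psi^\top\Phi_\Psi\preceq A$. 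Combining the two bounds gives the displayed inequality. The one-sided forms follow by choosing the appropriate side of $|\cdot|$: the upper bound for $s$ and the lower bound for $D$.

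The main subtlety is justifying the independence and sub-Gaussianity of $\varepsilon_\Psi$ given the design. For fidelity this is Assumption~\ref{ass:noise-app}. For diversity I take it from Lemma~\ref{lem:indep-app}, with $\sigma=1/2$ by Hoeffding since the labels lie in $[0,1]$. I also check that selection into $\Psi$ depends only on prompts and past widths, not on the current labels. If the feature map is infinite-dimensional, the same algebra goes through in $\mathcal H_{\mathcal T}$ with operators in place of matrices, via the push-through identity matching the kernel form of $\widehat\mu$.
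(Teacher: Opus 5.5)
Your proposal is correct and follows the paper's proof essentially step for step: the same bias--noise split via $A^{-1}\Phi_\Psi^\top\Phi_\Psi=I-\alpha A^{-1}$, the same Cauchy--Schwarz bound $\sqrt\alpha\,B_\tau\,\widehat\sigma(t;\Psi)$ on the shrinkage term using $A\succeq\alpha I$, and the same sub-Gaussian tail for the linear form $a^\top\varepsilon_\Psi$ with $\|a\|_2\le\widehat\sigma(t;\Psi)$ obtained from $\Phi_\Psi^\top\Phi_\Psi\preceq A$. The one-sided statements for $s$ and $D$ then follow from the two-sided bound exactly as you describe.
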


\begin{proof}
Let $A=\Phi_\Psi^\top\Phi_\Psi+\alpha I$. Using the identity
$\Phi_\Psi^\top(K_\Psi+\alpha I)^{-1}=(\Phi_\Psi^\top\Phi_\Psi+\alpha I)^{-1}\Phi_\Psi^\top=A^{-1}\Phi_\Psi^\top$,
we can write
\[
\widehat\mu(t;\Psi)-f(t)
=\phi(t)^\top A^{-1}\Phi_\Psi^\top\big(f(\Phi_\Psi)+\varepsilon_\Psi\big)-\phi(t)^\top w_f
\]
where $w_f\in\mathcal H_{\mathcal T}$ is the (unique) RKHS representer of $f$, i.e.\ $f(\cdot)=\langle w_f,\phi(\cdot)\rangle$,
with $\|w_f\|_{\mathcal H_{\mathcal T}}=\|f\|_{\mathcal H_{\mathcal T}}\le B_\tau$.
Inserting and subtracting $\phi(t)^\top A^{-1} \Phi_\Psi^\top \Phi_\Psi w_f$ gives
\begin{align*}
\widehat\mu(t;\Psi)-f(t)
&=\underbrace{\phi(t)^\top A^{-1}\Phi_\Psi^\top\varepsilon_\Psi}_{\text{Term (I)}}
\;+\;\underbrace{\phi(t)^\top\!\Big(A^{-1}\Phi_\Psi^\top\Phi_\Psi-I\Big)w_f}_{\text{Term (II)}}.
\end{align*}
\textbf{Bounding Term (I):} Let $a:=\Phi_\Psi A^{-1}\phi(t)\in\mathbb R^{|\Psi|}$.
By independence and sub-Gaussianity of $\varepsilon_\Psi$ and the standard bound for linear
forms of sub-Gaussian vectors, for any $\delta'\in(0,1)$,
\[
\mathbb{P}\Bigl(\big|a^\top \varepsilon_\Psi\big| \le \eta_\sigma(\delta')\,\|a\|_2\Bigr) \ge 1-\delta' , \quad \eta_\sigma(\delta')=\sigma\sqrt{2\log(2/\delta')}.
\]
We bound $\|a\|_2$ by noting that
\[
\|a\|_2^2=\phi(t)^\top A^{-1}\Phi_\Psi^\top\Phi_\Psi A^{-1}\phi(t)
\le \phi(t)^\top A^{-1}\phi(t)=\widehat\sigma^2(t;\Psi),
\]
since $\Phi_\Psi^\top\Phi_\Psi\preceq A$ implies $A^{-1}\Phi_\Psi^\top\Phi_\Psi A^{-1}\preceq A^{-1}$.
Hence, with probability at least \ $1-\delta'$,
\[
\big|\phi(t)^\top A^{-1}\Phi_\Psi^\top\varepsilon_\Psi\big|\, \le  \,\eta_\sigma(\delta')\,\widehat\sigma(t;\Psi).
\]

\textbf{Bounding Term (II):} Since $A= \Phi_\Psi^\top\Phi_\Psi+\alpha I$, we have
$
A^{-1}\Phi_\Psi^\top\Phi_\Psi-I = -\alpha A^{-1}$. $\text{(II)}$ becomes equal to
$-\alpha\,\phi(t)^\top A^{-1}w_f$. 
By Cauchy--Schwarz, we can write
\[
|\phi(t)^\top A^{-1}w_f|
\le \big\|A^{-1/2}\phi(t)\big\|_2\cdot \big\|A^{-1/2}w_f\big\|_2
\le \underbrace{\sqrt{\phi(t)^\top A^{-1}\phi(t)}}_{=\ \widehat\sigma(t;\Psi)}\cdot \underbrace{\|A^{-1/2}\|}_{\le\ \alpha^{-1/2}} \cdot \|w_f\|.
\]
Thus $\bigl|\phi(t)^\top\!\Big(A^{-1}\Phi_\Psi^\top\Phi_\Psi-I\Big)w_f\bigr|\le \sqrt\alpha\,\|w_f\|\,\widehat\sigma(t;\Psi)\le \sqrt\alpha\,B_\tau\,\widehat\sigma(t;\Psi)$.
Combining (I) and (II) completes the proof.
\end{proof}

\begin{lemma}
\label{lem:stage-invariants-app}
Fix any round $i$ and stage $m$.
Suppose the confidence radii in Lemma~\ref{lem:optimism-app} hold for all arms in $\widehat{\mathcal G}^m$
(with the choice $\delta'=\delta/(GMT)$ and a union bound across $i,g,m$).
Then the following hold simultaneously with probability at least $1-\delta$:
\begin{enumerate}[leftmargin=*]
\item 
$|\widehat s_{g,i}^m - s_g(t_i)|\le \beta^{(s)}\widehat\sigma^{(s)}_{g,i}$ and
$|\widehat D_{g,i}^m - D_g(t_i)|\le \beta^{(D)}\widehat\sigma^{(D)}_{g,i}$ for each $g\in\widehat{\mathcal G}^m$.
\item  The true per-prompt maximizer $g^*_i\in\widehat{\mathcal G}^m$ after any elimination step.
\item  For every $g\in\widehat{\mathcal G}^m$, $J_{g^*_i}(t_i)-J_g(t_i)\le 2^{3-m}$.

\end{enumerate}
\end{lemma}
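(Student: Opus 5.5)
The three items are proved in order, on a single good event $\mathcal E$ on which every confidence interval needed by the algorithm holds.

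\textbf{Item 1.} For each round $i$, stage $m$, arm $g$ and target $\tau\in\{s,D\}$:
\begin{itemize}
\item Lemma~\ref{lem:indep-app} guarantees that the labels indexed by $\Psi_g^{m,(\tau)}$ are conditionally independent, with means $f_g^{(\tau)}(t_i)$ and sub-Gaussian noise.
\item Lemma~\ref{lem:optimism-app} can therefore be applied at the query prompt $t_i$ with $\delta'=\delta/(2GMT)$; the factor $2$ covers the two targets.
\item A union bound over the at most $GMT$ triples $(i,g,m)$ and both targets gives $\mathbb P(\mathcal E)\ge 1-\delta$. This is consistent with $\eta_\sigma=\sigma\sqrt{2\log(2GMT/\delta)}$ up to the constant inside the logarithm, which I would absorb by adjusting it.
\item On $\mathcal E$ the two-sided bounds of item~1 hold directly.
\end{itemize}
Writing $J_g=s_g-\lambda D_g$, it follows that for every active $g$
\[
J_g(t_i)\le \widetilde J^m_{g,i}\le J_g(t_i)+2w^m_{g,i}.
\]
This sandwich is the only consequence of item~1 used afterwards.

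\textbf{Item 2} is proved by induction on $m$. At $m=1$ it is trivial, since $\widehat{\mathcal G}^1=[G]$. Suppose $g_i^*\in\widehat{\mathcal G}^m$ and elimination occurs at stage $m$, so every active width satisfies $w^m_{g,i}\le 2^{1-m}$. Let $h$ be the maximizer of $\widetilde J^m_{\cdot,i}$. Then
\[
\widetilde J^m_{h,i}-\widetilde J^m_{g_i^*,i}\le J_h(t_i)+2w^m_{h,i}-J_{g_i^*}(t_i)\le 2\cdot 2^{1-m}=2^{2-m},
\]
using $J_h\le J_{g^*_i}$. Hence $g_i^*$ passes the threshold $2^{2-m}$ and survives into $\widehat{\mathcal G}^{m+1}$.

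\textbf{Item 3} is also proved by induction on $m$. For $m=1$, the bound $J_{g^*}-J_g\le 2^{2}=4$ follows from boundedness:
\begin{itemize}
\item $s_g$ should be bounded; with normalized kernels and bounded RKHS norm, $|s_g|\le B_s$.
\item $D_g\in[0,1]$ by Assumption~\ref{ass:kernels-app}.
\end{itemize}
So this step needs $\sup|J|\le 2$ after normalization. This is the weakest point: I would state it explicitly as a normalization convention, $|s_g|\le 1$ and $\lambda\le 1$, or else replace the base case by $2^{3-m}\cdot C$.

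For the step to stage $m+1$, any surviving $g$ satisfies $\widetilde J^m_{g,i}\ge\max_h\widetilde J^m_{h,i}-2^{2-m}\ge \widetilde J^m_{g^*_i,i}-2^{2-m}$. Combining this with the sandwich, and using $w^m_{g,i}\le 2^{1-m}$, gives
\[
J_{g^*_i}(t_i)-J_g(t_i)\le \widetilde J^m_{g^*_i,i}-\widetilde J^m_{g,i}+2w^m_{g,i}\le 2^{2-m}+2^{2-m}=2^{3-m}=2^{4-(m+1)}.
\]
This is within a factor $2$ of the claimed $2^{3-(m+1)}$. I expect matching the exact constant to be the main obstacle. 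I would first try to sharpen the sandwich by using the one-sided optimism $J_{g^*}\le\widetilde J_{g^*}$ on one side and $\widetilde J_g-J_g\le 2w_g$ on the other. If that does not close the gap, I would note that the intended reading, as in \citep{valko2013finitetimeanalysiskernelisedcontextual}, is the bound $2^{3-m}$ for arms that reach stage $m$ via elimination at stage $m-1$. Each elimination at stage $m'$ certifies a gap of at most $2^{3-m'}=2^{4-(m'+1)}$, and I would reindex so that the stage-$m$ guarantee reads $2^{3-m}$ under the paper's convention that the certificate carries the index of the stage at which the elimination was performed.
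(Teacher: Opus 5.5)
Your route is the paper's: item 1 comes from Lemma~\ref{lem:optimism-app} and a union bound (your extra factor $2$ for the two targets is a harmless refinement). Items 2 and 3 come from the sandwich $J_g(t_i)\le\widetilde J^m_{g,i}\le J_g(t_i)+2w^m_{g,i}$ under the trigger $\max_g w^m_{g,i}\le 2^{1-m}$, combined with the elimination threshold $2^{2-m}$. On item 2 your version is the sound one. The paper only derives the two-sided tolerance $\widetilde J^m_{g,i}-\widetilde J^m_{g^*_i,i}\le 2^{3-m}$, which exceeds the threshold, and then simply asserts that $g^*_i$ meets the one-sided bound. Your estimate $\widetilde J^m_{h,i}-\widetilde J^m_{g^*_i,i}\le 2w^m_{h,i}\le 2^{2-m}$ uses exact optimism $J_{g^*_i}(t_i)\le\widetilde J^m_{g^*_i,i}$ and $J_h(t_i)\le J_{g^*_i}(t_i)$, and it is what actually proves survival.

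On item 3, the paper's proof is exactly your reindexed reading. It shows that any arm surviving the elimination performed at stage $m$ has gap at most $2^{2-m}+2^{2-m}=2^{3-m}$, and it has no base case. Its displayed chain starts from the loose bound $J_{g^*_i}\le\widetilde J^m_{g^*_i,i}+2^{2-m}$, so it only closes if one uses exact optimism for $g^*_i$ together with the survival condition, as you do.

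So drop the plan to sharpen the sandwich. Your chain already uses one-sided optimism, and the factor $2$ cannot be recovered from item 1 and the elimination rule. Consider stage $m-1$. Take an arm $g$ with $\widetilde J^{m-1}_{g,i}=\max_h\widetilde J^{m-1}_{h,i}-2^{3-m}$, width $2^{2-m}$, and true $s_g,D_g$ at the unfavorable ends of its confidence intervals, so that $J_g=\widetilde J^{m-1}_{g,i}-2w^{m-1}_{g,i}$. Pit it against a $g^*_i$ that attains $\max_h\widetilde J^{m-1}_{h,i}$ with width tending to $0$. This configuration is consistent with item 1, $g$ survives into $\widehat{\mathcal G}^m$, and $J_{g^*_i}(t_i)-J_g(t_i)\to 2^{4-m}$.

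Your concern about $m=1$ is also justified. Assumptions~\ref{ass:kernels-app}--\ref{ass:single-rkhs-app} give only $|s_g|\le B_s$ and $D_g\in[0,1]$, so on $\widehat{\mathcal G}^1$ the gap is bounded by $2B_s+\lambda$, not by $4$. The paper never addresses this case. With the literal indexing, the provable statement is $2^{4-m}$ on $\widehat{\mathcal G}^m$ for $m\ge 2$, plus a normalization (or the constant $2B_s+\lambda$) at $m=1$. Either version changes only constants in Theorem~\ref{thm:dakucb-app}, which applies item 3 to the arms explored at stage $m$.
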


\begin{proof}
Item 1 follows directly from the application of Lemma~\ref{lem:optimism-app} to $s$ and $D$ with a union bound over $i,g,m$.

Regarding Items 2 and 3, consider a step in which the algorithm chooses either to exploit or eliminate.
Throughout this step, by definition of $w_{g,i}^m$ and the trigger
$\max_{g\in\widehat{\mathcal G}^m} w_{g,i}^m\le 2^{1-m}$, we have for every $g\in\widehat{\mathcal G}^m$,
\begin{equation}
\label{eq:width-bound}
\beta^{(s)}\widehat\sigma_{g,i}^{m,(s)}\le 2^{1-m},\qquad
\lambda\,\beta^{(D)}\widehat\sigma_{g,i}^{m,(D)}\le 2^{1-m}.
\end{equation}
Using Lemma~\ref{lem:optimism-app}, we have the following to hold for every component:
\[
s_g(t_i)\le \widehat s_{g,i}^m+\beta^{(s)}\widehat\sigma_{g,i}^{m,(s)},\qquad
D_g(t_i)\ge \widehat D_{g,i}^m-\beta^{(D)}\widehat\sigma_{g,i}^{m,(D)}.
\]
Therefore,
\[
J_g(t_i)=s_g(t_i)-\lambda D_g(t_i)
\le \underbrace{\widehat s_{g,i}^m+\beta^{(s)}\widehat\sigma_{g,i}^{m,(s)}}_{\le\ \widehat s_{g,i}^m+2^{1-m}}
- \lambda\underbrace{\big(\widehat D_{g,i}^m-\beta^{(D)}\widehat\sigma_{g,i}^{m,(D)}\big)}_{\ge\ \widehat D_{g,i}^m-2^{1-m}}
=\widetilde J_{g,i}^m + 2^{2-m}.
\]
Similarly,
\[
J_g(t_i)\ge \widetilde J_{g,i}^m - 2^{2-m}.
\]
In particular, for the optimal arm $g^*_i$ and any $g\in\widehat{\mathcal G}^m$,
\[
\widetilde J_{g^*_i,i}^m - \widetilde J_{g,i}^m
\ge J_{g^*_i}(t_i)-J_g(t_i) - 2^{2-m}-2^{2-m} \ge -2^{3-m}.
\]
Hence $\widetilde J_{g^*_i,i}^m$ is at most $2^{3-m}$ below any $\widetilde J_{g,i}^m$.
During elimination, the algorithm removes only arms whose $\widetilde J$ is more than $2^{2-m}$
below the current maximum. Since $2^{3-m}> 2^{2-m}$ is the \emph{two-sided} tolerance and
$g^*_i$ attains the \emph{one-sided} tolerance bound, $g^*_i$ cannot be eliminated:
this proves (2). Finally, any survivor $g$ satisfies
\[
J_{g^*_i}(t_i)-J_g(t_i)
\le \big(\widetilde J_{g^*_i,i}^m+2^{2-m}\big)-\big(\widetilde J_{g,i}^m-2^{2-m}\big)
\le 2^{2-m}+2^{2-m}=2^{3-m},
\]
which proves (3).
\end{proof}

\begin{lemma}
\label{lem:width-sum-app}
Fix $(g,m,\tau)$ and let $(t_j)_{j\in\Psi}$ be the prompts indexed by $\Psi=\Psi_{g}^{m,(\tau)}$ in the sorted order.
Define $A_0:=\alpha I$ and $A_j:=A_{j-1}+\phi(t_j)\phi(t_j)^\top$.
Then, the following holds
\[
\sum_{j\in\Psi}\widehat\sigma^2(t_j;\Psi_{<i})
=\sum_{j\in\Psi}\phi(t_j)^\top A_{j-1}^{-1}\phi(t_j)
\ \le\ 2\,\gamma(\Psi),
\]
Also, the application of the Cauchy-Schwarz inequality implies that
\[
\sum_{j\in\Psi}\widehat\sigma(t_j;\Psi_{<i})
\ \le\ \sqrt{\,|\Psi|\,\sum_{j\in\Psi}\widehat\sigma^2(t_j;\Psi_{<i})\,}
\ \le\ \sqrt{2\,|\Psi|\,\gamma(\Psi)}.
\]
\end{lemma}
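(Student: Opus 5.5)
The plan is to use the standard elliptical-potential argument from linear and kernelized bandits, working in the feature space $\mathcal H_{\mathcal T}$ with the regularized design operators $A_j=A_{j-1}+\phi(t_j)\phi(t_j)^\top$, $A_0=\alpha I$. The first equality is only bookkeeping. By the KRR notation, $\widehat\sigma^2(t_j;\Psi_{<j})=\phi(t_j)^\top A_{\Psi_{<j}}^{-1}\phi(t_j)$. Moreover, $A_{\Psi_{<j}}=\alpha I+\sum_{l<j,\,l\in\Psi}\phi(t_l)\phi(t_l)^\top=A_{j-1}$ once the indices of $\Psi$ are listed in sorted order. I will read the subscript $\Psi_{<i}$ in the statement as the prefix preceding the current element $j$.

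For the inequality, I would first use the matrix determinant lemma:
\[
\det A_j=\det A_{j-1}\,\bigl(1+\phi(t_j)^\top A_{j-1}^{-1}\phi(t_j)\bigr).
\]
In infinite dimensions this is justified either by working with $\det(I+\alpha^{-1}K)$ on the finite Gram side, or by restricting to the span of $\{\phi(t_l)\}$. Telescoping then gives
\[
\sum_{j\in\Psi}\log\bigl(1+\phi(t_j)^\top A_{j-1}^{-1}\phi(t_j)\bigr)=\log\frac{\det A_{|\Psi|}}{\det(\alpha I)}=\log\det(I+\alpha^{-1}K_\Psi)=2\gamma(\Psi).
\]
The last equality uses Sylvester's identity $\det(I+\alpha^{-1}\Phi_\Psi^\top\Phi_\Psi)=\det(I+\alpha^{-1}\Phi_\Psi\Phi_\Psi^\top)$.

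Next I would compare each term $x_j:=\phi(t_j)^\top A_{j-1}^{-1}\phi(t_j)$ with $\log(1+x_j)$. Since $A_{j-1}\succeq\alpha I$ and $k_{\mathcal T}(t,t)=1$ by Assumption~\ref{ass:kernels-app}, we have $x_j\le 1/\alpha$. This is the main obstacle. The inequality $x\le 2\log(1+x)$ holds only for $x\in[0,1]$ (roughly up to $x\approx2.5$), so the clean constant $2$ needs $\alpha\ge1$. I would therefore state this requirement, which is the usual choice $\alpha\ge 1$. If $\alpha<1$ must be allowed, the fallback is $x\le \frac{1/\alpha}{\log(1+1/\alpha)}\log(1+x)$ on $[0,1/\alpha]$, giving the constant $\max\{2,\,\alpha^{-1}/\log(1+\alpha^{-1})\}$ in place of $2$. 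Summing $x_j\le 2\log(1+x_j)$ over $j\in\Psi$ gives the first displayed bound.

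The second display follows directly from Cauchy--Schwarz on the $|\Psi|$ nonnegative terms $\widehat\sigma(t_j;\Psi_{<j})$: $\bigl(\sum_j\widehat\sigma_j\bigr)^2\le|\Psi|\sum_j\widehat\sigma_j^2$. Combining this with the first bound yields $\sqrt{2|\Psi|\gamma(\Psi)}$.
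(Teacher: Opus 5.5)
Your argument follows the paper's proof step for step: the prefix identification $A_{\Psi_{<j}}=A_{j-1}$, the matrix determinant lemma, telescoping to $\log\det(I+\alpha^{-1}K_\Psi)$, the pointwise bound $x\le 2\log(1+x)$ on $[0,1]$, and then Cauchy--Schwarz. You are also right that $\alpha\ge 1$ is needed; the paper uses this in its proof but leaves it out of the statement.

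\textbf{The gap is in the last step.} You correctly showed $\sum_{j}\log(1+x_j)=\log\det(I+\alpha^{-1}K_\Psi)=2\gamma(\Psi)$. Summing $x_j\le 2\log(1+x_j)$ therefore gives $\sum_j x_j\le 2\cdot 2\gamma(\Psi)=4\gamma(\Psi)$, not $2\gamma(\Psi)$; the factor $\tfrac12$ in the definition of $\gamma$ has been dropped. Your fallback constant has the same problem: it yields $2\max\{2,\alpha^{-1}/\log(1+\alpha^{-1})\}\,\gamma(\Psi)$. The paper's own proof makes exactly this slip, writing $2\sum_j\log(1+\widehat\sigma^2)=2\gamma(\Psi)$.

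No sharper scalar inequality can repair this, because the first displayed bound is false. We have $\log(1+x)<x$ for $x>0$, and $x_1=\phi(t_1)^\top A_0^{-1}\phi(t_1)=k_{\mathcal T}(t_1,t_1)/\alpha=1/\alpha>0$. Your own telescoping identity then gives $\sum_j x_j>\sum_j\log(1+x_j)=2\gamma(\Psi)$ for every nonempty $\Psi$. For example, with $|\Psi|=1$ and $\alpha=1$ the claim reduces to $1\le\log 2$.

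What this argument actually proves is $\sum_j\widehat\sigma^2(t_j;\Psi_{<j})\le 4\gamma(\Psi)$ for $\alpha\ge 1$. For general $\alpha$ it gives $\frac{2\alpha^{-1}}{\log(1+\alpha^{-1})}\gamma(\Psi)$, whose constant always exceeds $2$. Correspondingly, the second display becomes $\sum_j\widehat\sigma(t_j;\Psi_{<j})\le\sqrt{4|\Psi|\gamma(\Psi)}$. This changes constants by only a factor $\sqrt2$ in the proof of Theorem~\ref{thm:dakucb-app}, so the $\widetilde O$ regret bound is unaffected. The lemma itself, however, should be stated and proved with the corrected constant.
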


\begin{proof}
For the determinant of matrices, we can write the following:
\[
\det(A_j)=\det(A_{j-1})\cdot\big(1+\phi(t_j)^\top A_{j-1}^{-1}\phi(t_j)\big).
\]
Therefore,
\[
\log\frac{\det(A_j)}{\det(A_{j-1})}=\log\big(1+\widehat\sigma^2(t_j;\Psi_{<i})\big).
\]
Summing the above from $j=1$ to $j= |\Psi|$ shows that
\[
\sum_{j\in\Psi}\log\big(1+\widehat\sigma^2(t_j;\Psi_{<i})\big)
=\log\frac{\det(A_{|\Psi|})}{\det(A_0)}
=\log\det\Big(I+\alpha^{-1}\Phi_\Psi\Phi_\Psi^\top\Big)=2\,\gamma(\Psi).
\]
Since $0\le \widehat\sigma^2(t_j;\Psi_{<i})\le \alpha^{-1}$ (because $A_{i-1}\succeq \alpha I$),
we have $\widehat\sigma^2\le 1$ when $\alpha\ge 1$. Also, note that for every $x\in[0,1]$,
$x\le 2\log(1+x)$ holds, which implies that
$\sum_j \widehat\sigma^2(t_j;\Psi_{<i})\le 2\sum_j \log(1+\widehat\sigma^2(t_j;\Psi_{<i}))=2\gamma(\Psi)$.
Finally, the linear-sum bound follows directly from the application of the Cauchy-Schwarz inequality.
\end{proof}

\begin{theorem}[DAK-UCB regret]
\label{thm:dakucb-app} Define instance-wise regret to be $r_i=J_{g^*_i}(t_i)-J_{g_i}(t_i)$. Under Assumptions~\ref{ass:kernels-app}--\ref{ass:single-rkhs-app} and with
$\beta^{(s)}=B_s\sqrt\alpha+\eta_\sigma$, $\beta^{(D)}=B_D\sqrt\alpha+\eta_\sigma$
where $\eta_\sigma=\sigma\sqrt{2\log(2GMT/\delta)}$,
the phased \textsc{DAK-UCB} satisfies, with probability at least $1-\delta$,
\[
\mathrm{Regret}(T)
\ \le\ \widetilde O\!\Big(\sqrt{G\,T\,\Gamma_T^{(s)}}\Big)
\ +\ \lambda\,\widetilde O\!\Big(\sqrt{G\,T\,\Gamma_T^{(D)}}\Big),
\]
where $\Gamma_T^{(\tau)}:=\max_{g,m}\gamma\big(\Psi_{g}^{m,(\tau)}\big)$.
Equivalently, replacing $\Gamma_T^{(\cdot)}$ by an effective-dimension proxy
$d_{\mathrm{eff}}^{(\cdot)}$, the bound is
$\widetilde O\!\big((1+\sqrt\alpha)\sqrt{d_{\mathrm{eff}}^{(s)} G T}\big)+
\lambda\,\widetilde O\!\big((1+\sqrt\alpha)\sqrt{d_{\mathrm{eff}}^{(D)} G T}\big)$.
\end{theorem}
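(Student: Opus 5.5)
We follow the standard SupLinUCB/Sup-Kernel-UCB analysis \citep{pmlr-v15-chu11a,valko2013finitetimeanalysiskernelisedcontextual} and combine the preceding lemmas. First, we fix the high-probability event $\mathcal E$ of Lemma~\ref{lem:stage-invariants-app}. It is obtained from Lemma~\ref{lem:optimism-app} with $\delta'=\delta/(GMT)$ and a union bound over rounds $i$, arms $g$, stages $m$ and targets $\tau$. The independence of Lemma~\ref{lem:indep-app} is what licenses applying Lemma~\ref{lem:optimism-app} to each frozen index set $\Psi_g^{m,(\tau)}$. The diversity noise is $1/2$-sub-Gaussian, so $\eta_\sigma$ with $\sigma\ge 1/2$ covers both targets. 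On $\mathcal E$ we have $P(\mathcal E)\ge 1-\delta$, and we then partition the rounds $\{1,\dots,T\}$ according to how round $i$ terminated: exploitation at some stage, which we collect in $\Psi_0$, or exploration at stage $m$, which we collect in $\Psi^m:=\bigcup_g \Psi_g^{m,(s)}$.

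\textbf{Exploitation rounds.} Here all widths are at most $T^{-1/2}$. Using the two-sided bound $|J_g(t_i)-\widetilde J^m_{g,i}|\le 2w^m_{g,i}$, the optimism argument in the proof of Lemma~\ref{lem:stage-invariants-app}, and the fact that $g^*_i$ survives (item 2), each such round incurs regret $r_i\le 4T^{-1/2}$. These rounds therefore contribute at most $4\sqrt T$ in total.

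\textbf{Exploration rounds at stage $m$.} Take $m\ge 2$. The round passed the elimination at stage $m-1$, so item 3 of Lemma~\ref{lem:stage-invariants-app} gives $r_i\le 2^{3-(m-1)}=2^{4-m}$. For $m=1$ we use the trivial bound $r_i\le \max J-\min J=O(B_s+\lambda B_D)$ under normalized kernels; alternatively, the count bound below shows that $|\Psi^1|$ is itself $\widetilde O(\sqrt{GT\Gamma})$. On the other side, each explored round has some arm with $w^m_{g_i,i}>2^{1-m}$. This gives
\[
2^{1-m}|\Psi_{g}^{m}| \le \sum_{j\in\Psi_g^{m}}\Big(\beta^{(s)}\widehat\sigma^{(s)}_{g,j}+\lambda\beta^{(D)}\widehat\sigma^{(D)}_{g,j}\Big)\le \beta^{(s)}\sqrt{2|\Psi_g^m|\Gamma^{(s)}_T}+\lambda\beta^{(D)}\sqrt{2|\Psi_g^m|\Gamma^{(D)}_T},
\]
where the second inequality is Lemma~\ref{lem:width-sum-app}. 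The per-index deviations there are computed sequentially on the growing set $\Psi_{<j}$, which is exactly how the exploration set is built. Multiplying by $2^{4-m}/2^{1-m}=8$, the regret over stage-$m$ exploration rounds is at most
\[
8\sum_g\Big(\beta^{(s)}\sqrt{2|\Psi_g^m|\Gamma^{(s)}_T}+\lambda\beta^{(D)}\sqrt{2|\Psi_g^m|\Gamma^{(D)}_T}\Big).
\]
Applying Cauchy--Schwarz over $g$ together with $\sum_g|\Psi_g^m|\le T$ bounds this by
\[
8\Big(\beta^{(s)}\sqrt{2GT\Gamma^{(s)}_T}+\lambda\beta^{(D)}\sqrt{2GT\Gamma^{(D)}_T}\Big).
\]

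\textbf{Summing up.} We sum over the $M=\lceil\log_2T\rceil$ stages and add the exploitation term. Since $\beta^{(\tau)}=B_\tau\sqrt\alpha+\sigma\sqrt{2\log(2GMT/\delta)}$ is polylogarithmic, these factors are absorbed into $\widetilde O$ and give the stated bound. The effective-dimension form follows by replacing $\gamma(\Psi)$ with the standard bound $\widetilde O(d_{\mathrm{eff}})$ and keeping the factor $(1+\sqrt\alpha)$ coming from $\beta$.

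\textbf{Main obstacle.} The delicate step is the per-round width bound. It is needed both to control $J$ within $\pm 2w$ and to ensure that the sum over an explored set uses exactly the sequential deviations $\widehat\sigma(t_j;\Psi_{<j})$ required by Lemma~\ref{lem:width-sum-app}, while item 3 is still applied at stage $m-1$. A related subtlety is that the widths used in the stage rule are computed from the frozen set $\Psi_g^{m}$ at that time, so we must check that the diversity snapshot $\mathcal D_g^m$ does not break the independence needed for the union bound. If it does, we would condition on the snapshot, as Lemma~\ref{lem:indep-app} does, and take the union bound conditionally before integrating.
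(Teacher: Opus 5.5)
Your proposal is correct and follows the same route as the paper's proof. You split rounds into exploitation rounds, each costing $O(T^{-1/2})$ via Lemma~\ref{lem:stage-invariants-app}, and exploration rounds grouped by $(g,m)$; you bound per-round exploration regret by the elimination invariant, bound $|\Psi_g^{m}|$ through the append rule $w_{g,i}^m>2^{1-m}$ and Lemma~\ref{lem:width-sum-app}, and sum over arms and the $M=\lceil\log_2 T\rceil$ stages. Three of your choices are slightly more careful than the paper's direct $2^{3-m}$ bound, though they only change constants: you invoke the stage-$(m-1)$ elimination to get $2^{4-m}$, you treat $m=1$ separately (the trivial per-round bound times the count bound), and you apply Cauchy--Schwarz over arms explicitly.
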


\begin{proof}
We partition rounds into $\mathcal T_0$ (\emph{exploitation}: $\max_g w_{g,i}^{m_i}\le T^{-1/2}$)
and $\mathcal T_1$ (\emph{exploration}: a width value greater than $2^{1-m}$).
On $\mathcal T_0$, by Lemma~\ref{lem:stage-invariants-app} Part 2 and the exploitation trigger,
$r_i\le 2 w_{g_i,i}^{m_i}\le 2T^{-1/2}$, and therefore $\sum_{i\in\mathcal T_0}r_i\le 2\sqrt T$. Subsequently over $\mathcal T_1$, we group considering $(g,m)$  as
\[
\sum_{i\in\mathcal T_1} r_i
\ \le\ \sum_{g=1}^G\sum_{m=1}^M \sum_{i\in\Psi_{g}^{m,(s)}} 2^{3-m}
= \sum_{g,m} 2^{3-m}\,|\Psi_{g}^{m,(s)}|.
\]
For each such $i$, the append rule implies
$w_{g,i}^m>2^{1-m}$, i.e.,
$\beta^{(s)}\widehat\sigma^{(s)}_{g,i}+\lambda\beta^{(D)}\widehat\sigma^{(D)}_{g,i}>2^{1-m}$.
Summing over $t\in\Psi_{g}^{m,(s)}$ and applying Lemma~\ref{lem:width-sum-app} and Cauchy--Schwarz
to the two targets separately yields
\begin{align*}
2^{1-m}\,|\Psi_{g}^{m,(s)}|
\ &\le\ \beta^{(s)}\!\!\sum_{i\in\Psi_{g}^{m,(s)}}\!\widehat\sigma^{(s)}_{g,t}
\ +\ \lambda\beta^{(D)}\!\!\sum_{i\in\Psi_{g}^{m,(D)}}\!\widehat\sigma^{(D)}_{g,i}
\ \\
&\le\ \beta^{(s)}\sqrt{2\,|\Psi_{g}^{m,(s)}|\,\gamma_{g}^{m,(s)}}\ +\ \lambda\beta^{(D)}\sqrt{2\,|\Psi_{g}^{m,(D)}|\,\gamma_{g}^{m,(D)}}.
\end{align*}
In the above, $\gamma_{g}^{m,(\tau)}:=\gamma(\Psi_{g}^{m,(\tau)})$.
Taking the summation  over $(g,m)$ pairs and noting $M=\lceil\log_2 T\rceil$
completes the proof of the regret bound.
\end{proof}

\section{Additional Numerical Results}
\label{sec:additional}

\subsection{DAK-UCB applied to LLM selection problem}

\textbf{DAK-UCB for Diversity-Aware LLM Selection Using Synthetic Prompts:} In this experiment, we asked GPT-4o to provide five words as categories: \textit{temple}, \textit{painting}, \textit{market}, \textit{horse}, and \textit{farm}. We then selected three LLMs: DeepSeek~\citep{deepseek2024r1}, Gemma~\citep{google2024gemma}, and Llama~\citep{meta2024llama3}. At each iteration, a random cluster from these five was selected, and a prompt of the form \textit{``Describe a scene containing a [cluster].''} was designed as the input to the LLM. In Figure~\ref{fig:LLM}, you can see the performance comparison and selection ratios of these models.

\begin{figure}[H]
    \centering
    \includegraphics[width=1\textwidth]{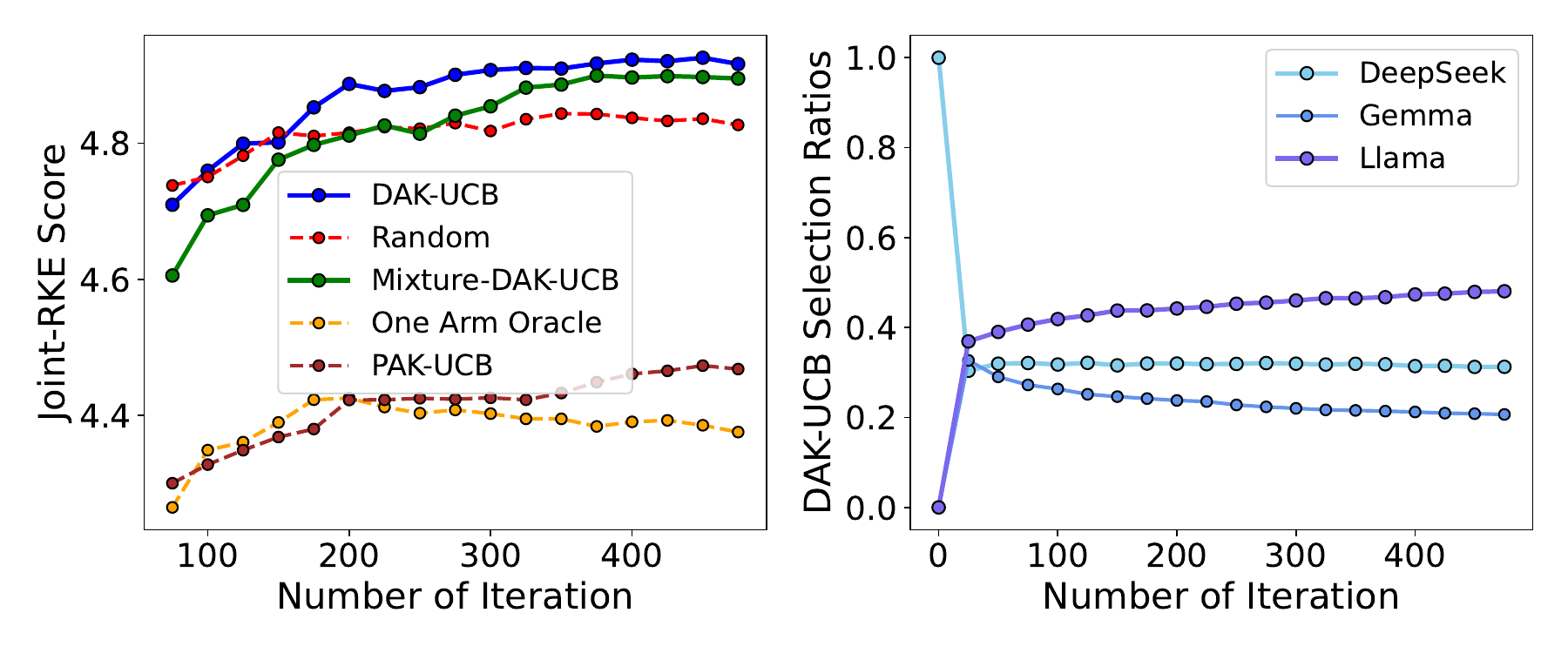}
    \caption{Results are averaged over 10 independent trials.}
    \label{fig:LLM}
\end{figure}

\textbf{Detecting Bias in LLMs Using the I-JRKE Diversity Metric:} In this experiment, we used ~\citep{deepseek2024r1} to generate sentences about cities in the US, Canada, China, and England. One arm was biased toward the capitals of countries, while the other arm was unconstrained. As shown in Figure~\ref{fig:introllm}, our algorithm DAK-UCB preferred the unbiased arm, as it resulted in greater diversity across the arms. The results are reported after 200 iterations.

\begin{figure}[H]
    \centering
    \includegraphics[width=1\textwidth]{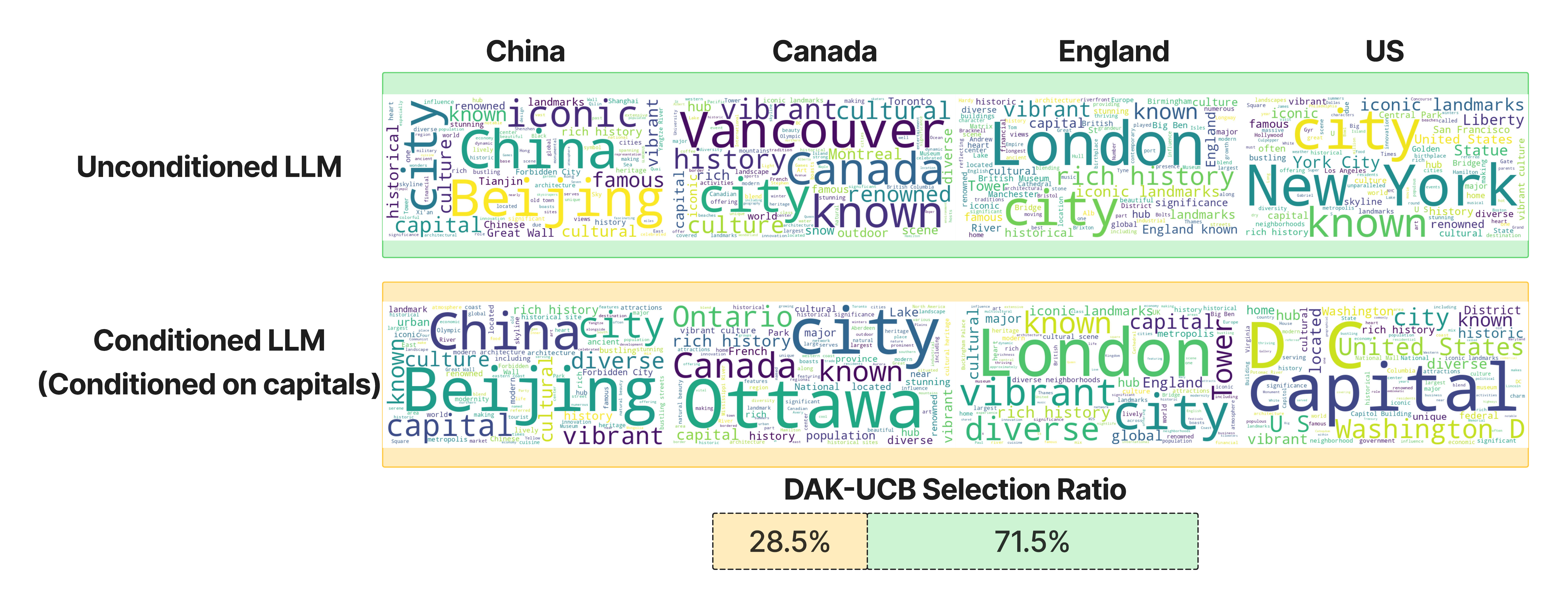}
    \caption{DAK-UCB selection ratio for LLM bias detection experiment.}
    \label{fig:introllm}
\end{figure}

\textbf{Enhancing LLM Diversity via I-JRKE:}
In our experiment, we implemented a four-arm setup using DeepSeek, where each arm was diversity-collapsed by country-specific biasing through modified prompts of the form: ``Describe one of the famous cities in \texttt{[Country]} in no more than 20 words.'' (with Japan, France, Brazil, and Egypt as respective biases). Results from 10 runs demonstrated that mixture effectively enhanced diversity. Scores and selection ratios of each arm are presented in Figure~\ref{fig:results5}. The observed decay in DAK-UCB is explainable: With only one cluster, the algorithm sticks to one arm after some point. This naturally converges to a single-arm oracle scenario.

\begin{figure}[H]
    \centering
    \includegraphics[width=1\textwidth]{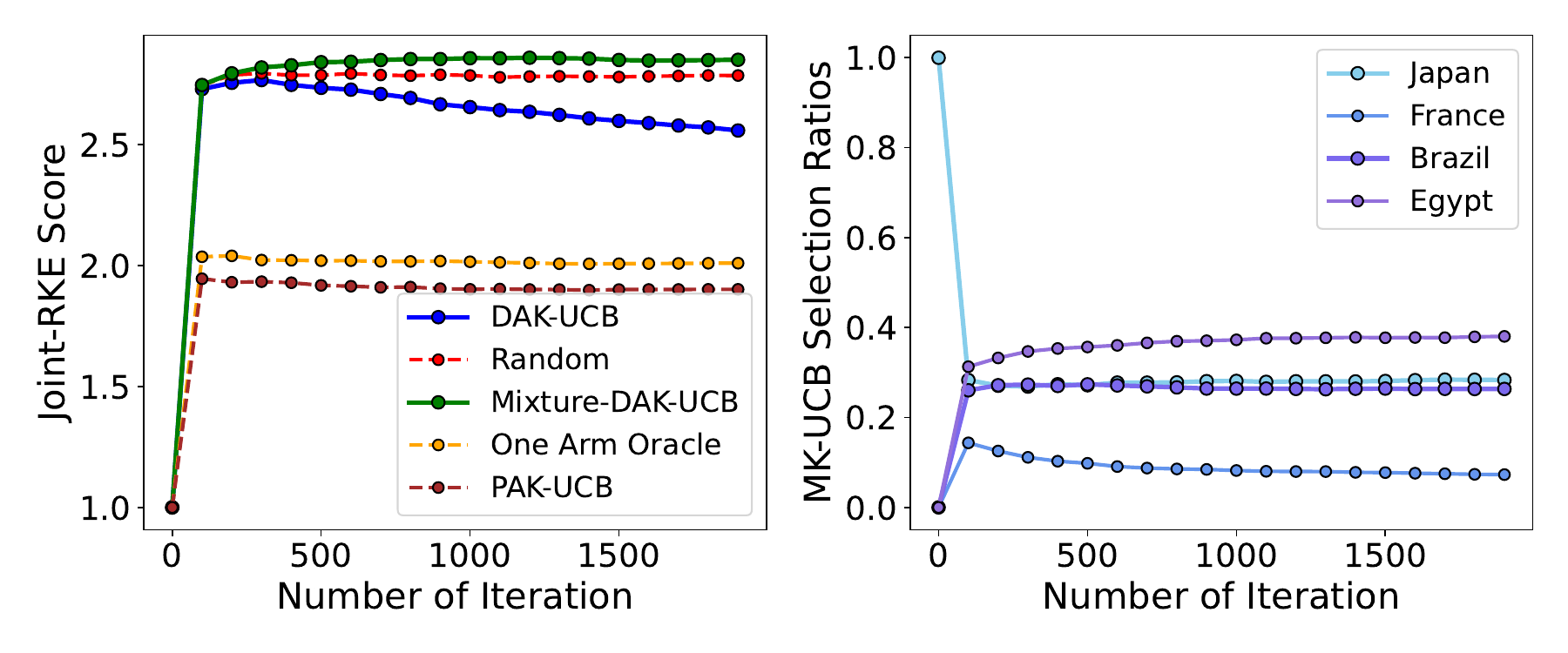} 
    \caption{ Selection Ratios of biased arms and performance comparison on Joint-RKE. }
    \label{fig:results5}
\end{figure}
\begin{figure}[H]
    \centering
    \includegraphics[width=1\textwidth]{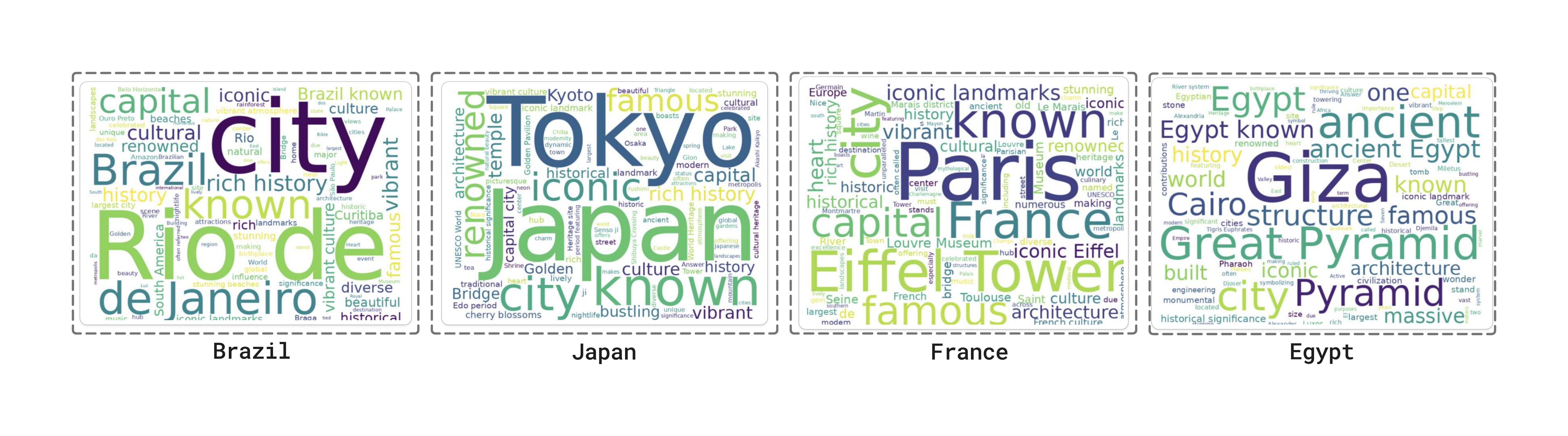} 
    \caption{  Word Clouds of DeepSeek Responses Across Countries  }
\end{figure}

\subsection{DAK-UCB applied to image-captioning model selection task}

\textbf{Improving Correctness in Image Captioning via JKD:}
We evaluated three state-of-the-art image captioning models as our arms: \textsc{Llava}~\citep{llava2023}, \textsc{InstructBLIP}~\citep{instructblip2023}, and \textsc{BLIP-2}~\citep{blip22023}. At each iteration, we sampled an image-caption pair from a thousand MS-COCO instances, distributed across 10 previously mentioned clusters. 
 By minimizing the Joint Kernel Distance (JKD) between generated captions and MS-COCO reference captions, we aimed to enhance captioning correctness through dynamic model selection. 
 Figure~\ref{fig:results6} demonstrates the evolution of KID scores across iterations compared to baseline methods, along with the empirical selection ratios for each captioning model. Figure~\ref{fig:results6vis} visualizes the dataset.

\begin{figure}[H]
    \centering
    \includegraphics[width=1\textwidth]{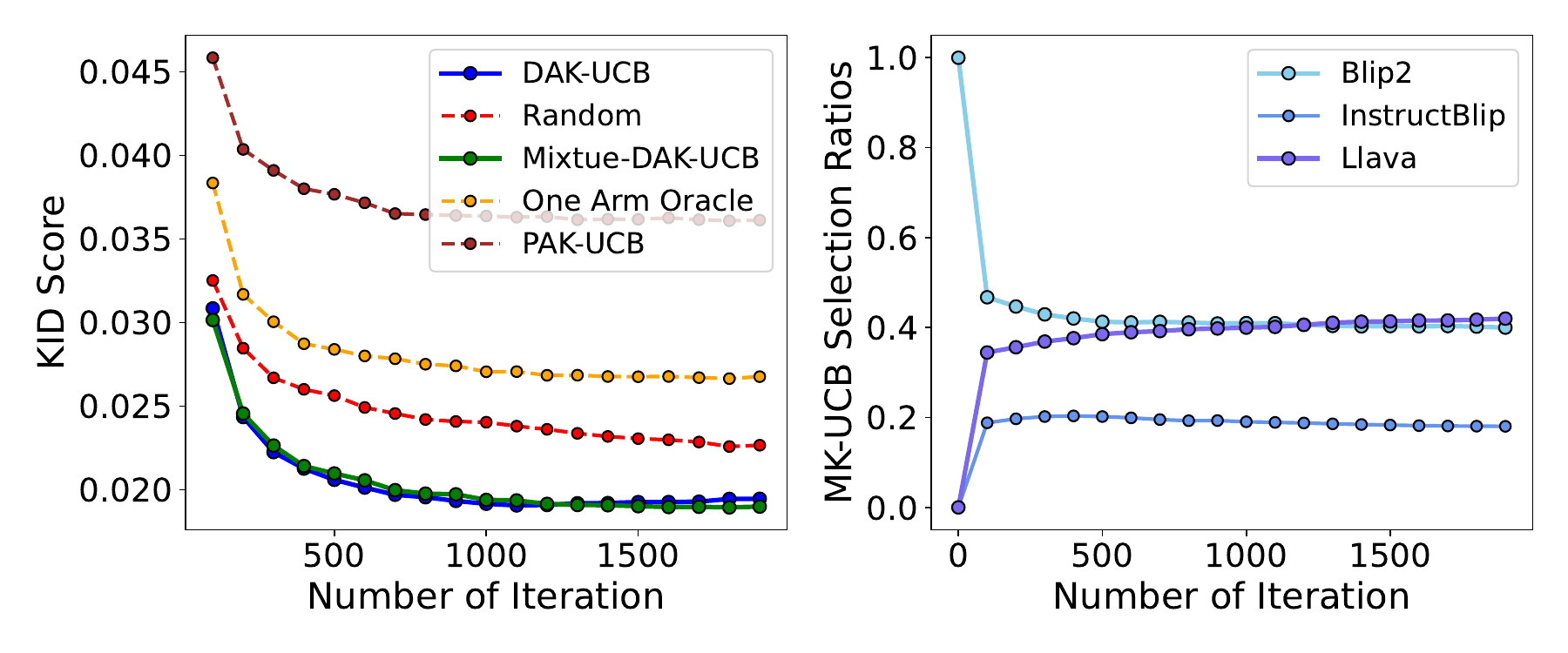} 
    \caption{ Selection Ratios of image captioning models and performance comparison on KID using JKD metric. Results are averaged over 10 independent trials. }
    \label{fig:results6}
\end{figure}

\begin{figure}[H]
    \centering
    \includegraphics[width=1\textwidth]{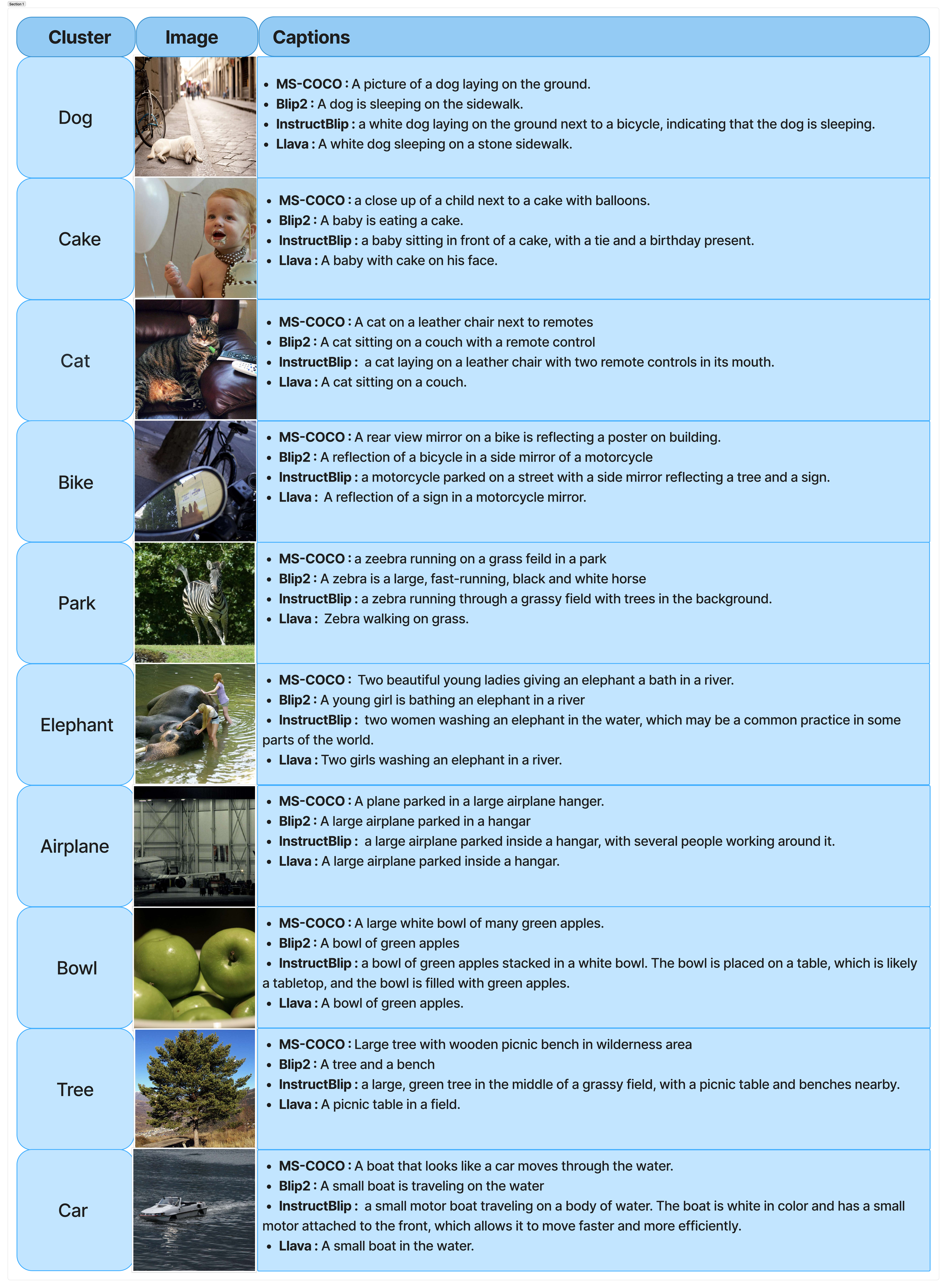} 
    \caption{Dataset visualization for image captioning experiment. }
    \label{fig:results6vis}
\end{figure}

\textbf{Improving Diversity in Image Captioning via I-JRKE:} We repeated the exact same experiment on image captioning, replacing the JKD objective with I-JRKE to observe if our algorithm can enhance diversity. As shown in Figure ~\ref{fig:capdiv}, our algorithm demonstrated superior performance compared to the baselines. While LLaVA worked best in terms of correctness, our algorithm tends to prefer InstructBLIP for diversity.

\begin{figure}[H]
    \centering
    \includegraphics[width=0.9\textwidth]{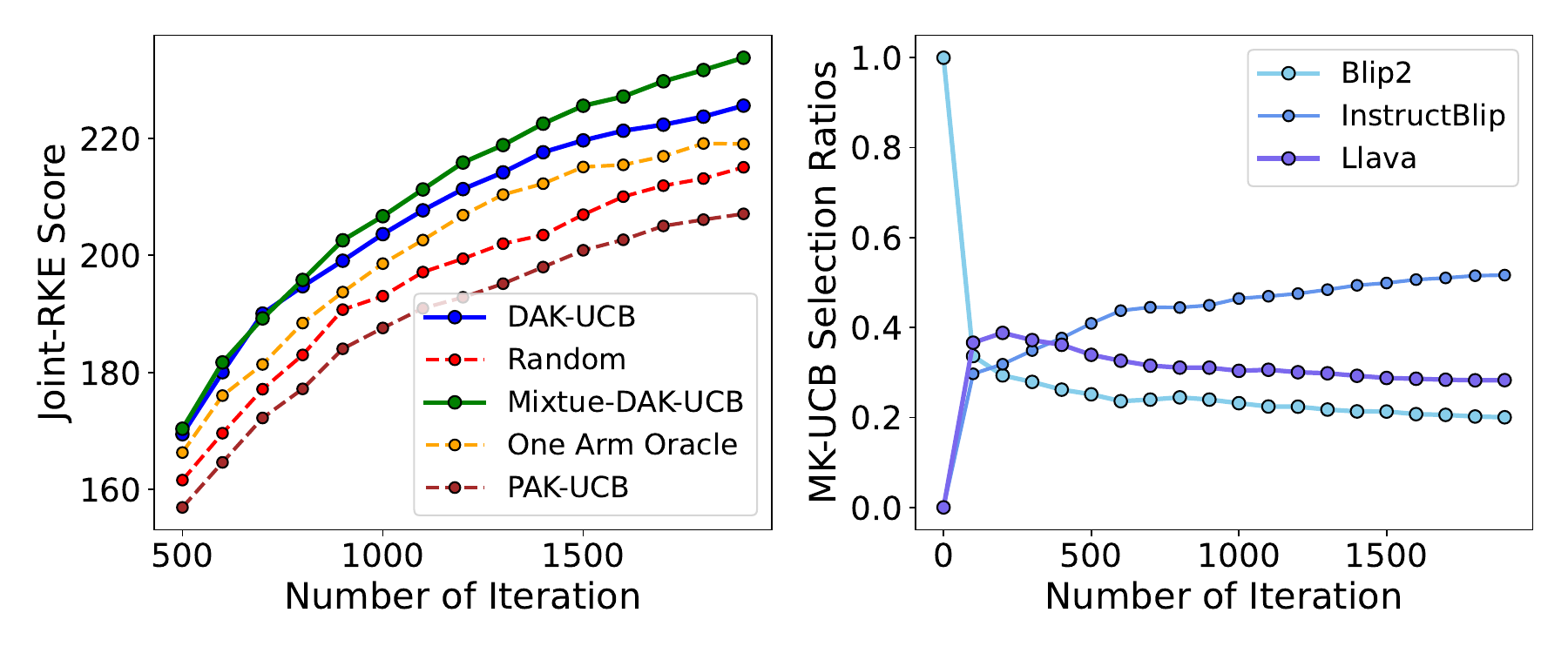} 
    \caption{ Selection Ratios of image captioning models and performance comparison on Joint-RKE using I-JRKE metric.}
    \label{fig:capdiv}
\end{figure}

\subsection{Additional numerical results on applying DAK-UCB to the text-to-image generation task}

\textbf{Conditional Expert Selection via I-JRKE Using Classifier-Free Guidance:} We used the dataset from \citep{rezaei2024be}. The dataset was generated by selecting four categories: dog, river, airplane, and building. For each category, GPT was asked to generate 10 adjectives, 10 activities, and 10 places. By mixing these with the category, 1000 prompts were obtained. Some samples of the dataset can be seen in Figure~\ref{fig:vis2}. We used SDXL to generate images with classifier-free guidance scales of 2 and 30 for each cluster.
We designed the arms as follows: ARM1: Generates images of dogs and rivers with a classifier-free guidance scale of 2.0 (less guided, more diverse) and images of airplanes and buildings with a classifier-free guidance scale of 30.0 (more guided, less diverse). Thus, this arm acts as an expert in the dog and river clusters. ARM2: Does the opposite, making it an expert in the building and airplane clusters. In this experiment, our objective was solely to minimize $\text{I-JRKE}$, and CLIP was not involved in the optimization.  As shown in Figure~\ref{fig:results2} (averaged over 10 trials), the expert for each cluster was successfully detected. However, we observed that in some clusters, the selection ratios were very close. This is acceptable, as we know that a mixture can enhance diversity.
  
\begin{figure}[H]
    \centering
    \includegraphics[width=1\textwidth]{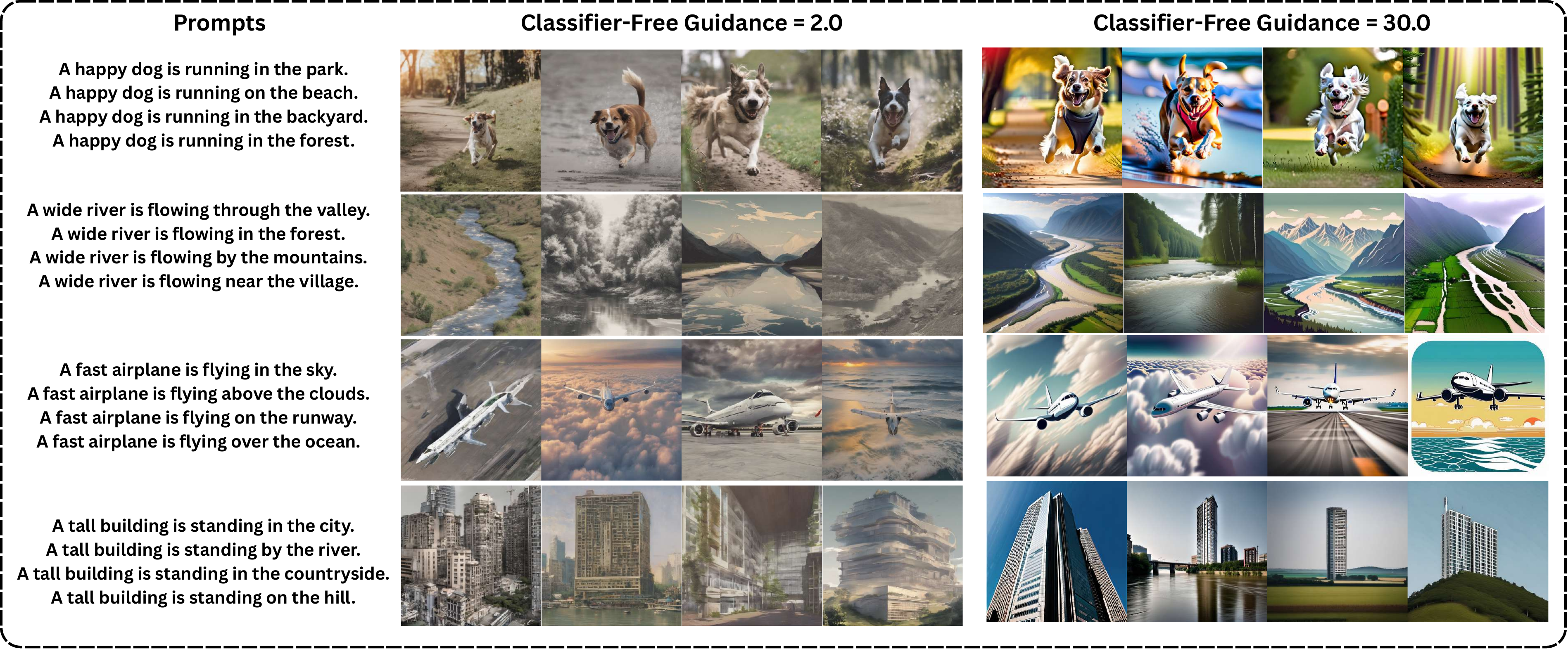} 
    \caption{ Illustrative examples from the GPT-Generated prompt dataset and their SDXL-generated visualizations under varying guidance scales.}
    \label{fig:vis2}
\end{figure}

\begin{figure}[H]
    \centering
    \includegraphics[width=1\textwidth]{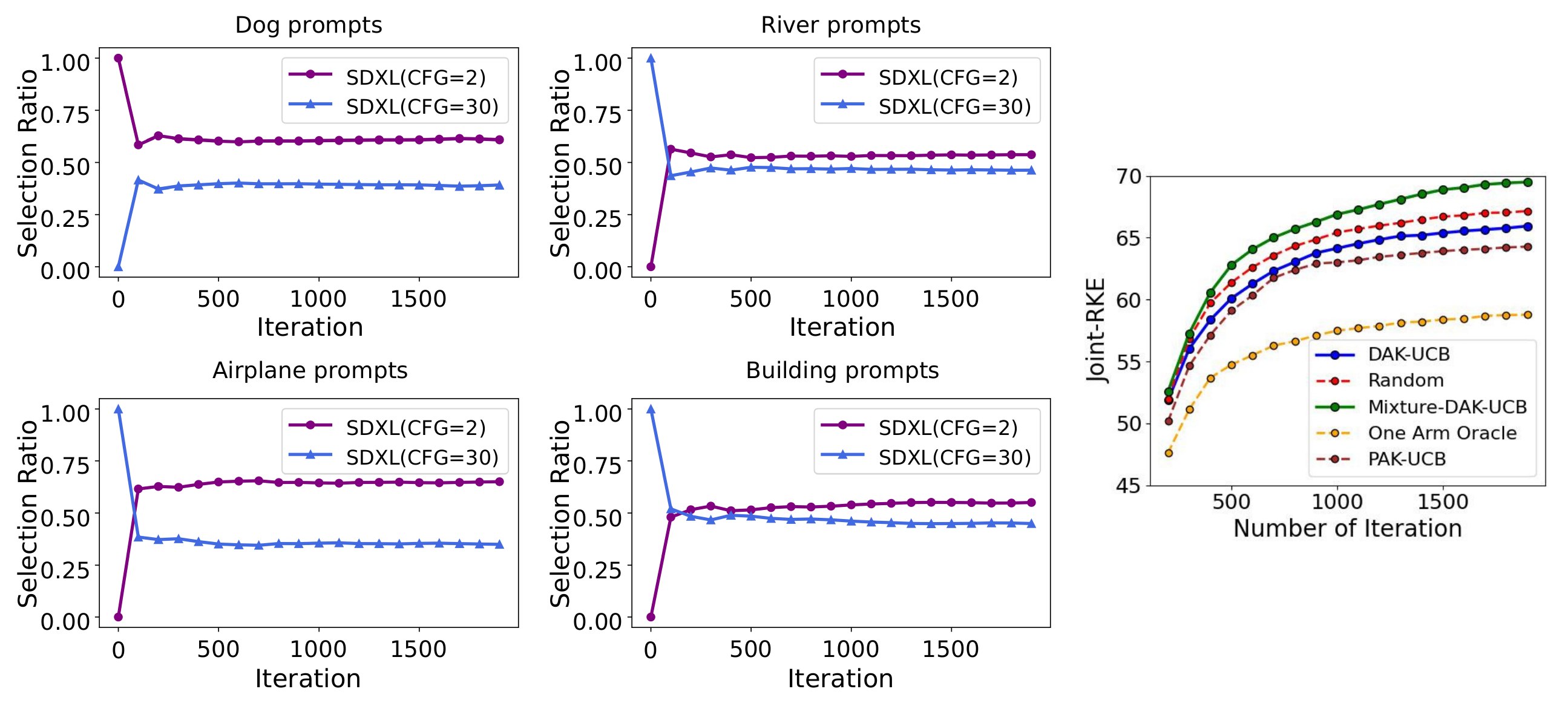} 
    \caption{ Cluster-conditioned expert selection ratios and performance comparison against baseline methods.  }
    \label{fig:results2}
\end{figure}

\textbf{Diversity Identification in Text-to-Image Generative Models: An Offline Study on Varying Diversity in "animal" Generation:} In this experiment, we modified the setup described in Figure~\ref{fig:animal} and considered an offline setting. 
To achieve this, we evaluated the reward with respect to all available offline data and removed the UCB radius term, 
since the entire dataset was accessible and no exploration was required. 
Figure~\ref{tab:offline} presents sample prompts along with the arm-selection preferences of DAK-UCB and Mixture-DAK-UCB.

\begin{table}[H]
\centering
\caption{DAK-UCB and Mixture-DAK-UCB Arm Preferences for Sample Prompts}
\label{tab:offline}
\begin{tabular}{|p{4cm}|p{4cm}|p{6cm}|}
\hline
\textbf{Prompt} & \textbf{DAK-UCB Selected Arm} & \textbf{Mixture-DAK-UCB Preference Vector} \\ \hline
"an animal in the garden" & Arm 3 (unconditioned) &[0.218, 0.063, \textbf{0.719}] \\ \hline
"an animal in the meadow" & Arm 3 (unconditioned) & [0.218, 0.064, \textbf{0.718}] \\ \hline
"an animal near the lake" & Arm 3 (unconditioned) & [0.225, 0.069, \textbf{0.706}] \\ \hline
"an animal in the jungle" & Arm 3 (unconditioned) &[0.230, 0.082, \textbf{0.688}] \\ \hline
"an animal in the desert" & Arm 3 (unconditioned) &[0.226, 0.092, \textbf{0.682}] \\ \hline
\end{tabular}
\end{table}

\textbf{Conditional Expert Selection via I-JRKE Using Synthetic Diversity Control:} 
In this experiment, we utilize four clusters from the MS-COCO dataset: \textit{Bike}, \textit{Car}, \textit{Bowl}, and \textit{Airplane}, each comprising a hundred prompts. The prompts are partitioned into ten groups via K-means clustering applied to their CLIP-embedded vector representations. We designate four specialized experts corresponding to each cluster. When the selected expert matches the revealed prompt's cluster, it generates the appropriate SDXL output for that prompt. For mismatched cases, we employ a diversity-limiting strategy by outputting the image generated for a group representative prompt rather than the actual prompt. Specifically, we assign a representative prompt for each of the ten groups, and for any prompt within a group, the system outputs the SDXL-generated image corresponding to the group's representative. This experiment was conducted over 2000 iterations across 20 independent runs. The resulting expert selection ratios and Joint-RKE scores are presented in Figure~\ref{fig:results},  The observation that PAK-UCB works competitively well is that expert selection for individual prompts—where each image is generated specifically for its prompt rather than for group representatives—improves CLIP score alongside diversity.


\begin{figure}[H]
    \centering
    \raisebox{-0.16\height}{
        \begin{subfigure}[b]{0.6\textwidth}
            \includegraphics[width=\linewidth]{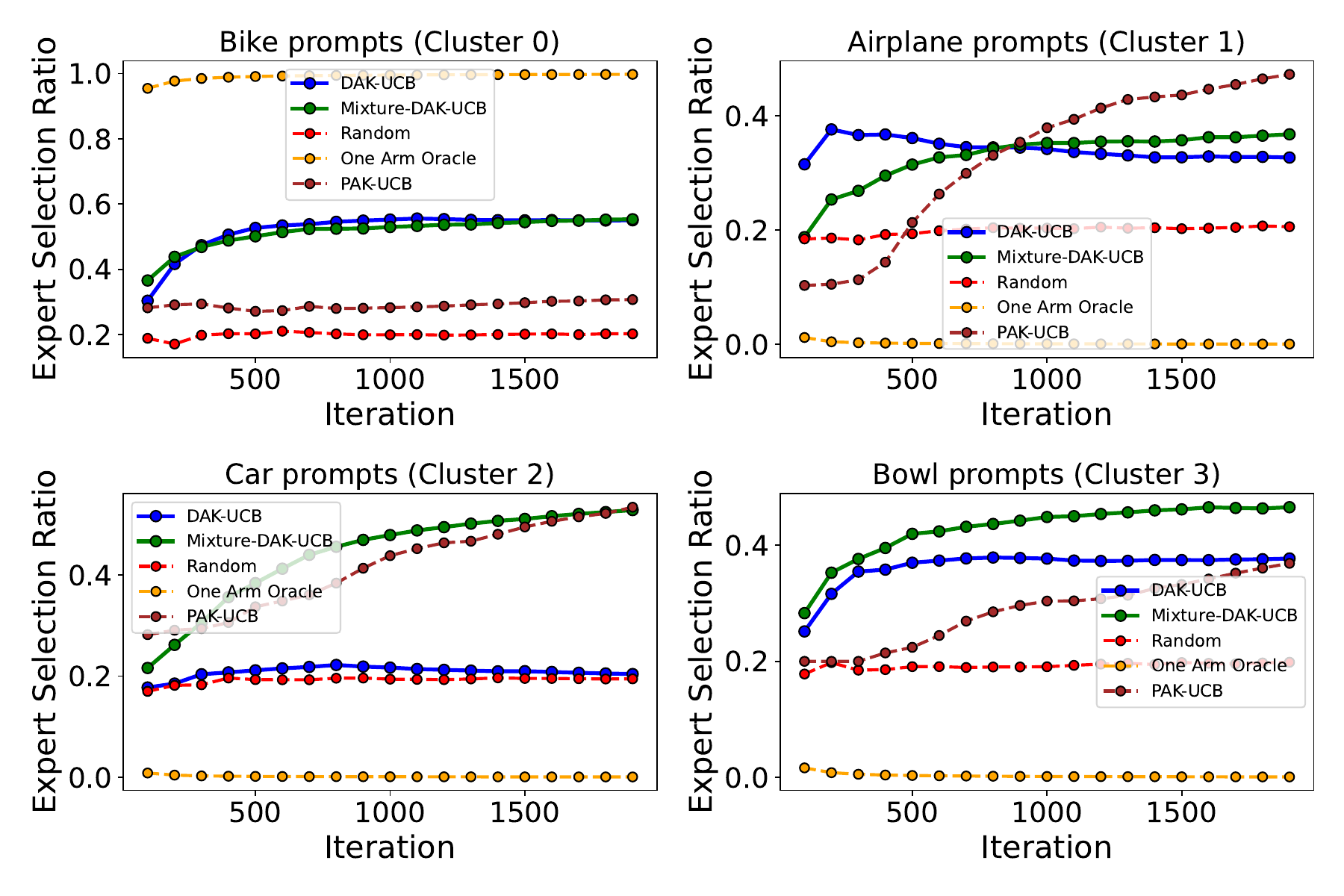}
        \end{subfigure}%
    }
    \begin{subfigure}[b]{0.3\textwidth}
        \includegraphics[width=\linewidth]{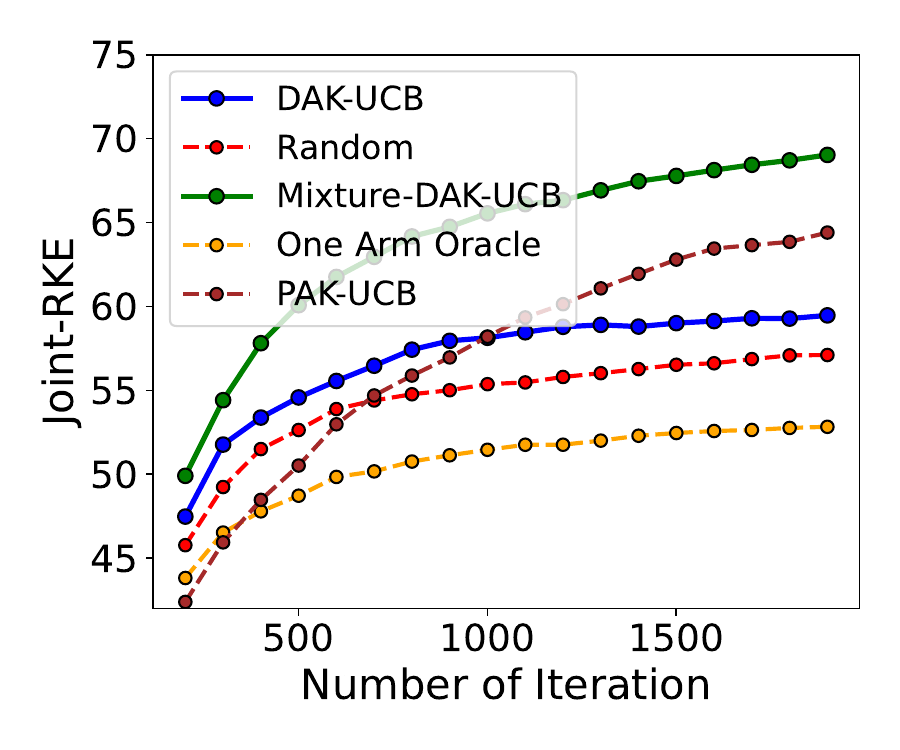}
    \end{subfigure}
    \caption{Expert Selection Ratio and Performance Comparison on Joint-RKE.}
    \label{fig:results}
\end{figure}

\textbf{Performance Comparison by KID via JKD:} In this experiment, we used the same ten clusters of prompts and three generative models as in the first experiment. We employed MS-COCO images as reference and optimized solely on the JKD score defined earlier. The performance was evaluated using KID to measure how close each baseline gets to the reference. The results are shown in Figure~\ref{fig:results3}.We observed that Mixtue-DAK-UCB and DAK-UCB achieve close performance in this setup, which suggests the optimal solution per cluster is not a mixture.

\begin{figure}[H]
    \centering
    \includegraphics[width=0.4\textwidth]{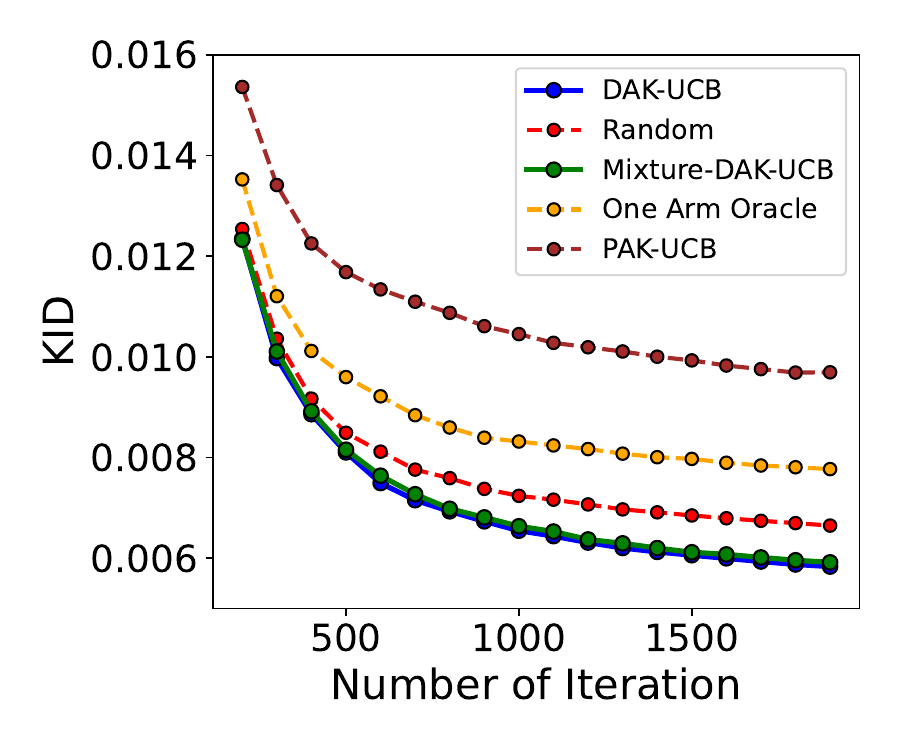} 
    \caption{ Performance comparison on KID for MS-COCO prompt clusters using  Kandinsky, SDXL, and GigaGAN. Results are averaged over 10 trials. }
    \label{fig:results3}
\end{figure}

\subsection{Ablation Studies} 
\revise{
\textbf{Testing the Robustness of Fidelity and Diversity Scores Under Noise:}  
In this experiment, our objective was to evaluate how reliably our fidelity and diversity metrics behave when the input images undergo controlled degradation. Using 1000 samples from the MS-COCO validation set, we progressively increased the level of blur applied to the images and measured both CLIP-Score fidelity and DINOv2-based diversity. The CLIP-Score decreases monotonically with increased corruption, demonstrating its sensitivity to image quality. Likewise, the diversity metric also decreases, indicating that it does not mistakenly interpret noise as meaningful variation. The two plots in Fig.~\ref{fig:two-plots} summarize these trends.
}
\begin{figure}[H]
    \centering
    \begin{minipage}{0.48\linewidth}
        \centering
        \includegraphics[width=\linewidth]{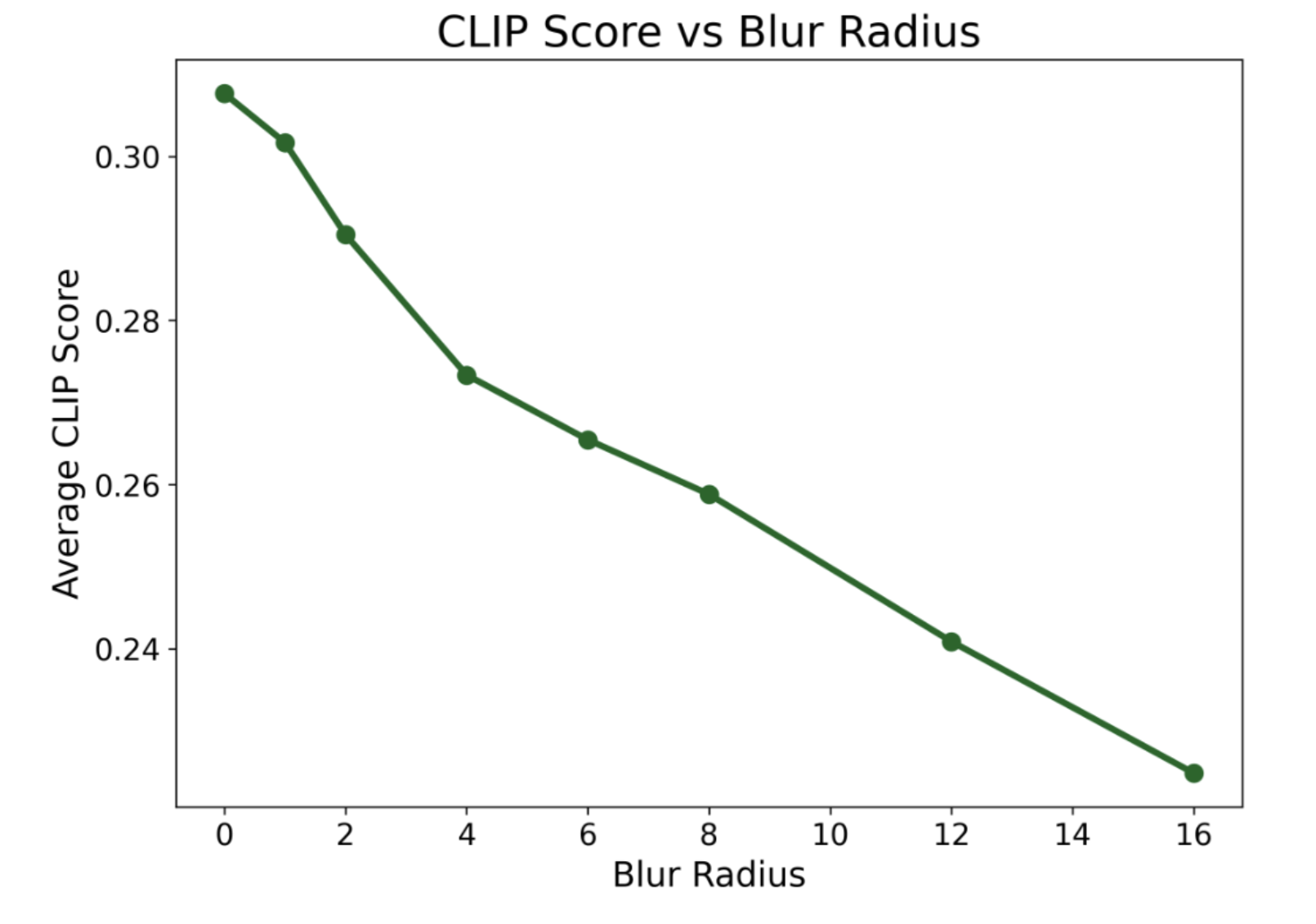}
    \end{minipage}
    \hfill
    \begin{minipage}{0.48\linewidth}
        \centering
        \includegraphics[width=\linewidth]{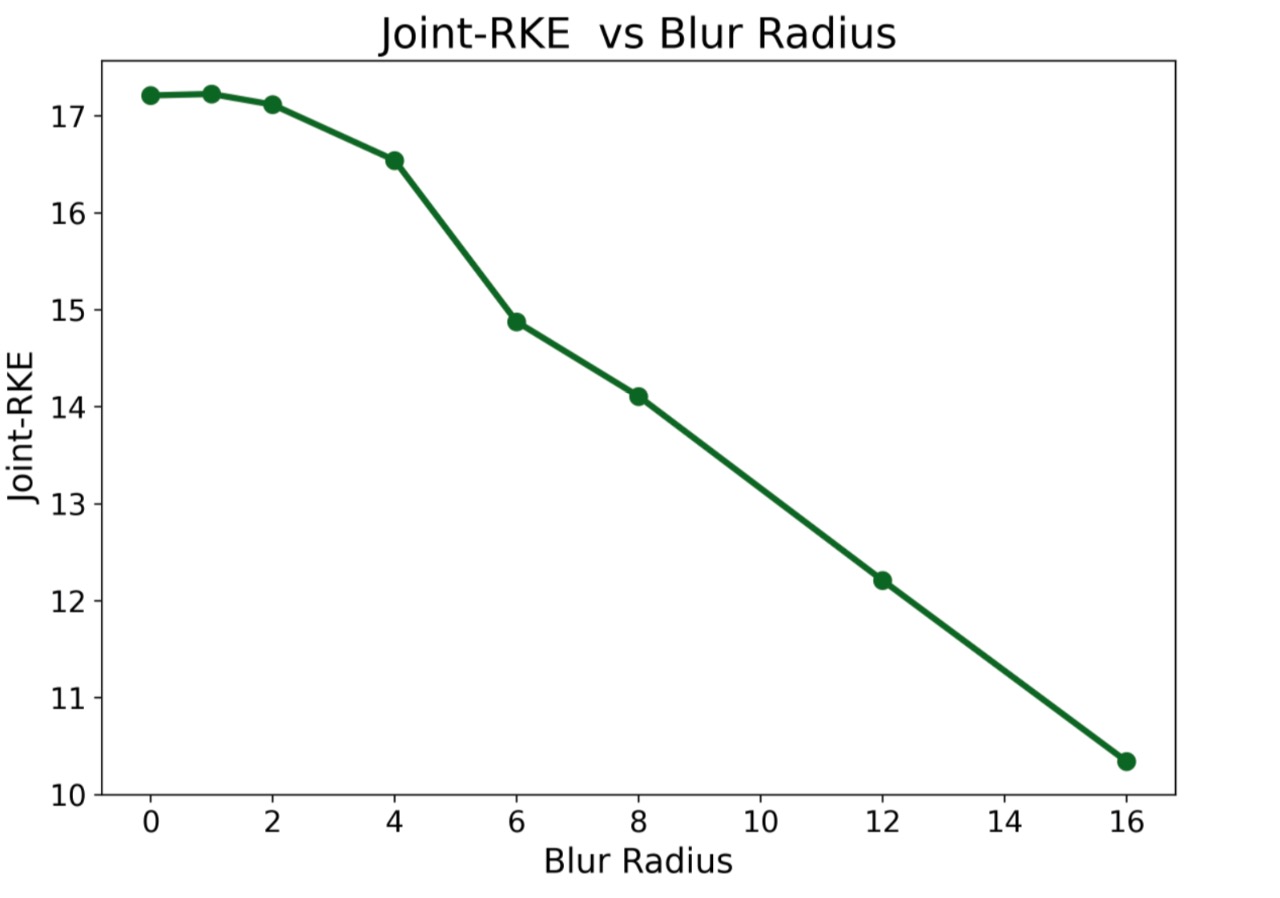}
    \end{minipage}
    \caption{\revise{Left: CLIP-Score vs.\ noise level showing a monotonic fidelity drop. Right: Diversity in DINOv2 embedding vs.\ noise level showing reduced diversity under stronger corruption.}}
    \label{fig:two-plots}
\end{figure}

\revise{
\textbf{Fidelity-Aware Behavior in Asymmetric Degradation:}  
To verify that our algorithm does not blindly prioritize diversity at the cost of fidelity, we revisited the introductory two-arm experiment shown in Fig.~\ref{fig:intro}. This time, however, we introduced a strong blur (radius = 20) exclusively to the diverse arm while keeping the limited-diversity arm intact. As expected, the selection ratio of the degraded arm dropped sharply, demonstrating that our method correctly down-weights samples whose fidelity deteriorates, even if they originate from a high-diversity source. The final selection ratios are reported in Table~\ref{tab:asym-blur}, showing behavior consistent with our fidelity-sensitive design.
}
\begin{table}[H]
    \centering
    \caption{\revise{Final selection ratios when only the diverse arm is blurred (Radius = 20).}}
    \label{tab:asym-blur}
    \begin{tabular}{l c c}
        \toprule
        & \textbf{Diverse Arm (Blurred)} & \textbf{Limited-Diversity Arm} \\
        \midrule
        Selection Ratio & 45.32\% & 54.68\% \\
        \bottomrule
    \end{tabular}
\end{table}
\revise{
\textbf{Efficiency of Random Fourier Feature Approximation:}  
To address the quadratic growth of kernel computations in DAK-UCB, we adopt Random Fourier Features (RFFs) to approximate the RBF kernel, reducing the per-round computational cost to $O(d^2 t)$ for an RFF dimension $d$. Since the proxy embedding produced by RFFs is $d$-dimensional, this approximation offers a scalable alternative while preserving the behavior of the original kernelized method. In the experimental setting of Fig.~\ref{fig:msdivclip}, we verified that RFF-based DAK-UCB closely matches the performance of the exact RBF-kernel version across all metrics. As shown in Table~\ref{tab:rff}, increasing the number of random features improves stability while maintaining nearly identical scores.
}
\begin{table}[H]
    \centering
    \caption{\revise{Performance of RFF-based DAK-UCB compared to RBF-kernel DAK-UCB under the setup of Fig.~\ref{fig:msdivclip}.}}
    \label{tab:rff}
    \begin{tabular}{c c c c c}
        \toprule
        \textbf{\# RFF Features} & \textbf{Final Joint-RKE} & \textbf{Final CLIP} & \textbf{Final KD (×$10^{-3}$)} & \textbf{Elapsed Time (×$10^{3}$)} \\
        \midrule
        32  & 158.4  & 29.80 & 7.20 & 4.947 \\
        64  & 160.5  & 30.40 & 6.60 & 4.994 \\
        128 & 162.3  & 31.00 & 5.80 & 5.006 \\
        256 & 172.65 & 31.59 & 4.78 & 5.013 \\
        512 & 182.79 & 32.17 & 3.60 & 5.019 \\
        \bottomrule
    \end{tabular}
\end{table}
\revise{
\textbf{Adaptation to the Introduction of a New Arm:}
We repeated the experiment shown in Figure~\ref{fig:animal} with a modified setup in order to evaluate the algorithm's ability to adapt when a new generative model is introduced partway through the process. Specifically, we began with two arms and introduced the third (and most diverse) arm at iteration 125. Once the new arm appeared, the DAK-UCB algorithm gradually adjusted its selection behavior. As shown in Figure~\ref{fig:newarm}, the algorithm successfully adapted to the presence of the newly added arm, redistributing selection ratios and eventually converging to the appropriate mixture for this expanded arm set.
}
\begin{figure}[H]
    \centering
    \includegraphics[width=0.4\linewidth]{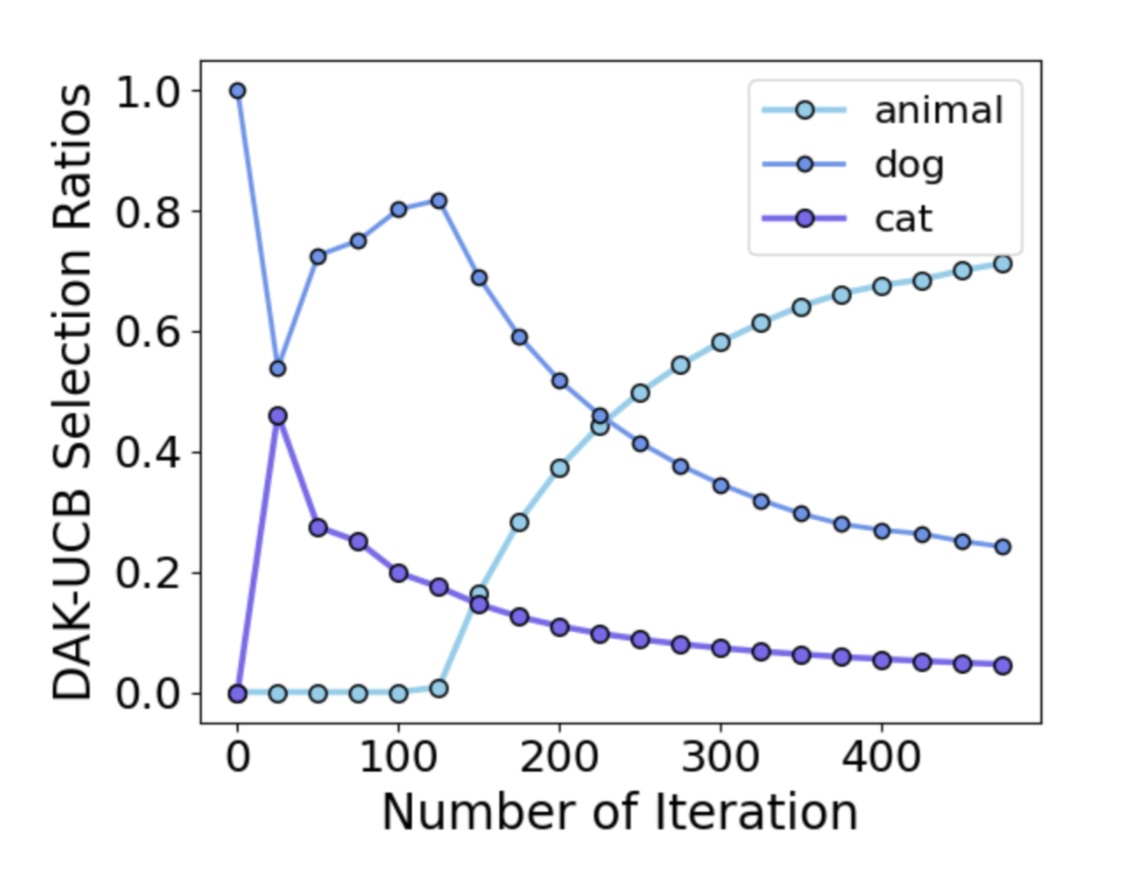}
    \caption{\revise{Adaptation behavior after introducing the third (diverse) arm at iteration 125. The algorithm subsequently converges to updated selection ratios that incorporate the new arm.}}
    \label{fig:newarm}
\end{figure}

\textbf{Sensitivity to Kernel Function and Embedding Choice:} We repeated the initial experiment (results shown in Figure~\ref{fig:msdivclip}) with a slight modification: we used CLIP embeddings for the images and employed cosine similarity as the kernel function. Figure~\ref{fig:kernel} presents a comparison to the baselines in terms of Joint-RKE and CLIP-score, while Table~\ref{tab:final_vendi} reports the final cond-vendi scores.

\begin{figure}[H]
    \centering
    \includegraphics[width=1\linewidth]{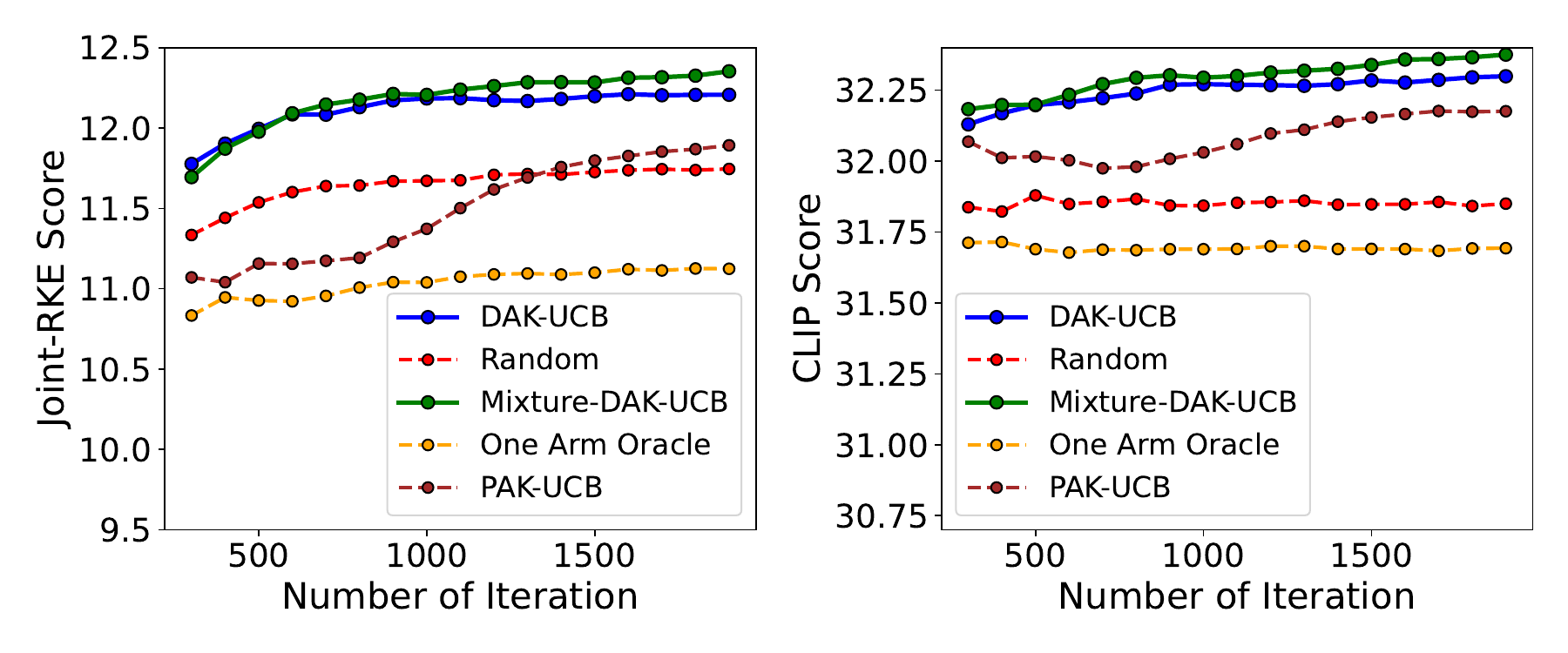}
    \caption{The average CLIP Score and Joint-RKE Diversity score over 10 trials using cosine similarity as kernel function.}
    \label{fig:kernel}
\end{figure}

\begin{table}[H]
\centering
\caption{ Comparison of Conditional Vendi scores for algorithms under kernel function sensitivity test.}
\label{tab:final_vendi}
\begin{tabular}{@{}lc@{}}
\toprule
\textbf{Algorithm} & \textbf{Conditional Vendi Score} \\
\midrule
DAK-UCB &  9.92\\
Mixture-DAK-UCB & 12.53 \\
PAK-UCB & 8.75 \\
Random &  11.82\\
One-Arm Oracle & 7.99 \\
\bottomrule
\end{tabular}
\end{table}

\textbf{Sensitivity to the Diversity Term Scalar Hyperparameter:} In this experiment, we used the models from \citep{pixartxl2024} and \citep{runwayml2023sdv1-5} to generate images of an athlete. We generated a hundred images from each model. As suggested by the CLIP, RKE, and Vendi scores in Figure~\ref{fig:scores}, PixArt generated images with higher fidelity, while Stable Diffusion produced more diverse images. To determine if our algorithm can detect this trade-off and to monitor the selection ratios and final metrics for diversity and fidelity under different $\lambda$ values (the diversity term multiplier), we report the results in Tables ~\ref{tab:selection_ratio} and ~\ref{tab:metrics}.

\begin{table}[H]
\centering
\caption{Model selection ratio for different $\lambda$ hyperparameter values in Mixture-DAK-UCB algorithm.}
\label{tab:selection_ratio}
\begin{tabular}{lcccc}
\hline
 & $\lambda = 0.0$ & $\lambda = 0.1$ & $\lambda = 1.0$ & $\lambda = 10.0$ \\
\hline
PixArt    & 65\%         & 43\%         & 19\%         & 12\%          \\
Stable Diffusion & 35\%          & 57\%         & 81\%         & 88\%         \\
\hline
\end{tabular}
\end{table}

\begin{table}[H]
\centering
\caption{Performance metrics for different $\lambda$ hyperparameter values in Mixture-DAK-UCB algorithm.}
\label{tab:metrics}
\begin{tabular}{lcccc}
\hline
Metric & $\lambda = 0.0$ & $\lambda = 0.1$ & $\lambda = 1.0$ & $\lambda = 10.0$ \\
\hline
Vendi Score & 15.25 & 22.41 & 33.39 &  35.04\\
RKE Score & 4.03 & 6.07 & 9.14 &  9.39\\
CLIP Score & 26.8 & 26.25 & 25.89 &  25.55\\
\hline
\end{tabular}
\end{table}

\begin{figure}[H]
    \centering
    \includegraphics[width=1\linewidth]{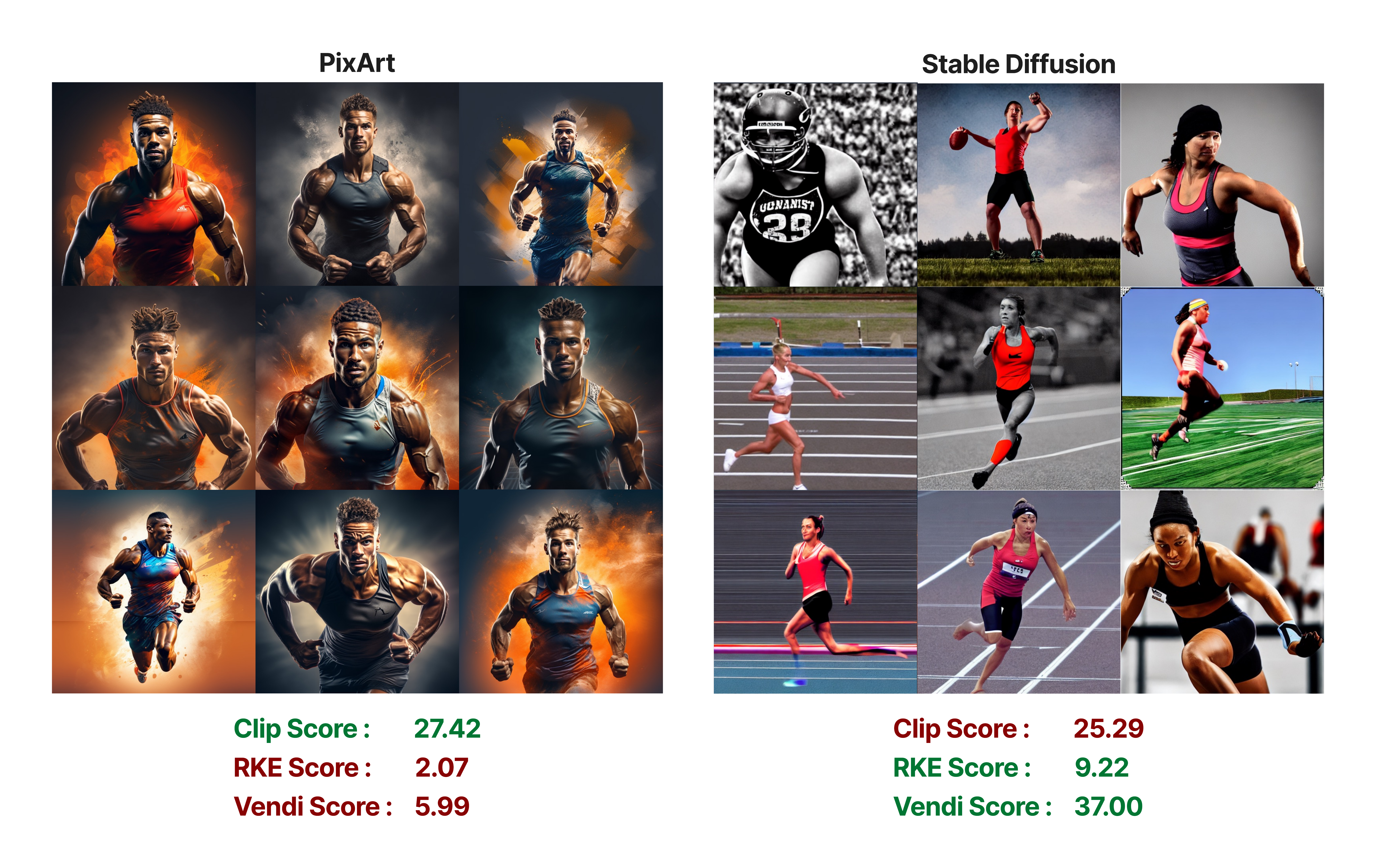}
    \caption{Clip, Vendi, and RKE scores for 100 "image of an athlete" generations, comparing PixArt and Stable Diffusion.}
    \label{fig:scores}
\end{figure}

As we increased $\lambda$, the algorithm preferred Stable Diffusion to maximize diversity metrics, which resulted in a corresponding decrease in fidelity metrics. This trade-off is clearly observable in our results.
  We also observed a gender bias: PixArt tended to generate male athletes, while Stable Diffusion tended to generate female athletes. Our algorithm accounted for gender fairness while enhancing diversity.

\newpage 

\textbf{Visualization of Prompt Correlations:}
\begin{figure}[h]
    \centering
    \begin{subfigure}{0.45\textwidth}
        \centering
        \includegraphics[width=\linewidth]{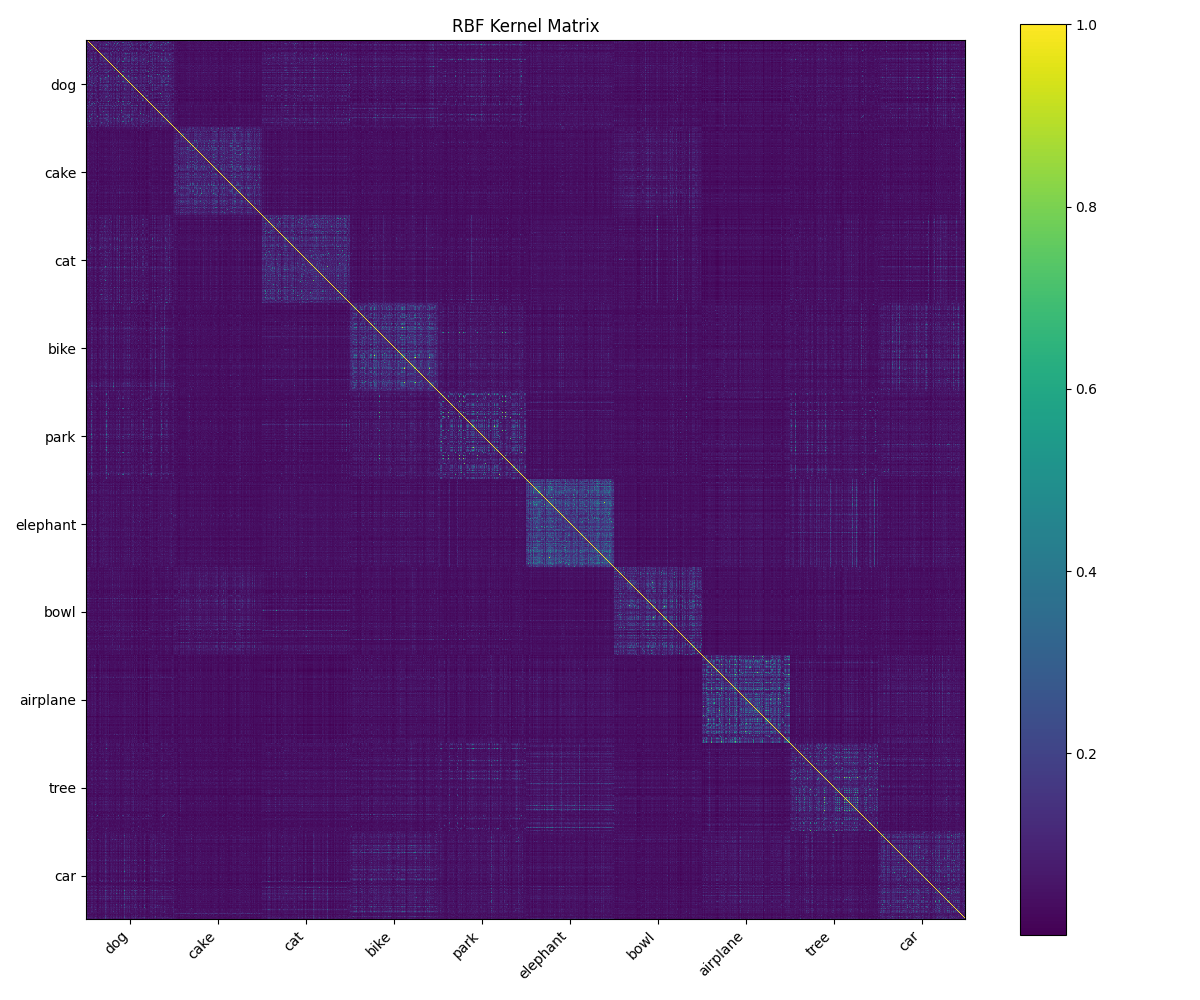}
        \caption{Kernel matrix for Experiment~\ref{fig:msdivclip}}
    \end{subfigure}
    \hfill
    \begin{subfigure}{0.45\textwidth}
        \centering
        \includegraphics[width=\linewidth]{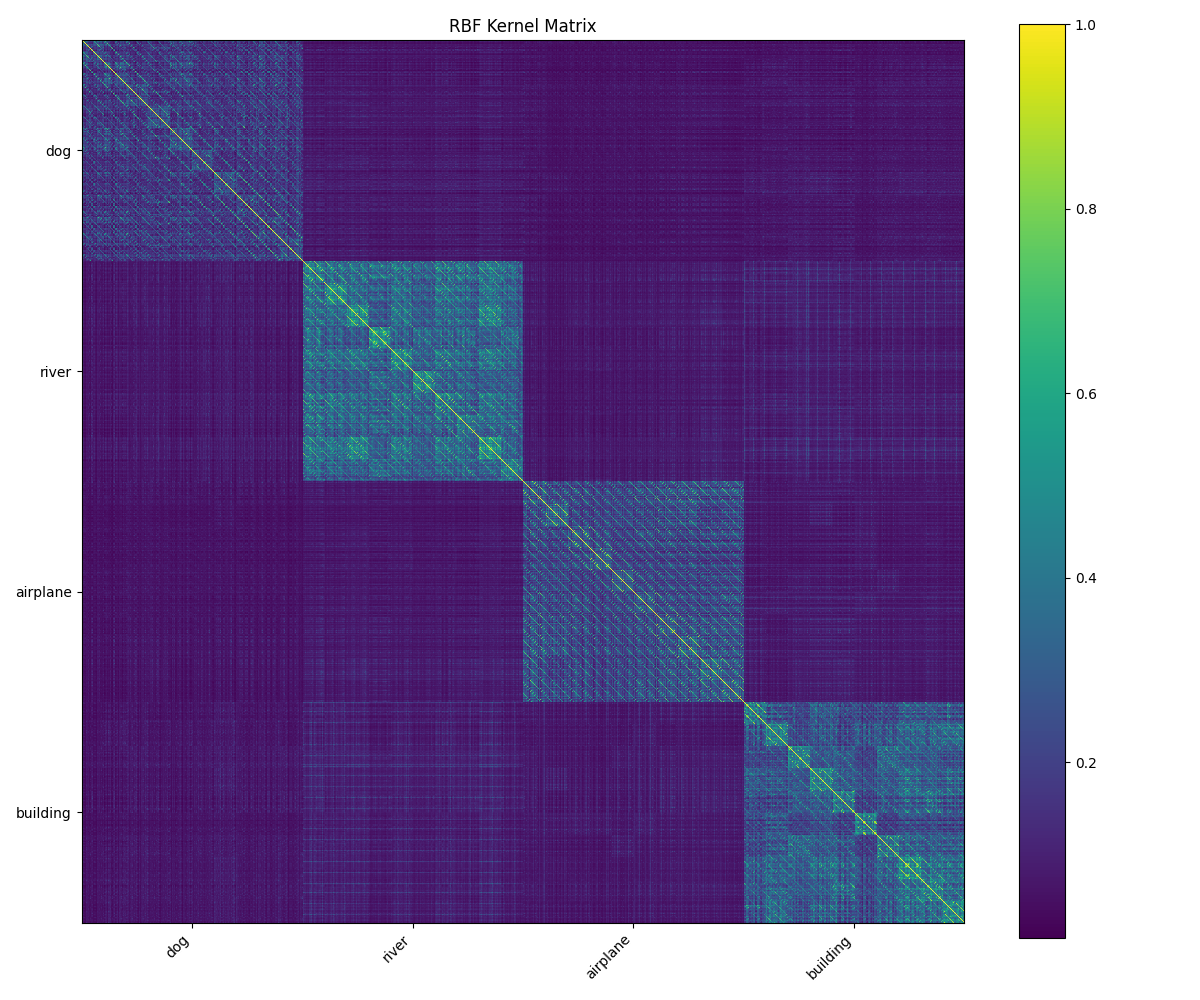}
        \caption{Kernel matrix for Experiment~\ref{fig:vis2} }
    \end{subfigure}
    
    \caption{Correlation among prompts in datasets used in Experiments.}
\end{figure}

\section{Statement on the Use of Large Language Models (LLMs)}
LLMs were used solely for proofreading and polishing the language of this manuscript, as well as for generating the prompts of the numerical experiments for the prompt-aware model selection task. All technical content was developed entirely by the authors.
\end{appendices}

\end{document}